\def\thisismainpaper{0}   %Uncomment this if this is the technical report
\def\rev#1{\textcolor{black}{#1}}
\theoremstyle{definition}
\newtheorem{theorem}{Theorem}[section]
\newtheorem{lemma}[theorem]{Lemma}
\newtheorem{corollary}[theorem]{Corollary}
\newtheorem{definition}{Definition}
\newcommand{\E}{{\rm I\kern-.3em E}}
\def\xbf{\bm{x}}
\def\wbf{\bm{w}}
\def\xibf{\bm{\xi}}
\def\ubf{\bm{u}}
\def\mbbE{\mathbb{E}}
\DeclareMathOperator{\argmin}{\arg\min}
\def\figheight{1.15in}
\begin{document}

%%
%% The "title" command has an optional parameter,
%% allowing the author to define a "short title" to be used in page headers.
\title{Active Learning for WBAN-based Health Monitoring}
%%
%% The "author" command and its associated commands are used to define
%% the authors and their affiliations.
%% Of note is the shared affiliation of the first two authors, and the
%% "authornote" and "authornotemark" commands
%% used to denote shared contribution to the research.
\author{Cho-Chun Chiu, Tuan Nguyen, Ting He}
\if\thisismainpaper0
\authornote{This is the extended version with proofs and supporting materials.}
\fi
\affiliation{%
  \institution{Pennsylvania State University}
  \streetaddress{}
  \city{University Park}
  \state{PA}
  \country{USA}
  \postcode{}
}
\email{{cuc496,tmn5319,tzh58}@psu.edu}

\author{Shiqiang Wang}
\affiliation{%
  \institution{IBM T. J. Watson Research Center}
  \streetaddress{}
  \city{ Yorktown Heights}
\state{NY}  
\country{USA}}
\email{wangshiq@us.ibm.com}

\author{Beom-Su Kim, Ki-Il Kim}
\authornote{Beom-Su Kim and Ki-Il Kim were supported by Basic Science Research Program through the National Research Foundation of Korea (NRF) funded by the Ministry of Education (RS-2023-00237300) and Ministry of Science and ICT (RS-2022-00165225).}
\affiliation{%
  \institution{Chungnum National University}
  \city{Daejeon}
  \country{Korea}
}
\email{{bumsou10,kikim}@cnu.ac.kr}

%%
%% By default, the full list of authors will be used in the page
%% headers. Often, this list is too long, and will overlap
%% other information printed in the page headers. This command allows
%% the author to define a more concise list
%% of authors' names for this purpose.
\renewcommand{\shortauthors}{Chiu et al.}

%%
%% The abstract is a short summary of the work to be presented in the
%% article.
\begin{abstract}
We consider a novel active learning problem motivated by the need of learning machine learning models for health monitoring in wireless body area network (WBAN). Due to the limited resources at body sensors, collecting each unlabeled sample in WBAN incurs a nontrivial cost. Moreover, training health monitoring models typically requires labels indicating the patient's health state that need to be generated by healthcare professionals, which cannot be obtained at the same pace as data collection. These challenges make our problem fundamentally different from classical active learning, where unlabeled samples are free and labels can be queried in real time. To handle these challenges, we propose a two-phased active learning method, consisting of an online phase where a coreset construction algorithm is proposed to select a subset of unlabeled samples based on their noisy predictions, and an offline phase where the selected samples are labeled to train the target model. The samples selected by our algorithm are proved to yield a guaranteed error in approximating the full dataset in evaluating the loss function. Our evaluation based on real health monitoring data and our own experimentation demonstrates that our solution can drastically save the data curation cost without sacrificing the quality of the target model. 
\looseness=-1
\end{abstract}

%%
%% The code below is generated by the tool at http://dl.acm.org/ccs.cfm.
%% Please copy and paste the code instead of the example below.
%%
\begin{CCSXML}
<ccs2012>
   <concept>
       <concept_id>10010147.10010257.10010282.10011304</concept_id>
       <concept_desc>Computing methodologies~Active learning settings</concept_desc>
       <concept_significance>500</concept_significance>
       </concept>
   <concept>
       <concept_id>10003033.10003106.10003119.10011662</concept_id>
       <concept_desc>Networks~Wireless personal area networks</concept_desc>
       <concept_significance>300</concept_significance>
       </concept>
 </ccs2012>
\end{CCSXML}

\ccsdesc[500]{Computing methodologies~Active learning settings}
\ccsdesc[300]{Networks~Wireless personal area networks}

%%
%% Keywords. The author(s) should pick words that accurately describe
%% the work being presented. Separate the keywords with commas.
\keywords{Active learning, WBAN, coreset. }
%% A "teaser" image appears between the author and affiliation
%% information and the body of the document, and typically spans the
%% page.
%\begin{teaserfigure}
%  \includegraphics[width=\textwidth]{sampleteaser}
%  \caption{Seattle Mariners at Spring Training, 2010.}
%  \Description{Enjoying the baseball game from the third-base
%  seats. Ichiro Suzuki preparing to bat.}
%  \label{fig:teaser}
%end{teaserfigure}

%\received{20 February 2007}
%\received[revised]{12 March 2009}
%\received[accepted]{5 June 2009}

%%
%% This command processes the author and affiliation and title
%% information and builds the first part of the formatted document.
\settopmatter{printfolios=true} % show page number (for acmart)
\maketitle

\section{Introduction}\label{sec:Introduction}

By actively selecting which samples to use in training, \emph{active learning} can significantly improve the sample efficiency of supervised learning, which brings tremendous benefits in scenarios where acquiring labeled training data is expensive. In this work, we study active learning under  novel challenges motivated by its application in health monitoring based on \emph{wireless body area network (WBAN)}. \looseness=-1 %, where the sample selection has to be performed remotely without real-time access to the labels. 

\begin{figure}[!t]
   \centerline{\includegraphics[width=0.8\linewidth]{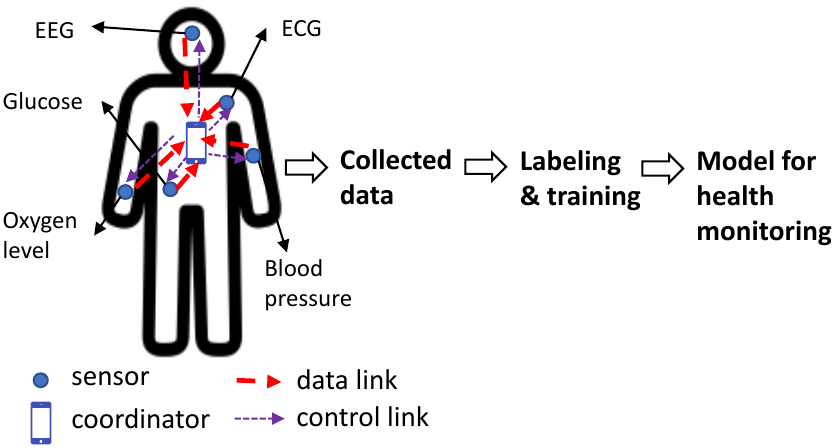}}
   \vspace{-1em}
    \caption{Workflow for active learning in WBAN. }
    \label{fig:WBAN}
    \vspace{-.5em}
\end{figure}

As illustrated in Fig.~\ref{fig:WBAN}, a WBAN is a personal sensor network comprised of a \emph{coordinator} (e.g., smart phone) and a set of \emph{on-body/in-body sensors} measuring various physiological signals, connected by wireless links following a certain protocol (e.g., IEEE 802.15.6). 
WBAN-based health monitoring has attracted notable interests as a promising technology to enable personalized healthcare~\cite{Sharma11BMEI}, but faces critical limitations due to the limited resources at the sensors, particularly the limited power supply. While there have been efforts on improving the energy efficiency of WBAN~\cite{Kim22TMC}, %~\cite{Sun21SJ}, 
their focus has been on low-level performance measures such as communication delays, but the actual performance of WBAN-based applications has been largely ignored. In this work, we address this gap from the perspective of active learning. 
Specifically, we aim at improving the efficiency of WBAN in providing the data for training/refining a target machine learning model of interest. 
The target model is typically a classifier used to monitor the patient's health state from physiological metrics collected by body sensors such as heart rate, blood oxygen level, etc. 
This problem has several fundamental differences from classical active learning problems: (i) instead of real-time labeling of selected samples, the labeling process for health monitoring usually requires human intervention (e.g., annotations by healthcare professionals) that cannot be done at the pace of data collection; (ii) instead of having free access to all the unlabeled samples, WBAN incurs a nontrivial cost (e.g., sensor energy consumption) in collecting each unlabeled sample. Moreover, the data sources in WBAN (i.e., body sensors) are typically proprietary devices with little support for programming, 
%that cannot be programmed by application developers, 
which restricts the solution space to actions that can be performed by the coordinator.  \looseness=-1

In this work, we address these challenges by developing a two-phased active learning method, the core of which is an algorithm deployed on the coordinator that can selectively collect unlabeled samples from remote data sources based on their noisy predictions without querying the labels, while having guaranteed training performance. Although motivated by the application scenario of WBAN, our solution is applicable in any active learning scenario where its assumed input is available.

\subsection{Related Work}\label{subsec:Related Work}

\textbf{WBAN.} WBAN is designed to monitor physiological signals of the target user through on-body/in-body sensors that report to a coordinator with data processing capabilities \cite{Sharma11BMEI}. Due to the limited capabilities of body sensors, the function of WBAN focuses on enabling sensors to report their measurements to the coordinator, after which the rest of the processing/reporting will be handled by the coordinator. The industry standard for communications within WBAN is IEEE 802.15.6~\cite{WBANstandard}, which details a number of specifications and performance requirements. Achieving all these requirements within the hardware limitations raises many challenging research questions, which have attracted extensive studies as surveyed in \cite{Kim22AHN}. Many of these studies focused on improving the tradeoff between energy efficiency and low-level performance measures, such as
\if\thisismainpaper1
\cite{Kim22TMC}
\else
\cite{Sun21SJ,Kim22TMC} 
\fi
and references therein, but the actual performance of applications based on the data collected by WBAN has been largely ignored. \emph{In this work, we address this gap for the application of active learning for health monitoring.}\looseness=-1

\textbf{Active learning.} Active learning aims at training the target model to sufficient accuracy with the minimum labeling cost by only querying the labels for ``informative samples''~\cite{Ren21Survey}. There has been a body of works on active learning, briefly summarized below.  %\ting{Daniel: fill in classical approaches with references} 
%Active learning aims to train machine learning models only on informative data instead of the entire dataset to reduce the cost of labeling. 
One approach to active learning is uncertainty sampling \cite{David94}, which  measures the informativeness of a sample by its uncertainty. %, where a sample with higher uncertainty is considered to be more informative. 
This approach typically draws unlabeled samples at random and only labels those that fall within an uncertainty region of the currently trained model, where the uncertainty is usually measured by entropy \cite{Shannon01} or confidence \cite{Culotta05}. 
Another approach to active learning exploits disagreement between models. The Query by Committee (QBC) strategy \cite{Seung92} involves training multiple models on the available labeled data and selecting informative samples as those producing high disagreement among the models. 
In addition to uncertainty and disagreement, informativeness can also be measured by the expected gradient length (EGL) \cite{Settles08}. EGL considers the sample resulting in a training gradient of the largest magnitude as the most informative, but since the label is unknown, it selects the sample with the largest expected gradient magnitude. 
All the above approaches tend to query outliers. However, truly informative samples should be not only sufficiently distinct but also representative. Accordingly, the information density framework \cite{Settles08EMNLP} include a density term with controllable importance. 
Recent studies on active learning focused on the support of deep learning. For instance, \cite{ChristopherACL22} investigated the effectiveness of uncertainty sampling in support of transformer-based models. 
\if\thisismainpaper0
On the theoretical front, \cite{Wang21NeurIPS} proposed an algorithm for active learning of over-parameterized deep neural networks, which provided performance guarantee for the trained model based on the neural tangent kernel (NTK) approximation.
\fi
These approaches, however, are not applicable in our setting, as they required access to all unlabeled samples  and real-time query of the labels. 
\emph{In this regard, we address a novel active learning problem where one has to select unlabeled samples before knowing their (exact) values or being able to start the training of the target model.}

\textbf{Coreset.} Coreset is a small weighted dataset in the same space as the original dataset that approximates the original dataset in terms of a given loss function~\cite{Munteanu18KI}. Initial applications of coreset focused on accelerating the computation for selected problems in unsupervised learning (e.g., clustering~\cite{Badoiu02STOC}) or simple supervised learning (e.g., support vector machine \cite{%Tsang05JMLR,
Har-Peled07IJCAI}). Later, due to its capability of reducing the number of samples, coreset started to be applied to communication reduction in distributed learning~\cite{Lu20JSAC}, and labeling cost reduction in active learning~\cite{Sener18ICLR}. 
Our work is closest to \cite{Sener18ICLR} in that we also convert our active learning problem into a coreset construction problem on an unlabeled dataset. However, the solution in \cite{Sener18ICLR} is not suitable for active learning in WBAN because: (i) it requires perfect knowledge of all the unlabeled samples; (ii) it requires the samples to be i.i.d. in time, which is often violated in WBAN where samples exhibit strong temporal correlation and heterogeneity; (iii) it requires the target model to achieve zero loss on the training samples from the coreset, which will force the target model to overfit the training data and cause poor generalization error. We will address all these limitations by %.  by developing a predictive coreset construction algorithm based on a new theoretical foundation}. 
\rev{developing a new theoretical foundation that can handle non-i.i.d. samples and non-zero training loss on the coreset, as well as a predictive coreset construction algorithm designed to adjust for prediction errors.}
\looseness=-1

\subsection{Summary of Contributions}

Motivated by personalized healthcare via WBAN, we study an active learning problem of selectively collecting data from remote data sources to train a target model based on noisy predictions of unlabeled samples, with the following contributions:\looseness=-1

1) By deriving a generalized error bound on coreset-based model training, we convert the active learning problem to a coreset construction problem, which allows us to select samples without querying their labels\footnote{The selected samples still need to be labeled before starting training the target model, but no label is required during the sampling process.}. Our generalized bound reveals a novel tradeoff between the coverage radius and the number of samples covered by each point in the coreset.  

2) Motivated by the error bound, we develop a sampling algorithm that performs predictive coreset construction based on given noisy predictions of unlabeled samples, and analyze its performance in terms of the dominant terms in the error bound as a function of design parameters and prediction error. Only samples selected into the coreset are collected, after which we have a classical pool-based active learning problem which can be solved by any existing solution. \looseness=-1

3) We evaluate our solution based on both a public dataset and our own prototype implementation. The results show that: (i) it is possible to meaningfully predict body sensor measurements, (ii) based on such noisy predictions, our algorithm can produce a target model of almost the same quality as the model trained on the full data, while reducing the data collection and labeling cost by $>90\%$, and (iii) the results are consistent for  different users. \looseness=0

\textbf{Roadmap.} Section~\ref{sec:Background and Formulation} formulates our problem, Section~\ref{sec:Theoretical Foundation} provides a theoretical foundation, Section~\ref{sec:Algorithm Design} presents our algorithm design and analysis, Section~\ref{sec:Performance Evaluation} evaluates our solution, and Section~\ref{sec:Conclusion} concludes the paper. 
\if\thisismainpaper1
\textbf{All the proofs are given in \cite{Chiu24:report}.}
\else
\textbf{Proofs and supporting materials are given in Appendix~\ref{appendix:A}.} 
\fi

%Sections~\ref{sec:Background and Formulation} and \ref{sec:Optimization Framework} provide the background and the description of our design framework.  Section~\ref{sec:Communication-efficient Design} presents our solution and performance analysis. Section~\ref{sec:Performance Evaluation} evaluates our solution in comparison with benchmarks. Section~\ref{sec:Conclusion} concludes the paper. 
%\if\thisismainpaper1
%All the proofs can be found in Appendix-A of \cite{Chiu22:report}.
%\else
%All the proofs can be found in Appendix-A. 
%\fi

%-------------------------------------------------------------------------------

\section{Background and Formulation}\label{sec:Background and Formulation}

\subsection{Target Model}

We assume that the health monitoring application is interested in learning a classification model, with a $d$-dimensional input space $\mathcal{X}\subseteq \mathbb{R}^d$ and a finite output space $\mathcal{Y} := \{1,\ldots,C\}$. 
Each sample $\xbf\in \mathcal{X}$ represents the concatenation of all sensor measurements in a sampling epoch, and each label $y\in \mathcal{Y}$ represents a health state of the user. 
Let $\eta(\xbf):=(\eta_c(\xbf))_{c\in \mathcal{Y}}$ denote the \emph{unknown} regression function specifying the conditional distribution of the label $y$ of a given sample $\xbf$, where $\eta_c(\xbf):=\Pr\{y=c|\xbf\}$. The goal is to find the best approximation of $\eta(\xbf)$ within a given set of hypotheses $\{h(\xbf; \wbf)\}_{\wbf\in \mathcal{W}}$ such that the expected error over all possible values of $\xbf$ according to a given loss function $l(\xbf,y;\wbf)$ is minimized, where the parameter $\wbf$ of the selected hypothesis is the learned model parameter (e.g., the vector of link weights in a neural network). \looseness=-1

\emph{Remark:} In practice, there is often an initial model trained on public data, and the problem is about fine-tuning this model on the personal data of the current user collected through WBAN.

\subsection{Background on WBAN}

According to the IEEE 802.15.6 standard, a WBAN consists of a coordinator and a set of on-body/in-body sensors, interconnected through a star topology as illustrated in Fig.~\ref{fig:WBAN}. We assume that the coordinator adopts the beacon mode with superframe as the access mode~\cite{Kim22TMC}. In this mode, the coordinator schedules uplink transmissions at the granularity of timeslots in each superframe, and notifies the sensors of the scheduling decision via a beacon frame (which includes the scheduling information). After receiving the beacon frame, each scheduled sensor transmits the latest sampled data stored in its buffer to the coordinator during the scheduled timeslots. Through this procedure, the coordinator can control the collection of \emph{unlabeled} data. Note that to comply with the IEEE 802.15.6 standard, we only assume that the coordinator can control the transmissions of samples, but not the generation of samples. While this means that our solution can only save the transmission cost but not the sampling cost at sensors, it ensures the broad applicability of our solution as it does not directly program the sensors. If needed, our solution can be easily adapted for deployment on the data sources themselves (see the remark in Section~\ref{subsec:Sampling by Predictive Coreset Construction}). %, and our results remain valid. 

\if\thisismainpaper0
\emph{Remark:} Although the coordinator can change the allocation of timeslots among the sensors in each superframe, doing so will substantially complicate the solution space. For tractability, we restrict the coordinator's action in each superframe to either scheduling transmissions from all the sensors of interest (according to a fixed timeslot allocation) or not scheduling any transmission, leaving fine-grained optimization of the timeslot allocation to future work. \looseness=-1 %  eases linkage with the classical framework of active learning. 
\fi

\subsection{Background on Classical Active Learning}

%Training a deep classification model usually requires a large set of labeled samples $\{(\xbf_i,y_i)\}_{i=1}^n$ for $n\gg 1$, which can be challenging to collect. 
Classical active learning aims at reducing the labeling cost by selecting a subset of the given unlabeled samples to label. Specifically, given a set of \emph{unlabeled} samples $\{\xbf_i\}_{i=1}^n$, an initial set of \emph{labeled} samples $S^{(0)}$, a budget $b_l$ of queries to a labeling oracle, and a training algorithm that returns a trained model $\pi_S$ based on a given training set $S$ and the corresponding labels, an active learning algorithm seeks to select up to $b_l$ unlabeled samples to query so that the resulting model has the minimum expected loss, i.e., 
\begin{align}
    \min_{S^{(1)}\subseteq \{\xbf_i\}_{i=1}^n: |S^{(1)}|\leq b_l} \mbbE_{\xbf,y}[l(\xbf,y; \pi_{S^{(0)}\cup S^{(1)}})].
\end{align}
While the above description is for the offline setting where all the unlabeled samples are given, it can be applied iteratively in the online setting, where batches of unlabeled samples arrive sequentially and the labeling decisions have to be made sequentially too (in this case $S^{(0)}$ denotes the set of samples that have been labeled so far).

\subsection{Problem: Active Learning in WBAN}\label{subsec:Active Learning Problem in WBAN}

\rev{The personalized nature of health monitoring makes it necessary to train/fine-tune the model for each user based on its own data. However, collecting such data via WBAN will incur nontrivial cost, particularly in terms of sensor energy consumption. This observation motivates the application of active learning in WBAN, even if the labeling cost is negligible.}
The approach of classical active learning fails to meet the requirements of WBAN: (i) instead of a labeling oracle that can be queried in real time, the labeling of WBAN samples (e.g., judgement of the user's health status) usually requires human intervention and cannot be performed at the pace of sampling; (ii) instead of accessing all the unlabeled samples for free, WBAN incurs a nontrivial cost (e.g., energy consumption at sensors) in collecting each unlabeled sample at the coordinator, where the active learning algorithm runs. Therefore, an efficient active learning algorithm in WBAN should selectively collect a subset of samples that, once labeled, are most useful for training the target model, without being able to query the labels and start the training before the data collection is done. Fig.~\ref{fig:WBAN} illustrates the workflow for active learning in WBAN considered in this work. 

%It implicitly assumes that collecting unlabeled samples is free, as the active learning algorithm can access all the unlabeled samples without any budget constraint. However, in WBAN, even collecting an unlabeled sample incurs a nontrivial operation cost as it needs to consume the limited power supply at sensors and the limited wireless bandwidth between the sensors and the coordinator. Cost-efficient learning in WBAN requires an active learning algorithm to selectively collect a subset of the unlabeled samples that, once labeled, are most useful for model training. 

To address the above challenges, we propose to perform active learning through \emph{coreset construction}~\cite{Lu20JSAC}, which selects representative samples based on their positions in the input space $\mathcal{X}$, thus circumventing the need of real-time labeling. Doing so at the coordinator, however, faces a dilemma that the algorithm needs to know what the samples will look like {before} taking those samples. Fortunately, health monitoring data usually exhibit strong temporal correlations such as daily patterns, and the coordinator itself (usually a smart phone) is also equipped with local sensors whose measurements are spatially correlated with those of body sensors. We propose to exploit such spatial-temporal correlations to predict future samples through time series forecasting~\cite{Lim21Survey}. 
%Health monitoring data usually exhibit strong repeatable patterns such as daily patterns, which can be exploited to predict future sensor readings with some accuracy. 
Our focus in this work is \emph{not} on developing new time series forecasting models. Instead, we focus on optimizing the selection of samples based on their predicted values given by a forecasting model while taking into account the prediction errors, and our solution can be used in combination with any reasonable forecasting model. 

Specifically, given an initial set of \emph{unlabeled} samples $S^{(0)}$ and a forecasting model that can predict $n$ unlabeled samples at a time, our active learning framework contains two phases:\begin{enumerate}
    \item \emph{online sampling phase} that works in multiple rounds, where in each round, we predict the next batch of samples $\hat{P}:= \{\hat{\xbf}_i\}_{i=1}^n$, select a subset $\hat{S}^{(1)}\subseteq \hat{P}$ of the predicted samples, collect the corresponding actual samples $S^{(1)}:= \{\xbf_i:\: \hat{\xbf}_i\in \hat{S}^{(1)}\}$, and merge them into the existing sample set $S^{(0)}$; 
    \item \emph{offline labeling phase}, where after online sampling, we label the collected dataset to generate a labeled training set for training the target model.  
    %apply a pool-based active learning algorithm to 
\end{enumerate}
%In cases that the labeling cost is high, instead of labeling all the collected samples, 
In the sequel, we will only focus on the online sampling phase, assuming that all the collected samples will be labeled.   
As the problem will reduce to a classical pool-based active learning problem after online sampling, in practice one can apply any pool-based active learning algorithm to the dataset collected by our algorithm to further reduce the labeling cost. % down-select from the collected samples for labeling. 
%In our analysis, we assume that all the collected samples will be labeled.  % However, as our problem reduces to a classical pool-based active learning problem after the sampling phase, 

Our goal is to develop an online sampling algorithm with good tradeoff between the number of collected samples and the performance in target model training.  
To provide a theoretical performance guarantee, we make the following \textbf{assumptions}: \looseness=-1
\begin{enumerate}
    \item Given the predicted samples $\hat{P}:= \{\hat{\xbf}_i\}_{i=1}^n$, the prediction errors $\{\xbf_i-\hat{\xbf}_i\}_{i=1}^n$ are conditionally i.i.d., each following the Gaussian distribution $\mathcal{N}(\bm{0}, \sigma_n^2 \bm{I})$, where $\sigma_n^2$ denotes the prediction mean squared error (MSE). %in predicting each attribute.  
    \label{assumption:prediction error}
    \item The loss function $l(\xbf,y; \wbf)$ is upper-bounded by $L$ for all $\xbf\in \mathcal{X}$, $y\in \mathcal{Y}$, and $\wbf\in \mathcal{W}$. \label{assumption:bounded loss}
    \item The loss function $l(\xbf,y; \wbf)$ is $\lambda_l$-Lipschitz continuous in $\xbf$ for all $y\in \mathcal{Y}$ and $\wbf\in \mathcal{W}$. \label{assumption:Lipschitz loss}
    \item The regression function $\eta_c(\xbf)$ is $\lambda_\eta$-Lipschitz continuous in $\xbf$ for all $c\in \mathcal{Y}$. \label{assumption:Lipschitz regression}
\end{enumerate}
Assumptions \eqref{assumption:bounded loss}--\eqref{assumption:Lipschitz regression} are standard assumptions for applying coreset to active learning~\cite{Sener18ICLR}. Assumption~\eqref{assumption:prediction error} is a simplifying assumption to allow closed-form analysis (see Section~\ref{subsubsec:Performance Analysis}). However, our algorithm does not hinge on these assumptions and will be tested on more realistic cases (see Section~\ref{sec:Performance Evaluation}).

\section{Theoretical Foundation}\label{sec:Theoretical Foundation}
%\section{Active Learning via Predictive Coreset Construction}

%Due to the lack of an oracle in WBAN that can label samples at the same pace as data collection, we cannot apply the classical active learning approaches such as maximum error, maximum uncertainty \ting{Daniel: fill in with references} as they need to access the target model that is trained on the selected samples, which requires the labels of the selected samples to be obtained in real time. 
As explained in Section~\ref{subsec:Related Work}, we cannot apply classical active learning algorithms due to the lack of real-time access to labels. Instead, we take a coreset-based approach that can directly select unlabeled samples based on the following analysis. %, such that once labeled, they can provide  guaranteed performance in training the target model. % (after labeling)  without real-time access to the labels. 

\subsection{Generalized Approximation Error Bound}

%Specifically, instead of assuming the labeled samples $\{(\xbf_i,y_i)\}_{i=1}^n$ to be i.i.d., we only assume that the labels are conditionally independent once the corresponding unlabeled samples are given, i.e., $y_1,\ldots,y_n$ are independent variables conditioned on a given value of $\{\xbf_i\}_{i=1}^n$. Instead of assuming zero loss for all the training samples in $S$, we allow the trained model to incur an arbitrary loss over $S$. Based on these relaxed assumptions, we have a uniform error bound as follows. In the sequel, 
Given a set of unlabeled samples $P:=\{\xbf_i\}_{i=1}^n$, let $S$ be a weighted set such that each point $\xbf_i\in P$ is represented by a point $\xbf_{j_i}\in S$. The weight $u_j$ for each $\xbf_j\in S$ denotes the number of points in $P$ that are represented by $\xbf_j$. For any target model $\wbf$, we can bound the difference between the losses on $P$ and $S$ (after both of them are labeled) as follows.  \looseness=-1

%We introduce the following notations for ease of presentation: given $P,\: S\subseteq \mathcal{X}$, 
%\begin{align}
%    j_i &:= \argmin_{j: \xbf_j\in S}\|\xbf_j-\xbf_i\|,~~~\forall i=1,\ldots,n\\
%    u_j &:= |\{i\in \{1,\ldots,n\}:\: j_i=j\}|,~~~\forall \xbf_j\in S,
%\end{align}

\begin{theorem}\label{thm:approx error bound}
Given $n$ unlabeled samples $P:=\{\xbf_i\}_{i=1}^n$ and a corresponding weighted set $S$ with\footnote{Throughout this paper, $\|\xbf\|$ denotes the $\ell$-2 norm of vector $\xbf$.} $\|\xbf_i-\xbf_{j_i}\|\leq \delta,\: \forall \xbf_i\in P$, if the labels $y_1,\ldots,y_n$ of $P$ are conditionally independent given the unlabeled samples, the loss function satisfies assumptions \eqref{assumption:bounded loss}--\eqref{assumption:Lipschitz loss}, and the regression function satisfies assumption \eqref{assumption:Lipschitz regression}, then with a probability of at least $(1-\gamma)^2$, 
\begin{align}
    {1\over n}\sum_{i=1}^n l(\xbf_i,y_i; \wbf) \leq &\delta(\lambda_l+\lambda_\eta LC)+\sqrt{{L^2\log(2/\gamma)\over 2n}} \nonumber\\
    &\hspace{-7em} +\sqrt{{L^2\log(2/\gamma)\over 2n^2}\sum_{j: \xbf_j\in S} u_j^2} + \sum_{j: \xbf_j\in S} {u_j\over n}l(\xbf_j,y_j;\wbf) \label{eq:approx error bound}
\end{align}
for any given $\wbf\in \mathcal{W}$. 
\end{theorem}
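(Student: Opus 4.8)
The plan is to express the labeled empirical loss on $P$ as a sum of four terms --- one controlled by Lipschitz continuity, two by concentration, and one that is \emph{exactly} the last summand of \eqref{eq:approx error bound}. Write $\bar l(\xbf;\wbf):=\sum_{c\in\mathcal Y}\eta_c(\xbf)\,l(\xbf,c;\wbf)$ for the expected loss at $\xbf$, so that $\mbbE[l(\xbf_i,y_i;\wbf)\mid\xbf_i]=\bar l(\xbf_i;\wbf)$ by the definition of $\eta$. Grouping the terms $\bar l(\xbf_{j_i};\wbf)$ by their representative $\xbf_j$ (which appears $u_j$ times) gives the deterministic identity $\tfrac1n\sum_{i=1}^n l(\xbf_i,y_i;\wbf)=(A)+(B)+(D)+(E)$ with
\begin{align}
(A)&:=\tfrac1n\textstyle\sum_{i=1}^n\bigl(l(\xbf_i,y_i;\wbf)-\bar l(\xbf_i;\wbf)\bigr),\nonumber\\
(B)&:=\tfrac1n\textstyle\sum_{i=1}^n\bigl(\bar l(\xbf_i;\wbf)-\bar l(\xbf_{j_i};\wbf)\bigr),\nonumber\\
(D)&:=\textstyle\sum_{j:\,\xbf_j\in S}\tfrac{u_j}{n}\bigl(\bar l(\xbf_j;\wbf)-l(\xbf_j,y_j;\wbf)\bigr),\nonumber\\
(E)&:=\textstyle\sum_{j:\,\xbf_j\in S}\tfrac{u_j}{n}\,l(\xbf_j,y_j;\wbf).\nonumber
\end{align}
Since $(E)$ is the last term of \eqref{eq:approx error bound}, it remains to bound $(A)$, $(B)$, $(D)$.

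Term $(B)$ is deterministic given the samples, and I would control it through a product-rule Lipschitz estimate on $\bar l$: for $\|\xbf-\xbf'\|\le\delta$, write $\eta_c(\xbf)l(\xbf,c;\wbf)-\eta_c(\xbf')l(\xbf',c;\wbf)=\eta_c(\xbf)\bigl(l(\xbf,c;\wbf)-l(\xbf',c;\wbf)\bigr)+\bigl(\eta_c(\xbf)-\eta_c(\xbf')\bigr)l(\xbf',c;\wbf)$, sum over $c$, and invoke assumptions~\eqref{assumption:bounded loss}, \eqref{assumption:Lipschitz loss} and \eqref{assumption:Lipschitz regression} together with $\sum_c\eta_c(\xbf)=1$ and $|\mathcal Y|=C$; this yields $|\bar l(\xbf;\wbf)-\bar l(\xbf';\wbf)|\le\delta(\lambda_l+\lambda_\eta LC)$, so applying it to each pair $(\xbf_i,\xbf_{j_i})$ gives $|(B)|\le\delta(\lambda_l+\lambda_\eta LC)$ --- the first term of \eqref{eq:approx error bound}.

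Terms $(A)$ and $(D)$ are the stochastic ones, handled after conditioning on the unlabeled samples and using the assumed conditional independence of the labels. For $(A)$, the summands $l(\xbf_i,y_i;\wbf)\in[0,L]$ are conditionally independent with conditional means $\bar l(\xbf_i;\wbf)$, so a one-sided Hoeffding bound at failure level $\gamma/2$ gives $(A)\le\sqrt{L^2\log(2/\gamma)/(2n)}$. For $(D)$, I would apply Hoeffding to the \emph{weighted} sum $\sum_{j:\xbf_j\in S}\tfrac{u_j}{n}Z_j$, where $Z_j:=\bar l(\xbf_j;\wbf)-l(\xbf_j,y_j;\wbf)$ are conditionally independent, mean zero, and of range at most $L$; the coefficients $u_j/n$ make the effective Hoeffding parameter $\tfrac{L^2}{n^2}\sum_j u_j^2$, so again at failure level $\gamma/2$ one gets $(D)\le\sqrt{\tfrac{L^2\log(2/\gamma)}{2n^2}\sum_j u_j^2}$. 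A union bound over these two events (which then hold jointly with probability at least $1-\gamma\ge(1-\gamma)^2$), combined with the bound on $(B)$ and the retained term $(E)$, gives \eqref{eq:approx error bound}.

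The step needing the most care --- and the genuinely new ingredient beyond the $\delta$-cover result of \cite{Sener18ICLR} --- is term $(D)$. Because the target model is not required to interpolate the coreset (we do not assume $l(\xbf_j,y_j;\wbf)=0$ on $S$), we cannot drop $(E)$ and must instead concentrate the \emph{weighted} coreset loss around its conditional mean; it is precisely the multiplicities $u_j$ that produce the $\tfrac1n\sqrt{\sum_j u_j^2}$ factor, which is the source of the advertised tradeoff (a coarser cover needs fewer coreset points but inflates the $u_j$'s). Once this term is isolated, the rest is routine; the only remaining subtlety is that $(A)$ and $(D)$ share the coreset labels, so their joint probability is obtained by a union bound rather than by independence.
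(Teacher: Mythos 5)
Your proof is correct and follows essentially the same route as the paper's: a Lipschitz argument bounding the shift from each $\xbf_i$ to its representative $\xbf_{j_i}$ by $\delta(\lambda_l+\lambda_\eta LC)$, plus two applications of Hoeffding's inequality --- one to the losses over $P$ and one to the $u_j/n$-weighted coreset losses, the latter yielding the $\sqrt{\sum_{j}u_j^2}$ factor exactly as in the paper. The only divergence is how the two concentration events are combined: you use one-sided Hoeffding at level $\gamma/2$ each and a union bound, giving joint probability at least $1-\gamma\geq(1-\gamma)^2$ without needing the paper's additional stipulation (made at the end of its proof) that the labels of $S$ are generated independently of the labels of $P$, whereas the paper uses two-sided Hoeffding at level $\gamma$ for each event and multiplies the two probabilities via that independence. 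Your variant is therefore valid and marginally stronger.
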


\emph{Remark~1:}
Theorem~\ref{thm:approx error bound} generalizes two limiting assumptions in \cite[Theorem~1]{Sener18ICLR}: (i) \cite{Sener18ICLR} requires the samples $\{(\xbf_i,y_i)\}_{i=1}^n$ to be i.i.d., which is inapplicable in WBAN due to the temporal correlation and heterogeneity in sensor measurements; (ii) \cite{Sener18ICLR}  requires the target model to achieve zero loss on the  samples in $S$ (i.e., $l(\xbf,y;\pi_S)=0,\: \forall \xbf \in S$), which forces the target model to overfit the training data. Instead, we only assume the labels to be conditionally independent once the unlabeled samples are given, and allow the trained model to incur an arbitrary loss over $S$. 

\emph{Remark~2:} 
Theorem~\ref{thm:approx error bound} corrects two mistakes in \cite[Theorem~1]{Sener18ICLR} under its assumptions. Specifically, when the trained model $\pi_S$ achieves zero loss on $S$, %i.e., $l(\xbf_j,y_j;\pi_S)=0,\: \forall \xbf_j\in S$, 
the bound in Theorem~\ref{thm:approx error bound} reduces to\looseness=-1
\begin{align}
    {1\over n}\sum_{i=1}^n l(\xbf_i,y_i; \pi_S) \leq &\delta(\lambda_l+\lambda_\eta LC)+\sqrt{{L^2\log(2/\gamma)\over 2n}} \nonumber\\
    &\hspace{0em} +\sqrt{{L^2\log(2/\gamma)\over 2n^2}\sum_{j: \xbf_j\in S} u_j^2},
\end{align}
which differs from the bound in \cite[Theorem~1]{Sener18ICLR} in the second and the third terms. The former is due to a mistake in the proof of \cite[Theorem~1]{Sener18ICLR} when applying Hoeffding's bound, and the latter is due to a mistake in the same proof that treated $\mbbE_{y_j}[l(\xbf_j,y_j;\pi_S)] := \sum_{c=1}^C \eta_c(\xbf_j)l(\xbf_j,c;\pi_S)$ as zero for all $\xbf_j\in S$. \footnote{This is invalid, because $\pi_S$ is trained on $(\xbf_j,c_j)$ for a specific realization $c_j$ that the random variable $y_j$ has taken when labeling $\xbf_j$, and thus can only achieve $l(\xbf_j,c_j;\pi_S)=0$ but not $l(\xbf_j,c;\pi_S)=0$ for all $c\in \mathcal{Y}\setminus \{c_j\}$.} \looseness=-1

\subsection{Implication on Coreset Construction} 

The weighted set $S$ is referred to as a \emph{coreset} of the complete sample set $P$ in the sense that after labeling, it achieves a bounded error in approximating the loss function evaluated on $P$. 
Among the terms in the error bound \eqref{eq:approx error bound}, the term $\sqrt{{L^2\log(2/\gamma)/ (2n)}}$ will be negligibly small for a large sample size $n$, and the term 
$\sum_{j: \xbf_j\in S} {u_j\over n} l(\xbf_j,y_j;\wbf)$, %\label{eq:coreset loss}
%\end{align}
which denotes the training loss over the coreset $S$, will also be small after training $\wbf$ based on $S$ %to minimize \eqref{eq:coreset loss} 
for an expressive target model such as a deep neural network. %Note that the training loss for a coreset is a weighted average over the losses at individual points in the coreset. 
Thus, the coreset $S$ mainly affects the error bound through the following function
\begin{align}
    &\hspace{0em}\delta(\lambda_l+\lambda_\eta LC) + \sqrt{{L^2\log(2/\gamma)\over 2n^2}\sum_{j: \xbf_j\in S} u_j^2} %\nonumber\\    &\hspace{0em}
    ~\propto \delta + \nu \|\tilde{\ubf}\|, \label{eq:objective of S}
\end{align}
where ``$\propto$'' means ``proportional to'', and
\if\thisismainpaper1
\begin{align}
\nu := {\sqrt{L^2\log(2/\gamma)/2} \over \lambda_l + \lambda_\eta LC},\quad
\tilde{\ubf} :=\left({u_j\over n}\right)_{j: \xbf_j\in S}. \label{eq:nu}
\end{align}
\else
\begin{align}
\nu &:= {\sqrt{L^2\log(2/\gamma)/2} \over \lambda_l + \lambda_\eta LC}, \label{eq:nu}\\
\tilde{\ubf} &:=\left({u_j\over n}\right)_{j: \xbf_j\in S}. \label{eq:tilde_u}
\end{align}
\fi
Since the coreset $S$ induces a clustering of the original sample set $P$, with each cluster containing all the samples represented by the same point in $S$, $\delta$ is essentially the maximum radius of each cluster, and $\tilde{\ubf}$ is the distribution of samples across the clusters. As $\sum_{j: \xbf_j\in S}(u_j/n) = 1$,  Jensen's inequality implies that $\|\tilde{\ubf}\|\geq 1/\sqrt{|S|}$, with the minimum achieved at $u_j = n/|S|$ ($\forall j: \xbf_j\in S$). 
Thus, to minimize \eqref{eq:objective of S}, the coreset $S$ should \emph{cover all the samples with the minimum radius, while balancing the number of samples represented by each coreset point}. 

\emph{Remark:}
Note that while it is natural to represent each sample in $P$ by the nearest point in $S$, i.e., $j_i:=\argmin_{j: \xbf_j\in S}\|\xbf_i-\xbf_j\|$, this is not required. Which subset of $P$ is represented by each point in $S$ is a decision variable in coreset construction, in addition to the set $S$ itself, and can be used to trade off between $\delta$ and $\|\tilde{\ubf}\|$. \looseness=-1

\section{Algorithm Design}\label{sec:Algorithm Design}

We now design a sampling algorithm based on coreset construction, so that only the samples selected into the coreset are collected. The difference from classical coreset construction is that we only have noisy predictions of the unlabeled samples. While given a forecasting model that can return the predicted values of the next batch of candidate samples, we can perform coreset construction on this predicted set, %select a subset of these samples to collect. 
the key question is how to adjust for prediction errors, which will be addressed below. 
%Below we will address this question through algorithm design and characterize the performance. 
%such that the collected samples (after labeling) yield a guaranteed performance in target model training in the presence of prediction errors. %, which is the focus in our algorithm design.
\looseness=0

\subsection{Sampling by Predictive Coreset Construction}\label{subsec:Sampling by Predictive Coreset Construction}
%The solution in \cite{Sener18ICLR} cannot be directly applied to our problem as it requires the knowledge of all the unlabeled samples $\{\xbf_i\}_{i=1}^n$, which is unknown before these samples are collected (at a nontrivial cost).

\begin{algorithm}[tb]
\small
\SetKwInOut{Input}{input}\SetKwInOut{Output}{output}
\Input{An $n$-sample forecasting model, coverage radii $\delta_0$ and $\delta_1$, weight bound $\kappa$, initial set of collected samples $S^{(0)}$}
\Output{Overall set of collected samples and their weights $\tilde{S}$}
%\BlankLine
Set initial weights $u_j\leftarrow 1$, $\forall \xbf_j\in S^{(0)}$\;
\For{prediction window $T=1,2,\ldots$ each containing $n$ sampling epochs\label{sampling:1}}
{
Predict the next $n$ samples $\hat{P}\leftarrow \{\hat{\xbf}_i\}_{i=1}^n$\; \label{sampling:2}
$Q\leftarrow \hat{P}$\tcp*{samples not covered by $S^{(0)}$} \label{sampling:Q - 1}
\ForEach{$\hat{\xbf}_i\in \hat{P}$}
{\If{$\exists \xbf_j\in S^{(0)}$ with $\|\xbf_j-\hat{\xbf}_i\|\leq \delta_0$ \& $u_j<\kappa$}
{$Q\leftarrow Q\setminus \{\hat{\xbf}_i\}$\;
$u_j\leftarrow u_j +1$\;}
}\label{sampling:Q - 2}
%Find the subset $Q$ of predicted samples not in the $\delta_0$-cover of $S^{(0)}$\; \label{sampling:3}
Select the minimum subset $\hat{S}^{(1)}\subseteq \hat{P}$ that forms a $\delta_1$-cover of $Q$ such that the number of points $u_j$ covered by each $\hat{\xbf}_j\in \hat{S}^{(1)}$ satisfies $u_j\leq \kappa$\; \label{sampling:4}
Collect the samples in $S^{(1)}\leftarrow \{\xbf_j:\: \hat{\xbf}_j\in \hat{S}^{(1)}\}$\; \label{sampling:5}
Merge the collected samples by $S^{(0)}\leftarrow S^{(0)}\cup S^{(1)}$\; \label{sampling:6}
}
Return $\tilde{S}\leftarrow \{(\xbf_j,\: u_j)\}_{\xbf_j\in S^{(0)}}$\; \label{sampling:7}
\caption{Predictive Coreset Construction}
\label{Alg:predictive coreset}
\vspace{-.05em}
\end{algorithm}

We propose a sampling algorithm based on \emph{predictive coreset construction} as shown in Algorithm~\ref{Alg:predictive coreset}. Our algorithm works sequentially on each prediction window of a given size $n$, specified by the forecasting model. Given the predicted samples $\hat{P}$ in the current window (line~\ref{sampling:2}), the algorithm first checks whether each predicted sample $\hat{\xbf}_i\in \hat{P}$ is covered by an existing sample in $S^{(0)}$ within a given radius $\delta_0$ (lines~\ref{sampling:Q - 1}--\ref{sampling:Q - 2}), and then tries to select a minimum subset $\hat{S}^{(1)}$ of $\hat{P}$ to cover the remaining samples within another given radius $\delta_1$ (line~\ref{sampling:4}). 
Here, we say that a set of points $B$ is a \emph{$\delta$-cover} of another set of points $A$ if $\forall \bm{a}\in A$, $\exists \bm{b}\in B$ such that $\|\bm{a}-\bm{b}\|\leq \delta$. 
Each $\hat{\xbf}_j$ in $\hat{S}^{(1)}$ represents a sampling epoch (e.g., a superframe) at which a new sample $\xbf_j$ will be collected (line~\ref{sampling:5}). These new samples are then merged with the existing samples in $S^{(0)}$ (line~\ref{sampling:6}) in preparation for the next prediction window. 
During this process, the algorithm keeps track of the weight $u_j$ for each collected sample $\xbf_j$ to denote the number of candidate samples represented by $\xbf_j$, where we allow a sample $\xbf_i$ to be represented by another sample $\xbf_j$ if the predicted value $\hat{\xbf}_i$ is covered by $\xbf_j$ within distance $\delta_0$ in the case of $\xbf_j\in S^{(0)}$ or $\hat{\xbf}_j$ within distance $\delta_1$ in the case of $\xbf_j\in S^{(1)}$. The algorithm makes sure that $u_j\leq \kappa$ for a given integral parameter $\kappa\geq 1$. 
The collected set of weighted unlabeled samples $\tilde{S}$ (where the weight of each sample denotes its multiplicity) will then be labeled and used to train the target model. \looseness=-1

\emph{Remark:} In cases where the data sources can be directly programmed to implement sampling, Algorithm~\ref{Alg:predictive coreset} remains applicable, except that line~\ref{sampling:2} is replaced by directly collecting the next window of samples $P=\{\xbf_i\}_{i=1}^n$ and line~\ref{sampling:5} is skipped (as line~\ref{sampling:4} has selected the samples $S^{(1)}$ to keep). Our performance analysis in Theorem~\ref{thm:delta-cover guarantee} holds trivially in this case with $\delta_0=\delta_1=\delta$.

\subsection{Complexity Analysis} 

Given the predicted samples, the complexity of Algorithm~\ref{Alg:predictive coreset} for each prediction window is dominated by the coverage of the predicted samples using the existing samples in $S^{(0)}$ (lines~\ref{sampling:Q - 1}--\ref{sampling:Q - 2}) and the coverage of the remaining predicted samples using the predicted values of new samples in $S^{(1)}$ (line~\ref{sampling:4}). Lines~\ref{sampling:Q - 1}--\ref{sampling:Q - 2} can be completed in $O(n |S^{(0)}|)$ time by a linear search in $S^{(0)}$ for each $\hat{\xbf}_i\in \hat{P}$ (assuming $\|\xbf_j-\hat{\xbf}_i\|$ can be evaluated in $O(1)$ time). Line~\ref{sampling:4} is a combinatorial optimization problem that can be formulated as an integer linear program (ILP) as follows. 

Let $\alpha_j$ indicate whether $\hat{\xbf}_j\in \hat{S}^{(1)}$, and $\beta_{ij}$ indicate whether $\hat{\xbf}_i\in Q$ is covered by $\hat{\xbf}_j\in \hat{P}$. Then line~\ref{sampling:4} aims at solving:
\begin{subequations}\label{eq:optimization for sampling4}
\begin{align}
    \min \quad& \sum_{j=1}^n \alpha_j \\
    \mbox{s.t. } & \sum_{i: \hat{\xbf}_i\in Q} \beta_{ij} \leq \kappa \alpha_j,~~~\forall j=1,\ldots,n, \label{cons:weight} \\
    & \sum_{j=1}^n \beta_{ij} = 1, ~~~\forall i:\: \hat{\xbf}_i\in Q, \label{cons:coverage} \\
    & \beta_{ij} = 0,~~~\forall (i,j):\: \|\hat{\xbf}_i-\hat{\xbf}_j\|>\delta_1, \label{cons:radius} \\
    & \alpha_j,\: \beta_{ij}\in \{0,1\},~~~\forall j=1,\ldots,n,\:  i:\: \hat{\xbf}_i\in Q.
\end{align}
\end{subequations}
%This is a generalization of the \emph{bin packing problem}, where we want to packet the items in $Q$ into the fewest ``bins'' (each representing a selected point $\hat{\xbf}_j\in \hat{S}^{(1)}$) while satisfying the size constraint for each bin \eqref{cons:weight}. The difference from classical bin packing is that not all the items are allowed in each bin due to constraint \eqref{cons:radius}. Equivalently, \eqref{eq:optimization for sampling4} can also be viewed as 
This is a generalization of the \emph{minimum set cover (MSC)} problem, which aims to select a minimum number of sets from the collection $\left\{\{\hat{\xbf}\in Q:\: \|\hat{\xbf}-\hat{\xbf}_j\|\leq \delta_1\}\right\}_{j=1}^n$ to cover $Q$, with the additional constraint that the number of points in $Q$ covered by each set satisfies a given upper bound. As MSC is NP-hard~\cite{Korte12book}, \eqref{eq:optimization for sampling4} is NP-hard. However, as the prediction window size $n$ is usually small, in practice line~\ref{sampling:4} can often be solved optimally. 
In \if\thisismainpaper1
the appendix of \cite{Chiu24:report}, 
\else
Appendix~\ref{appendix:Implementation of Algorithm}, 
\fi
we describe a partially brute-force method to do so in $O(n^3 2^n)$ time. 
For larger $n$, line~\ref{sampling:4} can be approximately solved by any heuristic for ILP (e.g., LP relaxation with randomized rounding). Thus, each iteration in Algorithm~\ref{Alg:predictive coreset} has a complexity of $O(n|S^{(0)}|+f(n)) = O(n|S^{(0)}|)$, assuming $n=O(1)$ (where $f(n)$ is the time complexity of line~\ref{sampling:4}). As $|S^{(0)}|$ after processing $t$ windows is bounded by $nt$, the total complexity for processing $T$ windows is $O(\sum_{t=1}^T n^2 t) = O(n^2 T^2)$, which is \emph{quadratic} in the total number of candidate samples $nT$.

Meanwhile, the complexity is substantially lower for an important special case. Specifically, if the target model is a deep neural network, then the Lipschitz constant $\lambda_l$ for its loss function can be very large (e.g., exponential in the number of layers~\cite{Sener18ICLR}). In this case, the coefficient $\nu$ in \eqref{eq:nu} will be very small, which means that the error bound \eqref{eq:approx error bound} depends on the coreset $S$ mainly through the coverage radius $\delta$, and thus we can remove the constraint on the number of candidate samples represented by each point in $S$ by 
setting $\kappa=\infty$. This reduces lines~\ref{sampling:Q - 1}--\ref{sampling:Q - 2} to the test for each $\hat{\xbf}_i\in \hat{P}$ to fall in the $\delta_0$-cover of the set of existing samples $S^{(0)}$, and line~\ref{sampling:4} to the selection of the minimum subset of $\hat{P}$ to form a $\delta_1$-cover of the predicted samples in $Q$. % that are not covered by $S^{(0)}$. 
The former can be solved by performing a \emph{nearest neighbor search (NNS)} for each $\hat{\xbf}_i$ in $S^{(0)}$ and comparing the distance to the nearest neighbor with $\delta_0$. Using an appropriate data structure such as {$k$-$d$ tree}, this step can be implemented with an average complexity of $O(n\log{|S^{(0)}|})$~\cite{Moore04turotial}. 
The latter is an instance of the MSC problem, where we want to select a minimum number of sets from the collection $\left\{\{\hat{\xbf}\in Q:\: \|\hat{\xbf}-\hat{\xbf}_j\|\leq \delta_1\}\right\}_{j=1}^n$ to cover $Q$. 
%Although MSC is NP-hard~\cite{Korte12book}, our instance is usually 
For small $n$, this is 
solvable at a complexity of $O(2^n)$ (via brute force). For larger $n$, we can apply any approximate MSC algorithm such as the greedy algorithm (i.e., iteratively selecting the set containing the largest number of uncovered elements), which achieves an approximation ratio of $O(\log{|Q|})= O(\log{n})$ that is known to be near-optimal \cite{Dinur14STOC}. Thus, in the special case of $\kappa=\infty$, each iteration in Algorithm~\ref{Alg:predictive coreset} has a complexity of $O(n\log{|S^{(0)}|} + 2^n)=O(n\log{|S^{(0)}|})$ if $n=O(1)$. The total complexity over $T$ windows is then $O(\sum_{t=1}^T n\log(nt)) = O(nT\log{T})$, which is \emph{nearly linear} in the total number of candidate samples $nT$.

\subsection{Performance Analysis}\label{subsubsec:Performance Analysis}

Based on the result in \eqref{eq:objective of S}, it is desirable that the set of collected samples (i.e., coreset) can cover the set of candidate samples (i.e., the complete dataset) with a small radius $\delta$ and a small norm  $\|\tilde{\ubf}\|$.  
We will show that under the assumption of i.i.d. Gaussian prediction errors as in assumption~\eqref{assumption:prediction error}, Algorithm~\ref{Alg:predictive coreset} can be configured to make sure that: (i) the coverage radius will be bounded by a given $\delta$ with a guaranteed confidence, and (ii) the norm  $\|\tilde{\ubf}\|$ will decay with the number of collected samples $|S|$ at the rate of $1/\sqrt{|S|}$. 
%the number of candidate samples represented by each collected sample is balanced up to a factor of $\kappa$. 
%the collected samples form a $\delta$-cover of all the samples with a guaranteed confidence.  

\begin{theorem}\label{thm:delta-cover guarantee}
Consider the $T$-th prediction window for any $T\geq 1$, where $P:=\{\xbf_i\}_{i=1}^n$ is the set of candidate samples in this window, $S^{(0)}$ is the set of samples collected before sampling from this window, and $S^{(1)}$ is the set of samples collected from this window by Algorithm~\ref{Alg:predictive coreset}. Then $S:= S^{(0)}\cup S^{(1)}$ and the corresponding weights $\{u_j\}_{j: \xbf_j\in S}$ satisfy:
\begin{enumerate}
    \item $\|\tilde{\ubf}\|\leq {\kappa\over \sqrt{|S|}}$ for $\tilde{\ubf}:=(u_j/(nT+s_0))_{j: \xbf_j\in S}$, where $s_0$ is the number of initial samples; 
    \item under assumption~\eqref{assumption:prediction error}, setting 
\begin{align}
    \delta_0 &:= \delta-\sqrt{\sigma_n^2 F^{-1}\left((1-\epsilon)^{1/n};d\right)}, \label{eq:delta_0}\\
    \delta_1 &:= \delta-\sqrt{2\sigma_n^2 F^{-1}\left((1-\epsilon)^{1/n};d\right)} \label{eq:delta_1}
\end{align} 
ensures that with probability at least $1-\epsilon$, each $\xbf_i\in P$ and its representation $\xbf_{j_i}\in S$ are within distance $\delta$, where $d$ is the dimensionality of the input space and $F^{-1}(\cdot;d)$ is the inverse of the \emph{cumulative distribution function (CDF)} of the chi-squared distribution with $d$ degrees of freedom. 
\end{enumerate}
\end{theorem}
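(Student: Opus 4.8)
The plan for Part~1 is pure bookkeeping on the weights. First I would show the total weight is conserved: it starts at $\sum_{j:\xbf_j\in S^{(0)}} u_j = s_0$, and in each window every one of the $n$ candidate samples adds exactly $1$ to the total --- either by incrementing some $u_j$ in lines~\ref{sampling:Q - 1}--\ref{sampling:Q - 2}, or, if it lands in $Q$, by being assigned to exactly one representative in $\hat{S}^{(1)}$ (constraint \eqref{cons:coverage} forces $\sum_{j:\hat{\xbf}_j\in\hat{S}^{(1)}} u_j=|Q|$). Hence after $T$ windows $\sum_{j:\xbf_j\in S} u_j = nT+s_0$, so $\tilde{\ubf}$ is a probability vector. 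Next, the algorithm enforces $u_j\le\kappa$ everywhere --- via the test $u_j<\kappa$ before any increment for points in $S^{(0)}$, and via constraint \eqref{cons:weight} for points in $S^{(1)}$ --- so $\max_j\tilde{u}_j\le\kappa/(nT+s_0)$. Therefore $\|\tilde{\ubf}\|^2=\sum_j\tilde{u}_j^2\le(\max_j\tilde{u}_j)\sum_j\tilde{u}_j\le\kappa/(nT+s_0)$. Finally, since every representative has weight at least $1$ we have $|S|\le\sum_j u_j=nT+s_0$, so $\|\tilde{\ubf}\|^2\le\kappa/|S|\le\kappa^2/|S|$ because $\kappa\ge1$, which is the claimed bound.

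For Part~2 I would first convert the Gaussian assumption into a per-sample deviation bound. Under assumption~\eqref{assumption:prediction error}, $\|\xbf_i-\hat{\xbf}_i\|^2/\sigma_n^2$ is $\chi^2_d$-distributed, so with $r:=\sqrt{\sigma_n^2 F^{-1}\big((1-\epsilon)^{1/n};d\big)}$ the event $E_i:=\{\|\xbf_i-\hat{\xbf}_i\|\le r\}$ has probability exactly $(1-\epsilon)^{1/n}$; by conditional independence of the $n$ errors, $\Pr\{\bigcap_{i=1}^n E_i\}=1-\epsilon$. I then claim the coverage property holds on this event. Fix $\xbf_i\in P$ with representative $\xbf_{j_i}$. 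If $\xbf_{j_i}\in S^{(0)}$, its value is known exactly (it was physically collected in an earlier window) and the algorithm guarantees $\|\xbf_{j_i}-\hat{\xbf}_i\|\le\delta_0$, so on $E_i$ the triangle inequality gives $\|\xbf_{j_i}-\xbf_i\|\le\delta_0+r=\delta$ by \eqref{eq:delta_0}. If $\xbf_{j_i}\in S^{(1)}$, then $j_i\in\{1,\dots,n\}$, line~\ref{sampling:4} guarantees $\|\hat{\xbf}_{j_i}-\hat{\xbf}_i\|\le\delta_1$, and applying the triangle inequality twice, $\|\xbf_{j_i}-\xbf_i\|\le\|\xbf_{j_i}-\hat{\xbf}_{j_i}\|+\|\hat{\xbf}_{j_i}-\hat{\xbf}_i\|+\|\hat{\xbf}_i-\xbf_i\|\le r+\delta_1+r=\delta$ on $E_i\cap E_{j_i}$ by \eqref{eq:delta_1}. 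All events invoked lie in $\{E_1,\dots,E_n\}$, so the bound holds simultaneously for every $\xbf_i\in P$ on $\bigcap_{i=1}^n E_i$; as this argument is conditional on $S^{(0)}$ and $\hat{P}$ and holds for every realization, the unconditional probability is still at least $1-\epsilon$.

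The chi-squared tail identity and the two triangle-inequality chains are routine. The points that actually need care --- and the closest thing here to an obstacle --- are: (a) the weight-conservation accounting in Part~1, which is what makes $\tilde{\ubf}$ a distribution and the step $|S|\le nT+s_0$ legitimate; and (b) tracking \emph{which} quantities carry prediction error, namely that a representative in $S^{(0)}$ is error-free (hence $\delta_0$ need only absorb one error radius) while a representative in $S^{(1)}$ and the covered point are both only predicted (hence $\delta_1$ must absorb two). I would also flag the mild implicit requirement that $\delta$ be large enough for line~\ref{sampling:4} to be feasible --- e.g.\ $\delta\ge\sqrt{2\sigma_n^2 F^{-1}\big((1-\epsilon)^{1/n};d\big)}$, so that $\delta_1\ge0$ and $\hat{S}^{(1)}=Q$ is always an admissible fallback cover --- and state it explicitly rather than sweep it under the rug. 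Beyond that, the theorem is essentially a careful union bound over a single window plus deterministic bookkeeping, with no deep step.
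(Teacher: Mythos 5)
Your Part~1 is correct and follows essentially the same bookkeeping as the paper: the paper simply uses $u_j\le\kappa$ (enforced by the test $u_j<\kappa$ and by constraint \eqref{cons:weight}) together with $|S|\le nT+s_0$; your route through weight conservation actually yields the slightly stronger bound $\|\tilde{\ubf}\|\le\sqrt{\kappa/|S|}$ before weakening it to the claimed $\kappa/\sqrt{|S|}$, which is fine.

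Part~2, however, has a genuine gap in the $S^{(1)}$ case. You set the per-sample radius $r:=\sqrt{\sigma_n^2 F^{-1}\left((1-\epsilon)^{1/n};d\right)}$ and work on the event $\bigcap_{i=1}^n E_i$ with $E_i=\{\|\xibf_i\|\le r\}$, where $\xibf_i:=\xbf_i-\hat{\xbf}_i$. For a representative $\xbf_{j_i}\in S^{(1)}$ your double triangle inequality gives $\|\xbf_i-\xbf_{j_i}\|\le r+\delta_1+r=\delta_1+2r$, and you assert this equals $\delta$ ``by \eqref{eq:delta_1}.'' But \eqref{eq:delta_1} sets $\delta_1=\delta-\sqrt{2}\,r$, not $\delta-2r$, so on your event you only obtain $\|\xbf_i-\xbf_{j_i}\|\le\delta+(2-\sqrt{2})r>\delta$ whenever $r>0$. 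Your argument therefore proves the statement only for the more conservative shrinkage $\delta_1=\delta-2r$; since the theorem's $\delta_1$ is larger, Algorithm~\ref{Alg:predictive coreset} covers more aggressively under it, and the stated guarantee does not follow from your event. The $\sqrt{2}$ in \eqref{eq:delta_1} comes from treating the paired difference $\xibf_i-\xibf_{j_i}\sim\mathcal{N}(\bm{0},2\sigma_n^2\bm{I})$ as a single Gaussian vector and applying the chi-squared tail to it directly; this constant cannot be recovered by bounding $\|\xibf_i\|$ and $\|\xibf_{j_i}\|$ separately with radius $r$. The price of the sharper treatment is that the events $\{\|\xibf_i-\xibf_j\|\le\delta-\delta_1\}$ for the different $i$ covered by the same $\xbf_j\in S^{(1)}$ are dependent (they share $\xibf_j$), which the paper handles by conditioning on $\xibf_j$, using conditional i.i.d.-ness, and applying Jensen's inequality to $(\cdot)^{|I_j|}$, obtaining the per-group factor $(1-\epsilon)^{|I_j|/n}$ that multiplies with the per-sample factors $(1-\epsilon)^{1/n}$ from the $S^{(0)}$-covered points. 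Your $S^{(0)}$ case and the overall product decomposition are fine; the missing ingredients are precisely the treatment of the error difference as one $\mathcal{N}(\bm{0},2\sigma_n^2\bm{I})$ variable and the conditioning-plus-Jensen step that controls the shared-$\xibf_j$ dependence.
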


\emph{Remark:} The shrinkage of coverage radius in \eqref{eq:delta_0}--\eqref{eq:delta_1} is used to compensate for prediction error. Thus, more shrinkage is needed if the prediction error $\sigma_n^2$ increases or the confidence level $1-\epsilon$ increases. As the coverage radius should be non-negative, we have
\begin{align}    
\delta \geq \sqrt{2\sigma_n^2 F^{-1}\left((1-\epsilon)^{1/n};d\right)},
\end{align}
which sets a lower bound on the approximation error that Algorithm~\ref{Alg:predictive coreset} can be guaranteed to achieve with probability $\geq 1-\epsilon$.\looseness=-1 %with confidence at least $1-\epsilon$ according to Theorem~\ref{thm:delta-cover guarantee}. 

In the special case of $n=1$ (i.e., predicting one sample at a time) and $\kappa=\infty$, Algorithm~\ref{Alg:predictive coreset} is reduced to a simple threshold policy: given the next prediction $\hat{P}=\{\hat{\xbf}_1\}$, lines~\ref{sampling:Q - 1}--\ref{sampling:4} are reduced to:
\begin{align}\label{eq:threshold policy}
    \hat{S}^{(1)} = \left\{\begin{array}{ll}
    \{\hat{\xbf}_1\} & \mbox{if }\min_{\xbf\in S^{(0)}}\|\hat{\xbf}_1-\xbf\| > \delta_0,\\
    \emptyset & \mbox{o.w.,}
    \end{array}\right.
\end{align}
i.e., collecting a new sample if and only if its predicted value is not in the $\delta_0$-cover of the existing sample set. Setting $n=1$ and $\kappa=\infty$ in Theorem~\ref{thm:delta-cover guarantee} leads to the following result. % on the configuration of $\delta_0$. 

\begin{corollary}\label{cor:threshold policy}
Under assumption \eqref{assumption:prediction error}, $n=1$, and $\kappa=\infty$, the sample set $S$ collected by the threshold policy specified in \eqref{eq:threshold policy} after processing a predicted sample $\hat{\xbf}$ is a $\delta$-cover of its true value $\xbf$ with probability at least $1-\epsilon$ if \looseness=-1
\begin{align}
    \delta_0 = \delta-\sqrt{\sigma_1^2 F^{-1}(1-\epsilon; d)},
\end{align}
where $\sigma_1^2$ is the MSE in predicting each attribute of $\xbf$.
\end{corollary}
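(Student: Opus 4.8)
The plan is to prove the two claims separately: claim~(1) is purely combinatorial, and claim~(2) reduces to a triangle‑inequality argument layered on a Gaussian tail bound, after which Corollary~\ref{cor:threshold policy} follows by specialization.

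For claim~(1), I would observe that over the course of processing the $T$ windows, Algorithm~\ref{Alg:predictive coreset} assigns each of the $nT+s_0$ points — the $s_0$ initial samples plus the $n$ predicted samples of each of the $T$ windows — to exactly one representative in $S$: a predicted sample is either absorbed into an existing $\xbf_j\in S^{(0)}$ (incrementing $u_j$) or into a newly collected $\xbf_j\in S^{(1)}$ (counted in that $\xbf_j$'s weight via constraint~\eqref{cons:coverage}), and each initial sample represents itself. Hence $\sum_{j:\xbf_j\in S}u_j=nT+s_0$, while the algorithm enforces $1\le u_j\le\kappa$ for every $j$ (new weights are capped by \eqref{cons:weight}, old weights only increase while below $\kappa$). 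It follows that $|S|\le nT+s_0$ and $\sum_j u_j^2\le\kappa\sum_j u_j=\kappa(nT+s_0)$, so $\|\tilde{\ubf}\|^2=\sum_j u_j^2/(nT+s_0)^2\le\kappa/(nT+s_0)\le\kappa/|S|\le\kappa^2/|S|$, giving $\|\tilde{\ubf}\|\le\kappa/\sqrt{|S|}$.

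For claim~(2), I would condition on the predicted set $\hat P=\{\hat\xbf_i\}_{i=1}^n$ of window $T$ and on $S^{(0)}$. Given these, the set $Q$, the selected cover $\hat S^{(1)}$, and the representation map $i\mapsto j_i$ are deterministic (they depend only on predicted values and on the already‑realized samples in $S^{(0)}$); the only remaining randomness is the vector of window‑$T$ prediction errors $\{\xbf_i-\hat\xbf_i\}_{i=1}^n$, which by assumption~\eqref{assumption:prediction error} are conditionally i.i.d.\ $\mathcal N(\bm 0,\sigma_n^2\bm I)$. Writing $r:=\sqrt{\sigma_n^2 F^{-1}((1-\epsilon)^{1/n};d)}$ and $E_i:=\{\|\xbf_i-\hat\xbf_i\|\le r\}$, the fact that $\|\xbf_i-\hat\xbf_i\|^2/\sigma_n^2$ is chi‑squared with $d$ degrees of freedom gives $\Pr(E_i\mid\hat P,S^{(0)})=(1-\epsilon)^{1/n}$, and conditional independence yields $\Pr\big(\bigcap_{i=1}^n E_i\mid\hat P,S^{(0)}\big)=1-\epsilon$. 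On the event $\bigcap_{i=1}^n E_i$ I would then argue deterministically: if $\xbf_i$ is represented by $\xbf_{j_i}\in S^{(0)}$, then $\|\hat\xbf_i-\xbf_{j_i}\|\le\delta_0$ and $\|\xbf_i-\xbf_{j_i}\|\le\|\xbf_i-\hat\xbf_i\|+\delta_0\le r+\delta_0=\delta$ by \eqref{eq:delta_0}; if $\xbf_i$ is represented by some $\xbf_{j_i}\in S^{(1)}$ whose predicted value $\hat\xbf_{j_i}$ covers $\hat\xbf_i$ within $\delta_1$, then, since $j_i$ also indexes a window‑$T$ sample so that $E_{j_i}$ holds, $\|\xbf_i-\xbf_{j_i}\|\le\|\xbf_i-\hat\xbf_i\|+\|\hat\xbf_i-\hat\xbf_{j_i}\|+\|\hat\xbf_{j_i}-\xbf_{j_i}\|\le r+\delta_1+r=\delta$ by \eqref{eq:delta_1}. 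Taking expectation over $\hat P$ and $S^{(0)}$ removes the conditioning and gives probability at least $1-\epsilon$; this uses $\delta_1\ge0$ (equivalently the lower bound on $\delta$ in the following remark) so that a $\delta_1$‑cover of $Q$ by points of $\hat P$ exists. Corollary~\ref{cor:threshold policy} is the case $n=1,\ \kappa=\infty$: then $(1-\epsilon)^{1/n}=1-\epsilon$, only $E_1$ is involved, and either $\hat\xbf$ is collected (so $\xbf\in S$ and the covering distance is $0$) or $\hat\xbf$ lies in the $\delta_0$‑cover of $S^{(0)}$, in which case the first triangle‑inequality case gives a representative within $r+\delta_0=\delta$ on $E_1$.

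I expect the main obstacle to be the bookkeeping in the conditioning argument of claim~(2): one must verify that every combinatorial choice of Algorithm~\ref{Alg:predictive coreset} in window $T$ is measurable with respect to $(\hat P,S^{(0)})$, so that the window‑$T$ Gaussian errors are independent of those choices, and — crucially — recognize that whenever a sample is represented by a \emph{newly collected} sample, both the represented and the representing sample have prediction errors drawn in the \emph{current} window, so the single event $\bigcap_{i=1}^n E_i$ (whose probability is exactly $1-\epsilon$ because each $E_i$ is calibrated to the $(1-\epsilon)^{1/n}$ quantile) simultaneously controls both triangle‑inequality terms. The triangle inequalities and the chi‑squared quantile identity themselves are routine.
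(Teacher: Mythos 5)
Your derivation of the corollary itself is correct and follows the same route as the paper: the paper obtains Corollary~\ref{cor:threshold policy} simply by setting $n=1$ and $\kappa=\infty$ in Theorem~\ref{thm:delta-cover guarantee}, and your specialization — with $n=1$ the quantile calibration gives $\Pr\{\|\xbf-\hat{\xbf}\|\le r\}=1-\epsilon$ exactly for $r=\sqrt{\sigma_1^2F^{-1}(1-\epsilon;d)}$, and either $\hat{\xbf}$ is collected (so $\xbf$ represents itself at distance $0$) or $\hat{\xbf}$ is within $\delta_0$ of $S^{(0)}$ and the triangle inequality gives $\|\xbf-\xbf_{j}\|\le r+\delta_0=\delta$ — is exactly what that specialization amounts to. Your measurability remark (the sampling decision depends only on $(\hat{P},S^{(0)})$, so it is independent of the current window's Gaussian error) matches the conditioning in the paper's proof.

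However, the part of your proposal that re-proves the general theorem contains a genuine error in the $\delta_1$ branch, which you should be aware of even though that branch is vacuous when $n=1$ (a newly collected sample covers itself at distance zero, so only the $\delta_0$ case is ever active in the corollary). You bound $\|\xbf_i-\xbf_{j_i}\|\le\|\xibf_i\|+\delta_1+\|\xibf_{j_i}\|\le 2r+\delta_1$ on the event $\bigcap_i E_i$ and assert this equals $\delta$; but with the paper's choice $\delta_1=\delta-\sqrt{2\sigma_n^2F^{-1}((1-\epsilon)^{1/n};d)}=\delta-\sqrt{2}\,r$, one gets $2r+\delta_1=\delta+(2-\sqrt{2})r>\delta$, so your event-based argument would only work with the more conservative shrinkage $\delta_1=\delta-2r$. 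The paper avoids this loss by bounding the difference $\xibf_i-\xibf_{j_i}\sim\mathcal{N}(\bm{0},2\sigma_n^2\bm{I})$ directly via the chi-squared CDF at scale $2\sigma_n^2$, and then handles the dependence among the events $\{\|\xibf_i-\xibf_j\|\le\delta-\delta_1\}_{i\in I_j}$ (which all share $\xibf_j$ and are therefore not independent, contrary to the single product-of-independent-events picture in your sketch) by conditioning on $\xibf_j$ and applying Jensen's inequality to $(\cdot)^{|I_j|}$. None of this affects the validity of your proof of Corollary~\ref{cor:threshold policy}.
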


\emph{Remark:} Theorem~\ref{thm:delta-cover guarantee} and Corollary~\ref{cor:threshold policy} provide theoretically-justified ways to set the input parameters $\delta_0$, $\delta_1$, and $\kappa$ of Algorithm~\ref{Alg:predictive coreset}. They in turn depend on other parameters such as $\delta$ that is directly related to the approximation of the training loss function through Theorem~\ref{thm:approx error bound}, and will be treated as hyperparameters of the learning task to be tuned to achieve the desired tradeoff between the data curation cost and the quality of the target model. 
%Theorem~\ref{thm:delta-cover guarantee} only guarantees the probability for achieving $\delta$-coverage of the candidate samples in one prediction window, and theoretically the overall probability for the collected samples to achieve $\delta$-coverage of all the candidate samples in $T$ windows may decay exponentially as $(1-\epsilon)^T$. Moreover, even if $\delta$-coverage is achieved, Theorem~\ref{thm:approx error bound} can only guarantee a bounded training loss over the candidate samples, but not the generalization error. Nevertheless, we note that this issue is mainly a limitation of theoretical analysis, and our proposed algorithm has demonstrated good generalization error and reliability in practice (see Section~\ref{sec:Performance Evaluation}). 

%\subsection{Overall Solution}

%\ting{diagram of online and offline phases}

\section{Performance Evaluation}\label{sec:Performance Evaluation}

We evaluate our solution in practical settings based on both public health monitoring data and our own prototype implementation. % based on wearable devices. 

\subsection{Data-driven Simulation}\label{subsec:Data-driven Simulation}

For reproducibility, we first conduct a data-driven simulation based on a publicly available dataset. 

\subsubsection{Evaluation Setting}\label{subsubsec:Evaluation Setting - Simulation}

\textbf{Dataset:}
We use the FitBit Fitness Tracker Data \cite{FitBit} generated by survey respondents via Amazon Mechanical Turk. This dataset contains various physiological metrics including heart rate, step count, calorie consumption, and intensity level, among others. In particular, the intensity level is computed using a proprietary algorithm developed by FitBit, which assigns a value between 0 and 3 to indicate the intensity of physical activity, with 0 indicating the lowest level and 3 the highest level. 
While the raw data are collected at variable intervals, we convert them to evenly-spaced time series by taking the average over one-minute intervals. 
The dataset contains data from different users over non-contiguous time periods, from which we extract a subset of {43,920} data points (each covering one minute) from a single user with contiguous data points in each five-minute window. This window size is chosen to yield a good tradeoff between classification accuracy and the number of windows with contiguous data. We partition the extracted dataset in half, using the first half to train a forecasting model {(with $80\%$ of data for training and $20\%$ for validation)} and the other half to train and test a target model for intensity level classification {(with $90\%$ of data for training and $10\%$ for testing)}.\looseness=-1

We treat heart rate and step count as the input of the target model and intensity level as the output. To evaluate the impact of prediction accuracy, we consider three cases:\begin{enumerate}
\item[1.] both heart rate and step count are measured by body sensors and need to be predicted by the coordinator (subject to prediction errors); 
\item[2.] heart rate is measured by a body sensor, but step count is measured by a local sensor at the coordinator and therefore not subject to prediction errors; 
\item[3.] both heart rate and step count are measured locally at the coordinator\footnote{Equivalently, this models the case when sampling is performed at the FitBit device.} and not subject to prediction errors.
\end{enumerate}
Recall that sampling decisions are supposed to be made by the coordinator.  
Besides evaluating the impact of diminishing prediction errors, these cases also represent the different application scenarios of: sampling from remote data sources without local data (Case~1), sampling from remote data sources with some local data (Case~2), and sampling from local data (Case~3). 
%These cases are used to evaluate the impact of sampling from remote data sources with/without local data. 

%The heart rate data is collected every few seconds and averaged over one-minute intervals. Similarly, the step count data is aggregated over one-minute intervals. The intensity level is computed using the algorithm developed by FitBit, which assigns a value between 0 and 3 to each minute based on the intensity of the physical activity. Specifically, a value of 0 indicates a low level of movement, while a value of 3 corresponds to the most intense exercise. The dataset under consideration consists of 33 users. However, for the purposes of our study, we restrict our analysis to a single user. To this end, we join the heart rate, step count, and intensity tables using the unique user identifier 6962181067.

\textbf{Forecasting model:}
For time series forecasting, we utilize a two-layer Long Short-Term Memory (LSTM) model with a hidden dimension of 64. The input comprises the (collected or predicted) heart rate and step count data within a five-minute window, and the output is the prediction for the next five-minute window, both at one-minute granularity. In Case~1, both heart rate and step count are predicted; in Case~2, only heart rate is predicted (no prediction needed in Case~3). To train the model, we employ the squared error loss function and the Adam algorithm \cite{Kingma15ICLR} with a learning rate of 0.01. Note that our focus is \emph{not} on time series forecasting and other forecasting models can be used as well. 

%\subsubsection{Data pre-processing}
%We employ a sliding window approach to bundle the contiguous data points into windows/sequences. Specifically, we choose a window size of five units and shift the window by five units. As each data point for heart rate, step count, and intensity corresponds to a one-minute interval, each window spans a five-minute period. To ensure that the data within each window is contiguous in time, we exclude any intervals for which heart rate data is missing. After data processing, each window/sequence can be seen as a single data point for the input to model. We partition the dataset in half, using one half to train our forecasting model and the other half for intensity classification.

\textbf{Target model:} % for intensity classification}
To classify the intensity level, we employ a fully-connected feedforward neural network with two hidden layers of rectified linear units (ReLUs), an output layer of softmax unit, and batch normalization. The model takes as input the heart rate and step count data over a five-minute window, and outputs the estimated intensity level in the last minute as the label. We train the model on data selected by an active learning algorithm (after labeling) using the Adam algorithm \cite{Kingma15ICLR} based on cross-entropy loss and a learning rate of 0.01. 
%Both layers of the model utilize the rectified linear unit (ReLU) activation function and batch normalization. 
As sampling can lead to a small training set and hence increased risk of overfitting, we incorporate dropout with a probability of 0.2 into the second hidden layer. %The objective function is cross-entropy, and we train the model using the Adam optimization algorithm with a learning rate of 0.01.
Moreover, we vary the target model size to account for varying amounts of training data: if the training set contains more than 500 data points, we will use a target model with 16 neurons in the first layer and 8 neurons in the second layer, trained over {200} epochs; if the training set contains fewer than 500 data points, we will use a  target model with 4 neurons in each of the hidden layers, trained over 500 epochs. We find the above target models to perform sufficiently well (with $99\%$ accuracy over a wide range of sampling ratios). However, they are just examples to evaluate the active learning algorithms and other target models can also be used. \looseness=-1 % our focus is not on 

\textbf{Benchmarks:}
For active learning without the ability to query labels in real time, only our solution and the solution from Sener et al. \cite{Sener18ICLR} are applicable. 
We treat each minute as a sampling epoch. Since the forecasting model predicts for five minutes at a time, it implies $n=5$ in Algorithm~\ref{Alg:predictive coreset}. Specifically, based on the next five predicted data points, Algorithm~\ref{Alg:predictive coreset} selects a subset of the predicted data points into $\hat{S}^{(1)}$, %each representing a one-minute interval, 
and only collects the data at the one-minute intervals represented in $\hat{S}^{(1)}$. 
%the predictive model first generates forecasts for the next five data points, from which a subset of the data is sampled by finding the minimum $\delta_1$-cover. 
After the data collection, we label each sampled data point with its corresponding intensity level, and concatenate it with the previous four data points to create one training sample for the target model. If some of the previous four data points are not sampled, we will replace them by their predicted values.  
\rev{In WBAN, the critical devices to preserve energy for are the body sensors, as the coordinator can be easily recharged. Since the sensors are passively polled by the coordinator when it decides to collect a sample, the energy consumption at sensors is proportional to the sampling ratio.}
We set $\kappa=\infty$ and tune $(\delta_0, \delta_1)$ to achieve various sampling ratios for Algorithm~\ref{Alg:predictive coreset}. We then set the same sampling ratios for \cite{Sener18ICLR} (``Sener'') for comparison. As a baseline, we also evaluate the method of uniformly selecting samples at random (``Random'') and show its average results over 10 Monte Carlo runs. 
We have also evaluated different values of $\kappa$ and shown the results in \if\thisismainpaper1
\cite{Chiu24:report}. 
\else
Appendix~\ref{appendix:Additional Evaluation}. 
\fi
\looseness=-1
%\begin{enumerate}[i)]
%\item \emph{Our method}: 
%\item \emph{Sener's method}:
%\end{enumerate}

\subsubsection{Evaluation Results}

\textbf{Results of forecasting:}
Fig.~\ref{fig:forecasting} shows the accuracy of the adopted forecasting model in each of the simulated cases, based on true sensor measurements as the input. Note that only the heart rate data need to be predicted in Case~2, and no prediction is needed in Case~3. 
This result shows that with a properly-selected forecasting model, one can meaningfully predict health monitoring data in a sufficiently near future (of five minutes here), although with non-negligible prediction errors. 
In our case, the prediction error is less for heart rate than step count: the normalized root mean squared error (NRMSE) for heart rate prediction is {$0.0855$} in Case~1 and {$0.0839$} in Case~2, and the NRMSE for step count prediction (in Case~1) is {$1.9339$}.  
Note that this is just an initial test of predictability. During active learning, we will replace any missing data in the forecasting model's input by their predicted values, and evaluate the consequence of error propagation through its impact on active learning and target model training. \looseness=-1

\begin{figure}[t!]%{.75\linewidth}
\vspace{0em}
\begin{minipage}{.495\linewidth}
\centerline{\mbox{\includegraphics[width=1.0\linewidth,height=\figheight]
{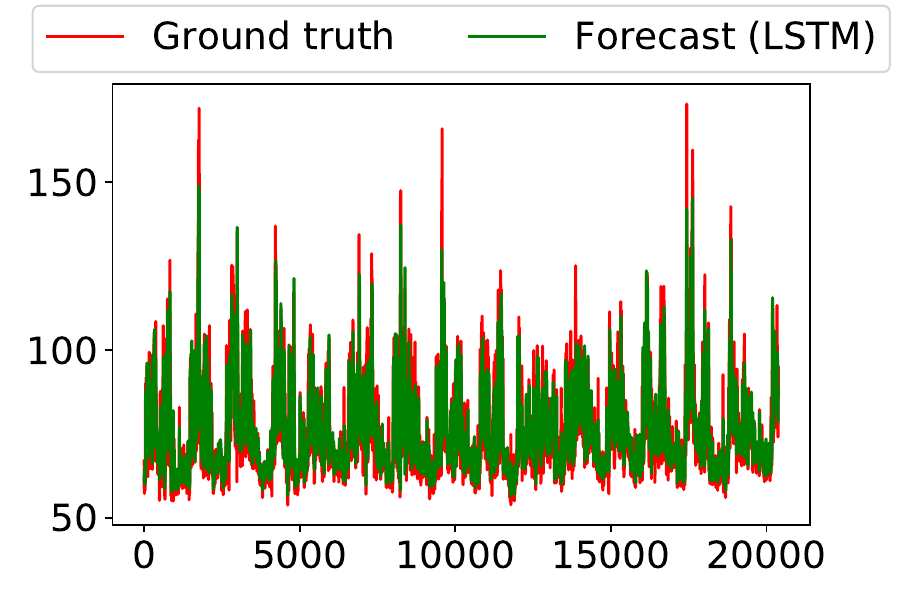}}}
\vspace{-.5em}
\centerline{(a) Case 1: heart rate}
\end{minipage}\hfill
\begin{minipage}{.495\linewidth}
    \centerline{\mbox{\includegraphics[width=1.0\linewidth,height=\figheight]
{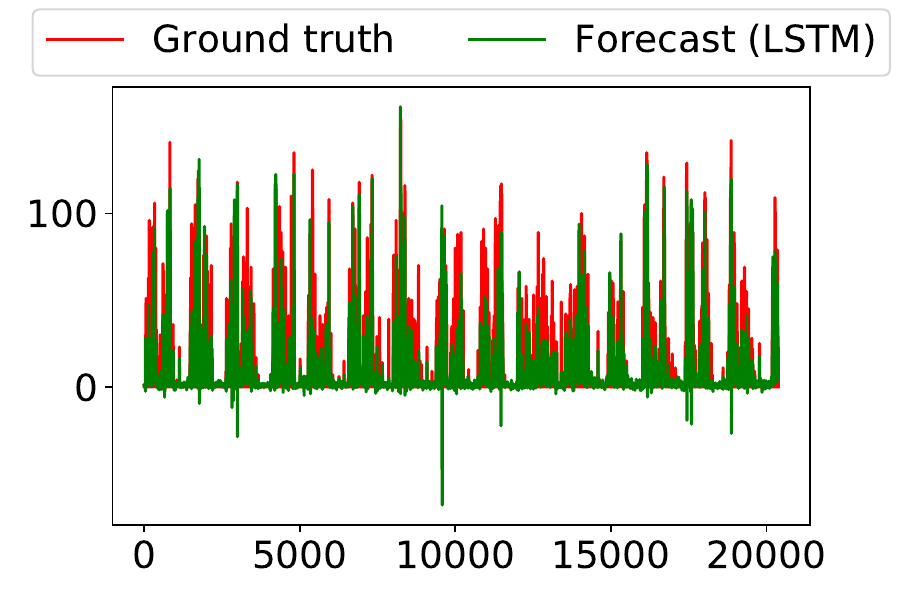}}}
\vspace{-.5em}
\centerline{(b) Case 1: step count}
\end{minipage}\\
\vspace{-.0em}
\centerline{\begin{minipage}{.495\linewidth}
\centerline{\mbox{\includegraphics[width=1.0\linewidth,height=\figheight]
{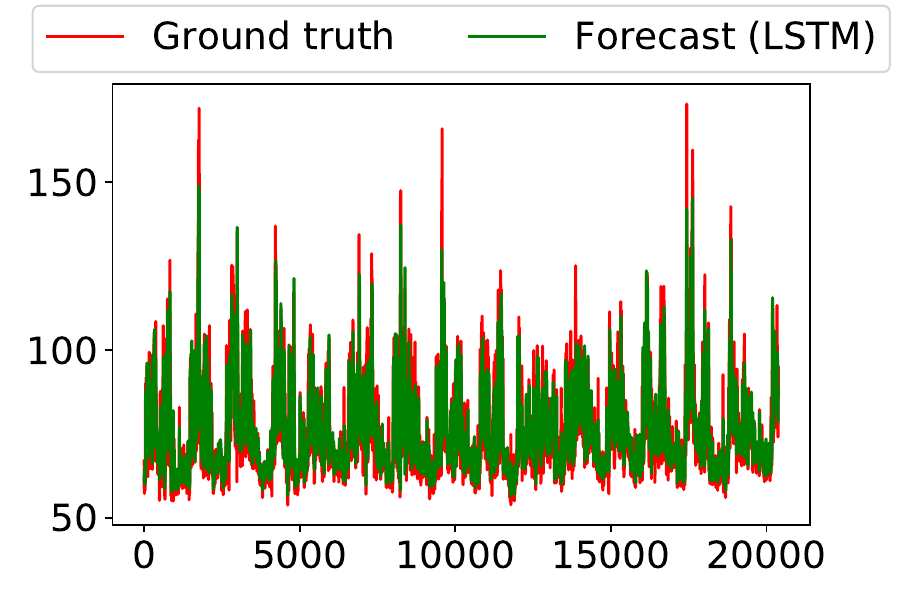}}}
\vspace{-.5em}
\centerline{(c) Case 2: heart rate}
\end{minipage}}
\vspace{-1em}
\caption{\small Results of time series forecasting on public data. }\label{fig:forecasting}
\vspace{-.5em}
\end{figure}

\textbf{Results of sampling:}
Fig.~\ref{fig:sample distribution: case 1: n=5}--\ref{fig:sample distribution: case 3: n=5} show the distribution of the selected samples across the intensity levels (in $\log$ scale). Before sampling, the dataset is highly skewed with {$75.6\%$, $21.7\%$, $1.3\%$, $1.4\%$ samples at intensity level $0$, $1$, $2$, $3$, respectively}. Such skewness is directly inherited by random sampling, and also causes significant skewness for Sener's method \cite{Sener18ICLR} as it selects the same number of samples from every prediction window. In contrast, our algorithm notably reduces the skewness by intentionally selecting the samples sufficiently distinct from each other, resulting in a more balanced dataset, even if it does not know the labels.  
\rev{While our algorithm has shown a class balancing effect, it is not to be confused with 
\if\thisismainpaper1
class balancing techniques~\cite{Cui19CVPR}, 
\else
class balancing techniques~\cite{Cui19CVPR}, 
\fi
which require the samples to be labeled. In contrast, our algorithm works on unlabeled samples, and its class balancing effect is a byproduct of our coreset-based approach.}

\begin{figure}[t!]%{.75\linewidth}
\vspace{-.0em}
\begin{minipage}{.495\linewidth}
\centerline{\mbox{\includegraphics[width=1.0\linewidth,height=\figheight]
{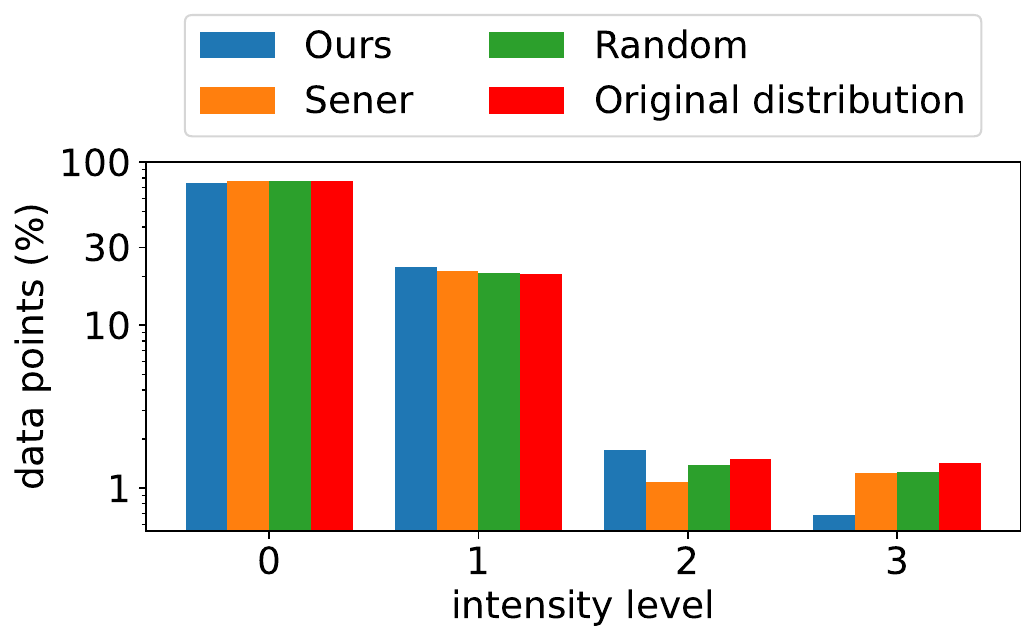}}}
\vspace{-.5em}
\centerline{(a) $2\%$ sampling}
\end{minipage}\hfill
\begin{minipage}{.495\linewidth}
\centerline{\mbox{\includegraphics[width=1.0\linewidth,height=\figheight]
{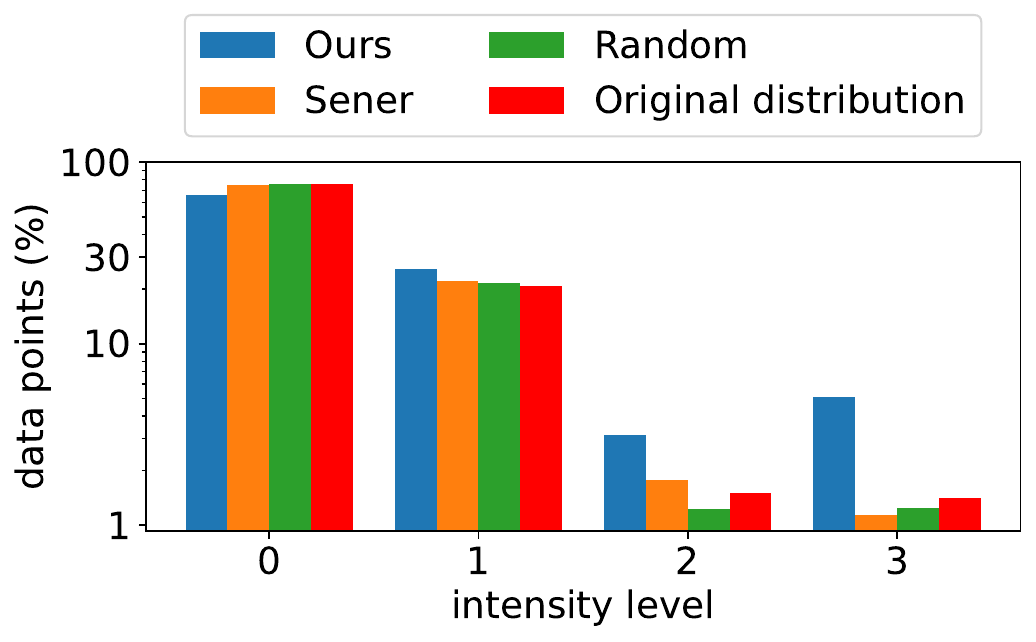}}}
\vspace{-.5em}
\centerline{(b) $3\%$ sampling}
\end{minipage}
\vspace{-1em}
\caption{\small { Distribution of selected samples in Case~1.}}\label{fig:sample distribution: case 1: n=5}
\end{figure}

\begin{figure}[t!]%{.75\linewidth}
\vspace{.0em}
\begin{minipage}{.495\linewidth}
\centerline{\mbox{\includegraphics[width=1.0\linewidth,height=\figheight]
{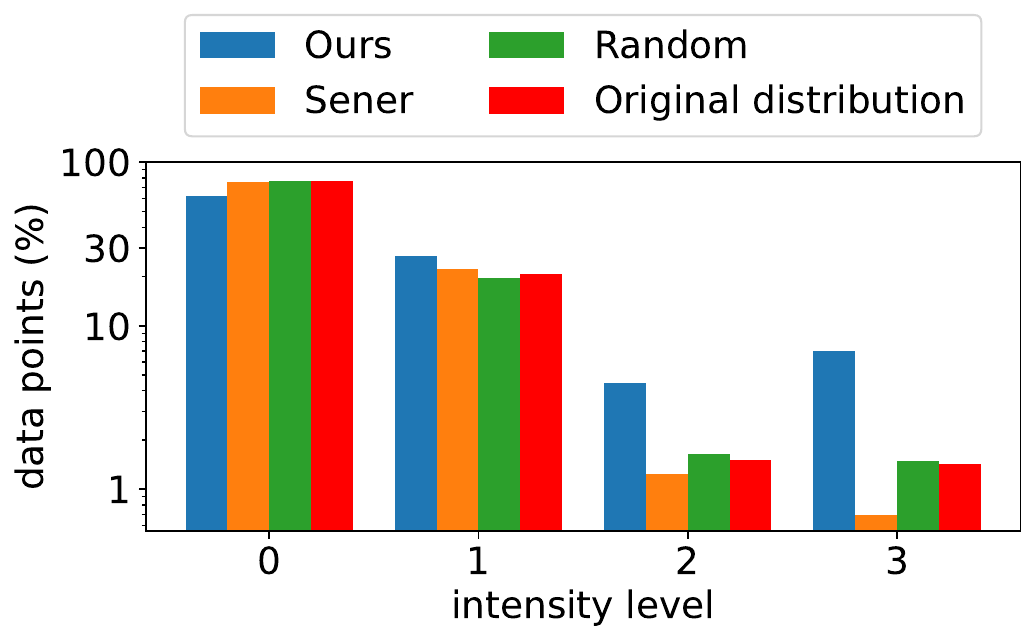}}}
\vspace{-.5em}
\centerline{(a) $1\%$ sampling}
\end{minipage}\hfill
\begin{minipage}{.495\linewidth}
\centerline{\mbox{\includegraphics[width=1.0\linewidth,height=\figheight]
{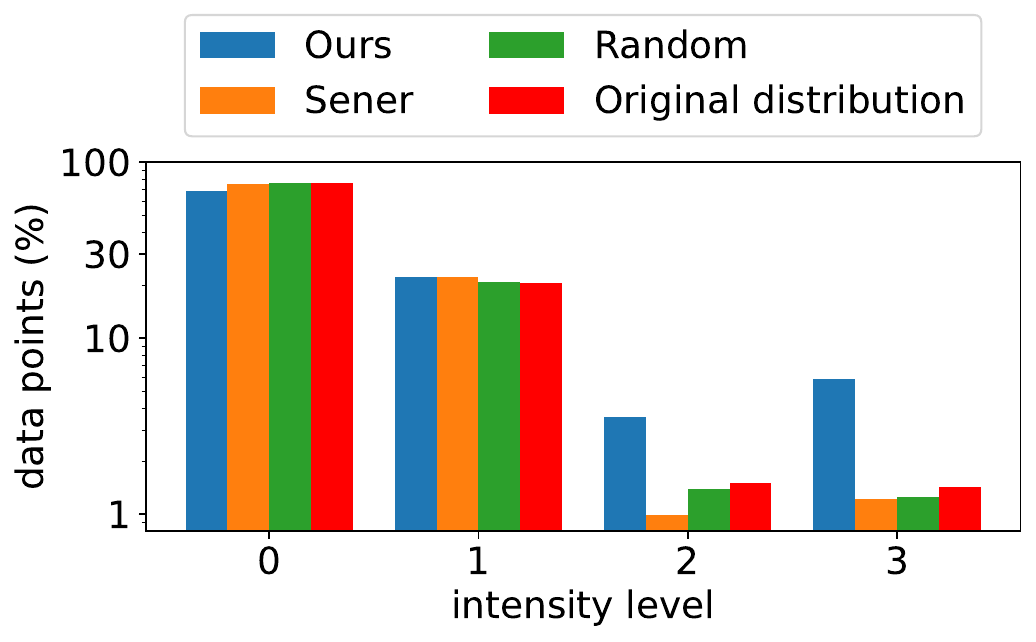}}}
\vspace{-.5em}
\centerline{(b) $2\%$ sampling}
\end{minipage}
\vspace{-1em}
\caption{\small { Distribution of selected samples in Case~2. }}\label{fig:sample distribution: case 2: n=5}
\end{figure}

\begin{figure}[t!]%{.75\linewidth}
\vspace{0em}
\begin{minipage}{.495\linewidth}
\centerline{\mbox{\includegraphics[width=1.0\linewidth,height=\figheight]
{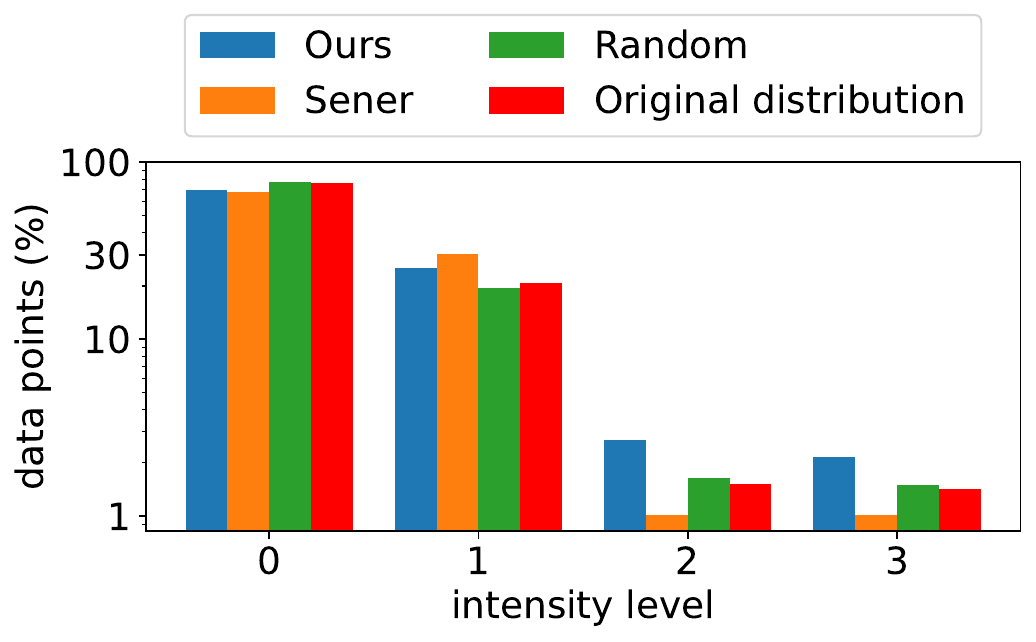}}}
\vspace{-.5em}
\centerline{(a) $1\%$ sampling}
\end{minipage}\hfill
\begin{minipage}{.495\linewidth}
\centerline{\mbox{\includegraphics[width=1.0\linewidth,height=\figheight]
{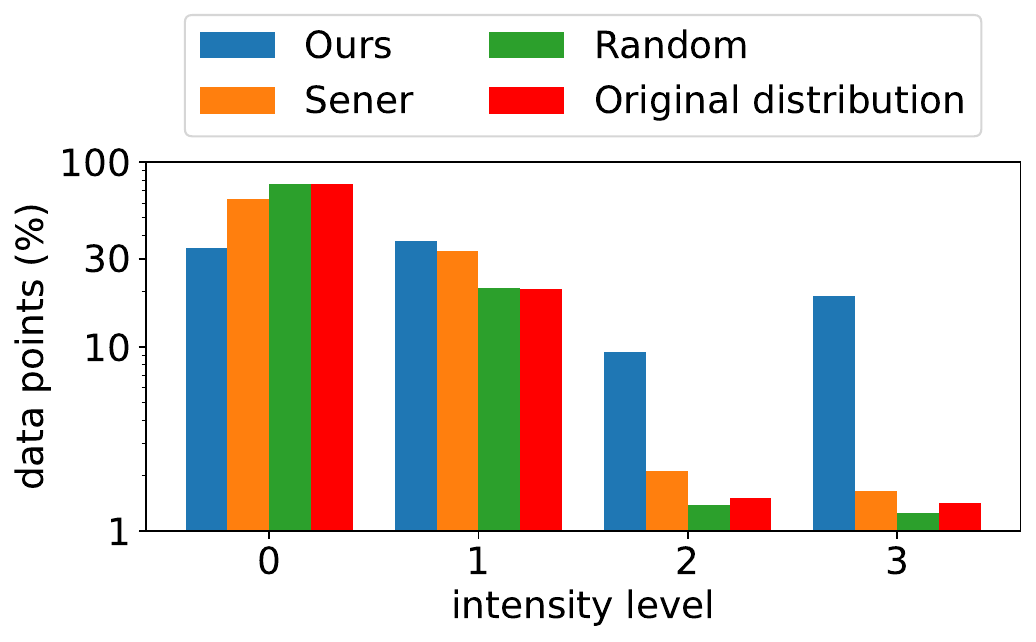}}}
\vspace{-.5em}
\centerline{(b) $2\%$ sampling}
\end{minipage}
\vspace{-1em}
\caption{\small { Distribution of selected samples in Case~3. } }\label{fig:sample distribution: case 3: n=5}
\end{figure}

\textbf{Quality of trained model:}
Fig.~\ref{fig:target model: case 1: n=5}--\ref{fig:target model: case 3: n=5} show the performance of the target model trained on the samples selected by each active learning algorithm (after labeling). We evaluate the quality of the target model by two metrics: the classification accuracy over all the testing samples, and the F1 score in correctly identifying the samples of each intensity level, averaged across all the levels. We have used the average F1 score to choose the initial model parameters among 10 different initial values. %Specifically, the model exhibiting the highest average F1 score was selected among 10 different initial values. It is 
We consider the average F1 score because the testing set is also skewed with more than {$75\%$} of samples at intensity level 0, and thus a classifier can achieve good accuracy without being able to identify all the intensity levels. Indeed, our results confirm that our algorithm can produce a target model with a much higher average F1 score than the benchmarks (particularly at low sampling ratios) even though the accuracy is similar, thanks to its ability of balancing the sample distribution as shown in Fig.~\ref{fig:sample distribution: case 1: n=5}--\ref{fig:sample distribution: case 3: n=5}. In particular, our algorithm can achieve almost the same performance as training the target model on the full dataset, by only collecting and labeling $1$--$2\%$ of the data. Meanwhile, comparing Fig.~\ref{fig:target model: case 1: n=5}, Fig.~\ref{fig:target model: case 2: n=5}, and Fig.~\ref{fig:target model: case 3: n=5} shows a slight improvement in the target model trained by our algorithm when the prediction accuracy is improved. % (e.g., by exploiting local data).  

\begin{figure}[t!]%{.75\linewidth}
\vspace{-.0em}
\begin{minipage}{.495\linewidth}
\centerline{\mbox{\includegraphics[width=1.0\linewidth,height=\figheight]
{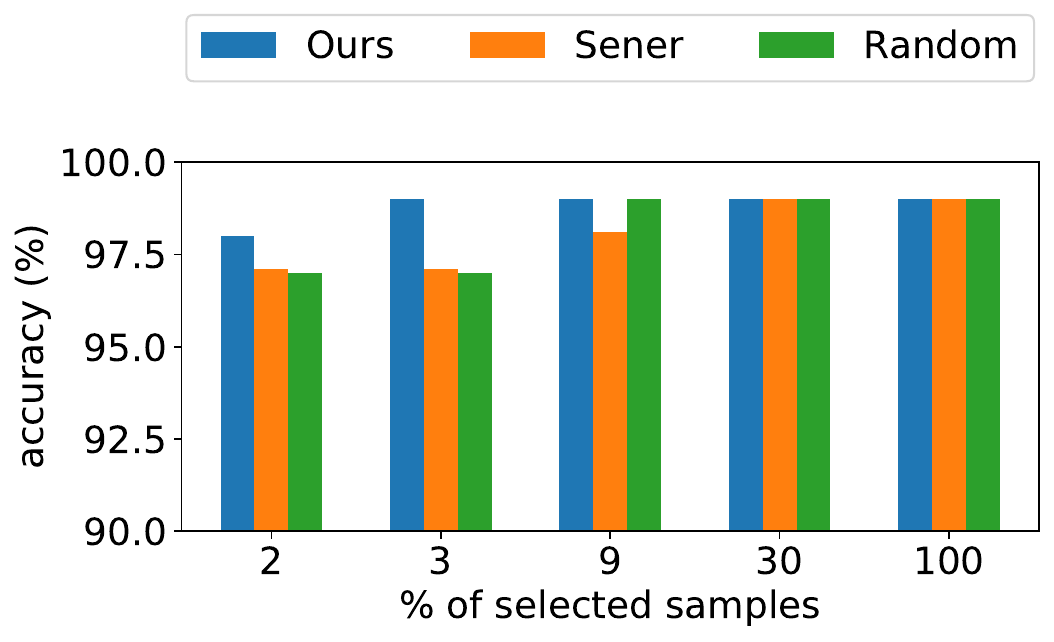}}}
\vspace{-.5em}
\centerline{(a) accuracy}
\end{minipage}\hfill
\begin{minipage}{.495\linewidth}
\centerline{\mbox{\includegraphics[width=1.0\linewidth,height=\figheight]
{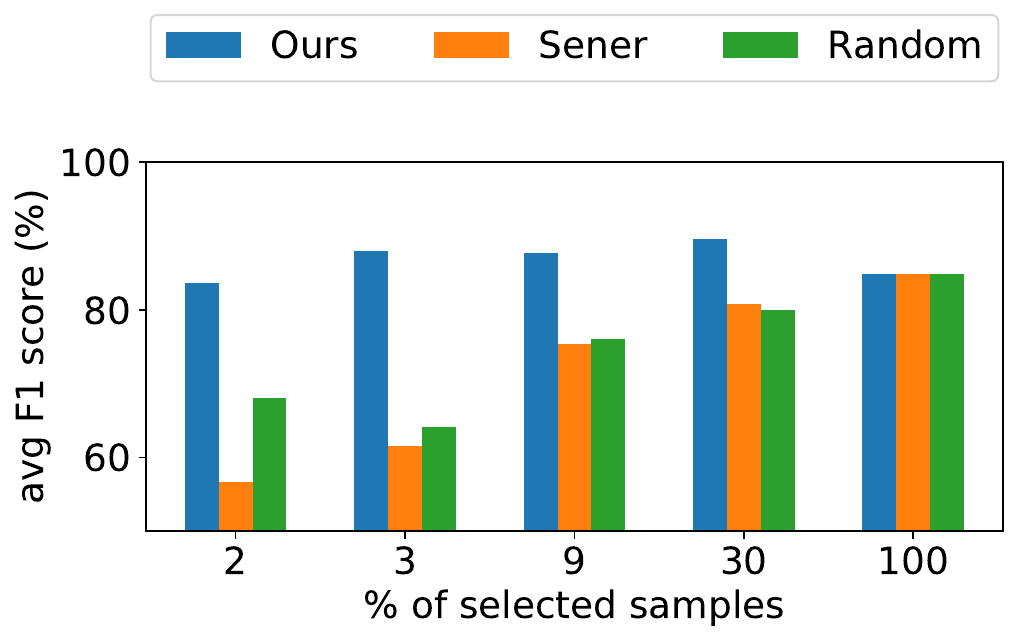}}}
\vspace{-.5em}
\centerline{(b) average F1 score}
\end{minipage}
\vspace{-1em}
\caption{\small { Quality of trained target model in Case~1. 
}}\label{fig:target model: case 1: n=5}
\end{figure}

\begin{figure}[t!]%{.75\linewidth}
\vspace{-.0em}
\begin{minipage}{.495\linewidth}
\centerline{\mbox{\includegraphics[width=1.0\linewidth,height=\figheight]
{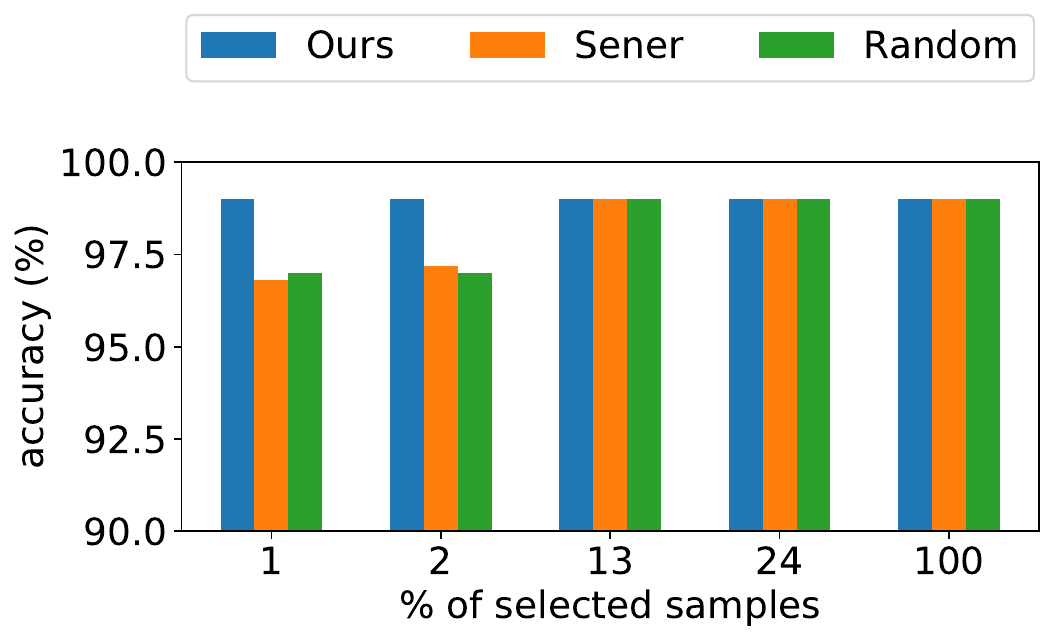}}}
\vspace{-.5em}
\centerline{(a) accuracy}
\end{minipage}\hfill
\begin{minipage}{.495\linewidth}
\centerline{\mbox{\includegraphics[width=1.0\linewidth,height=\figheight]
{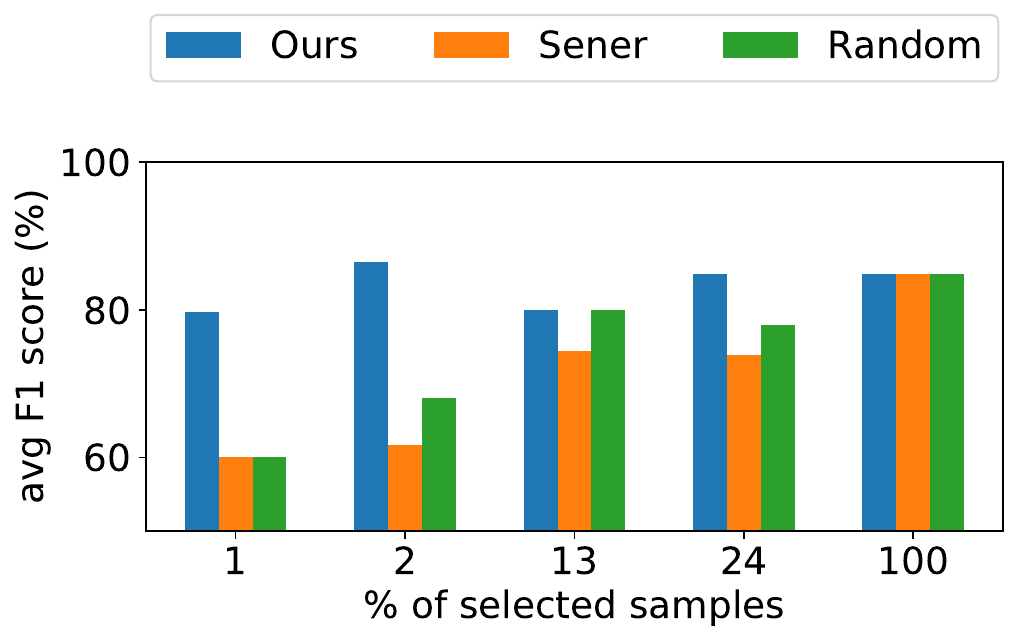}}}
\vspace{-.5em}
\centerline{(b) average F1 score}
\end{minipage}
\vspace{-1em}
\caption{\small { Quality of trained target model in Case~2. }}\label{fig:target model: case 2: n=5}
\end{figure}

\begin{figure}[t!]%{.75\linewidth}
\vspace{-.0em}
\begin{minipage}{.495\linewidth}
\centerline{\mbox{\includegraphics[width=1.0\linewidth,height=\figheight]
{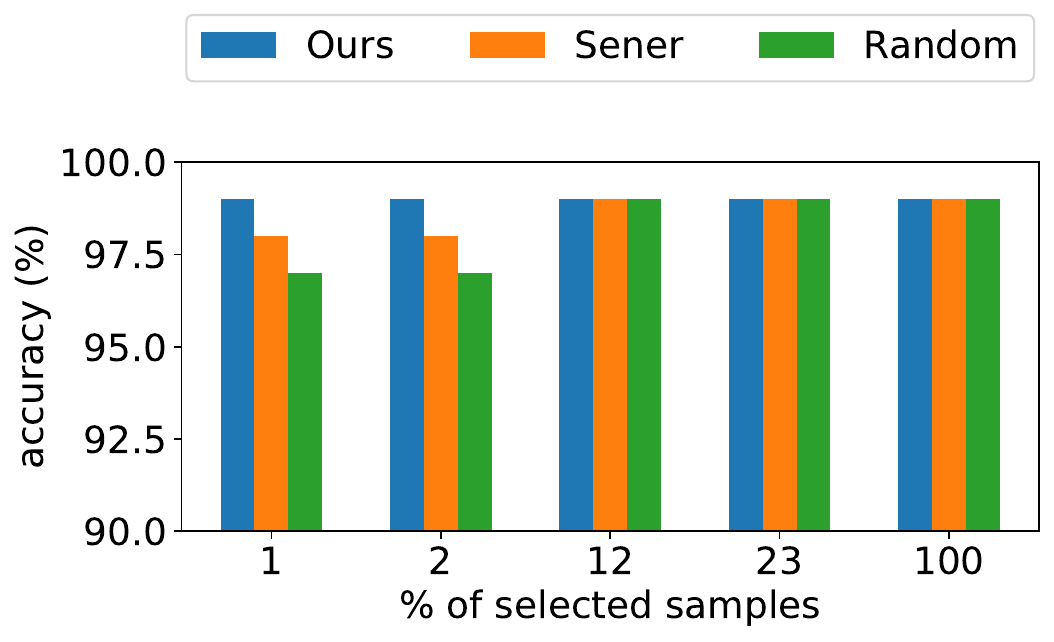}}}
\vspace{-.5em}
\centerline{(a) accuracy}
\end{minipage}\hfill
\begin{minipage}{.495\linewidth}
\centerline{\mbox{\includegraphics[width=1.0\linewidth,height=\figheight]
{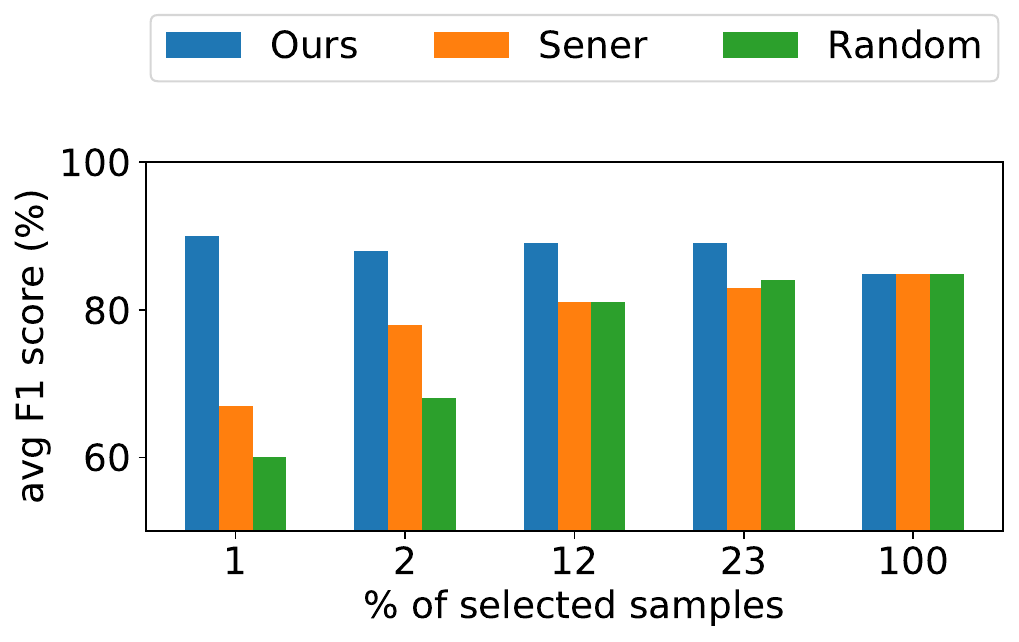}}}
\vspace{-.5em}
\centerline{(b) average F1 score}
\end{minipage}
\vspace{-1em}
\caption{\small {Quality of trained target model in Case~3. }}\label{fig:target model: case 3: n=5}
\end{figure}

\subsection{Prototype and Experimentation}\label{subsec:Prototype and Experimentation}

In addition to the public dataset, we also experiment on our own data collected through a prototype implementation. 

\subsubsection{Prototype Implementation}

\begin{figure}[!t]
   \centerline{\includegraphics[width=.9\linewidth]{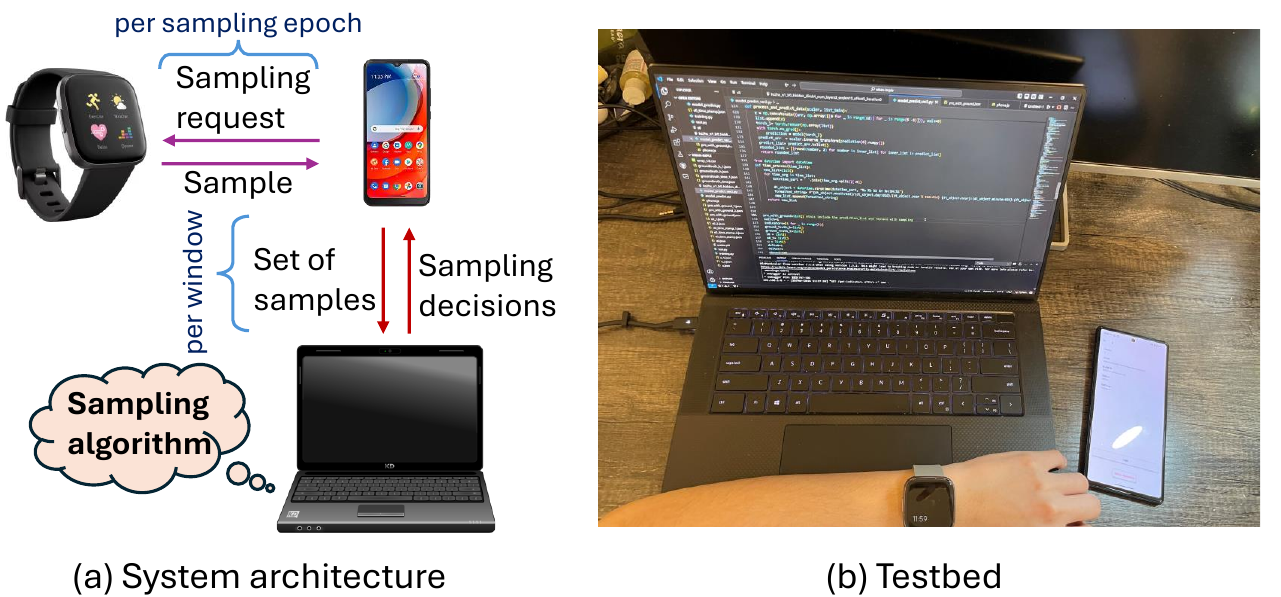}}
   \vspace{-1em}
    \caption{Prototype implementation. }
    \label{fig:prototype}
    \vspace{-.5em}
\end{figure}

We implement a prototype for a similar application scenario as in Section~\ref{subsec:Data-driven Simulation}, using a Google Pixel~6 as the coordinator and a FitBit Versa~2 as the sensing device. We use a workstation running PyTorch as the server for training and testing the forecasting model and the target model\footnote{The workstation also runs Flask for communication with the phone.}. As running the trained forecasting model requires  PyTorch runtime, which currently does not have a stable release for Andoid~\cite{PyTorchMobile}, we implement a workaround by running the forecasting model and the proposed sampling algorithm on the server, which results in the system architecture in Fig.~\ref{fig:prototype}. Our code is available on GitHub~\cite{WBANcode}. 

At the beginning of every prediction window of 5 minutes ($n=5$), the server runs Algorithm~\ref{Alg:predictive coreset} to select sampling epochs (each of 1 minute) in this window and sends the decision to the phone via WiFi. The phone then sends a sampling request to FitBit at the beginning of each selected sampling epoch via Bluetooth. When requested, FitBit reports the collected sample at the end of the epoch, which is then buffered at the phone and sent back to the server together with other collected samples at the end of the window. In real deployment, the forecasting model and the sampling algorithm should run on the phone, which only uploads the collected data for labeling and training sporadically (e.g., once per day). Nevertheless, our prototype will yield the same training results as the real deployment and we leave a deployable implementation to future work.  \looseness=-1

\subsubsection{Experiment Setting}

We use a setting similar to Section~\ref{subsubsec:Evaluation Setting - Simulation}. 
Specifically, we collect three types of data using FitBit API: heart rate every 5 seconds, step count every minute, and intensity level (0--3) every minute, where heart rate and step count are treated as input features of the target model and intensity level as the output. 
We collect a total of {14708} data points, each accounting for one minute, over the course of {17} days. We use %10 day 
{9555} of the data points to train the forecasting model (with {$80\%$} for training and {$20\%$} for validation) and the rest to train and test the target model (with $90\%$ for training and $10\%$ for testing). 
We use the same type of forecasting model as in Section~\ref{subsubsec:Evaluation Setting - Simulation}, which is trained to predict both heart rate (at 5-second granularity) and step count (at 1-minute granularity) for the next 5 minutes based on their values in the previous 5 minutes (as in Case~1). 
The target model is also of the same type as that in Section~\ref{subsubsec:Evaluation Setting - Simulation}, except that it uses the 12 heart rate measurements and the step count over one minute as input to estimate the intensity level in the same minute.  
Based on these, we evaluate the same set of sampling algorithms as in Section~\ref{subsubsec:Evaluation Setting - Simulation}. \looseness=-1

\subsubsection{Experiment Results}

\begin{figure}[t!]%{.75\linewidth}
\vspace{0em}
\begin{minipage}{.495\linewidth}
\centerline{\mbox{\includegraphics[width=1.0\linewidth,height=\figheight]
{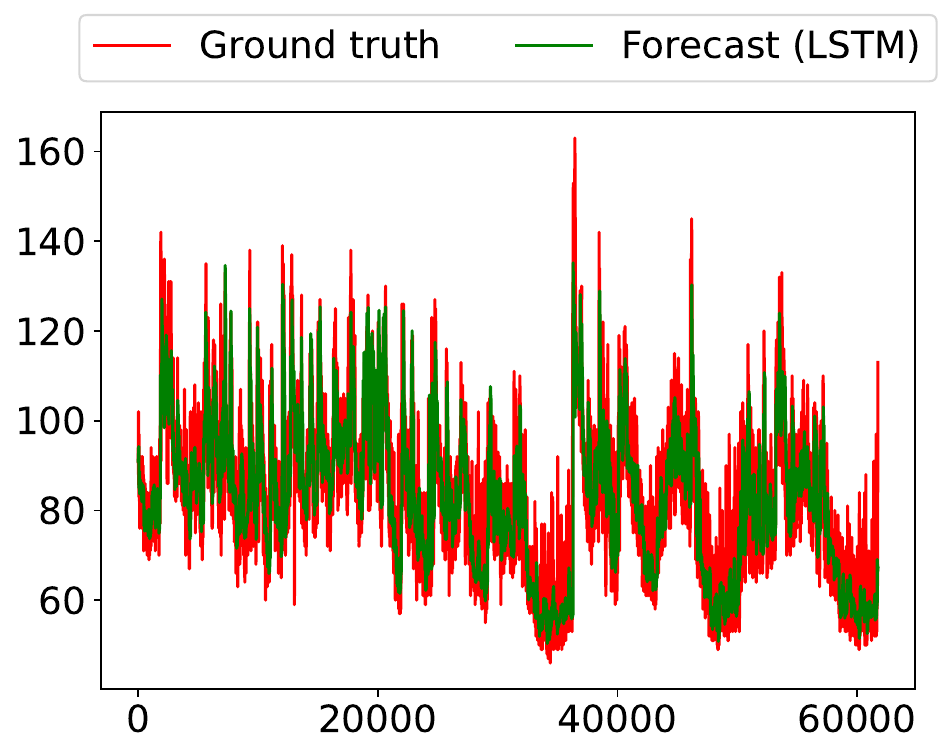}}}
\vspace{-.5em}
\centerline{(a) heart rate}
\end{minipage}\hfill
\begin{minipage}{.495\linewidth}
    \centerline{\mbox{\includegraphics[width=1.0\linewidth,height=\figheight]
{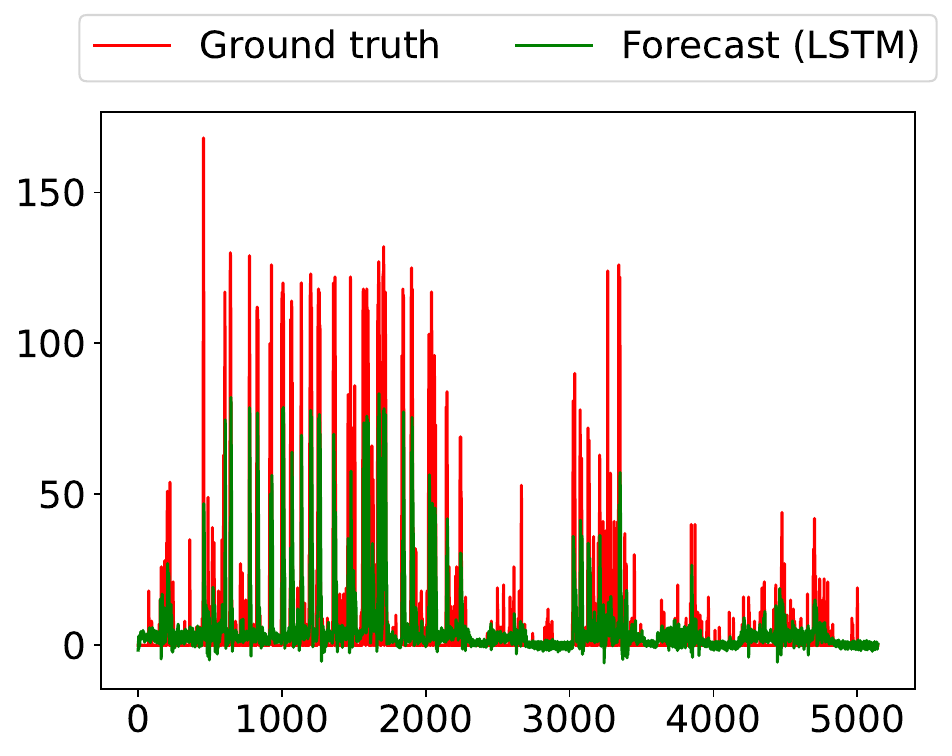}}}
\vspace{-.5em}
\centerline{(b) step count}
\end{minipage}
\vspace{-1em}
\caption{\small Results of time series forecasting on our data ($n=5$). }\label{fig:forecasting: case 4: n=5}
\vspace{-.5em}
\end{figure}

\begin{figure}[t!]%{.75\linewidth}
\vspace{-.0em}
\begin{minipage}{.495\linewidth}
\centerline{\mbox{\includegraphics[width=1.0\linewidth,height=\figheight]
{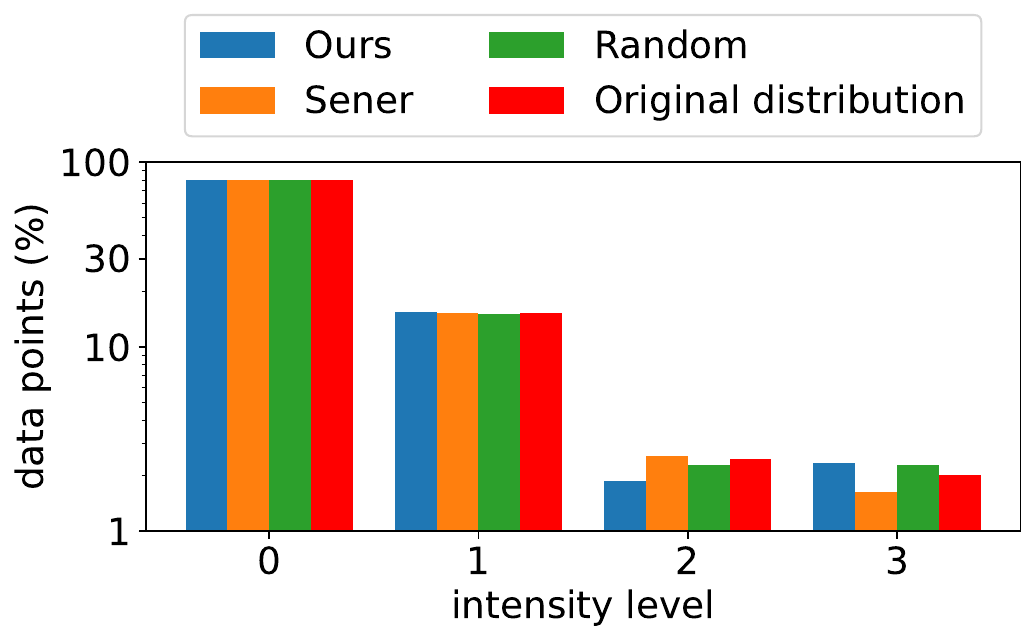}}}
\vspace{-.5em}
\centerline{(a) $4\%$ sampling}
\end{minipage}\hfill
\begin{minipage}{.495\linewidth}
\centerline{\mbox{\includegraphics[width=1.0\linewidth,height=\figheight]
{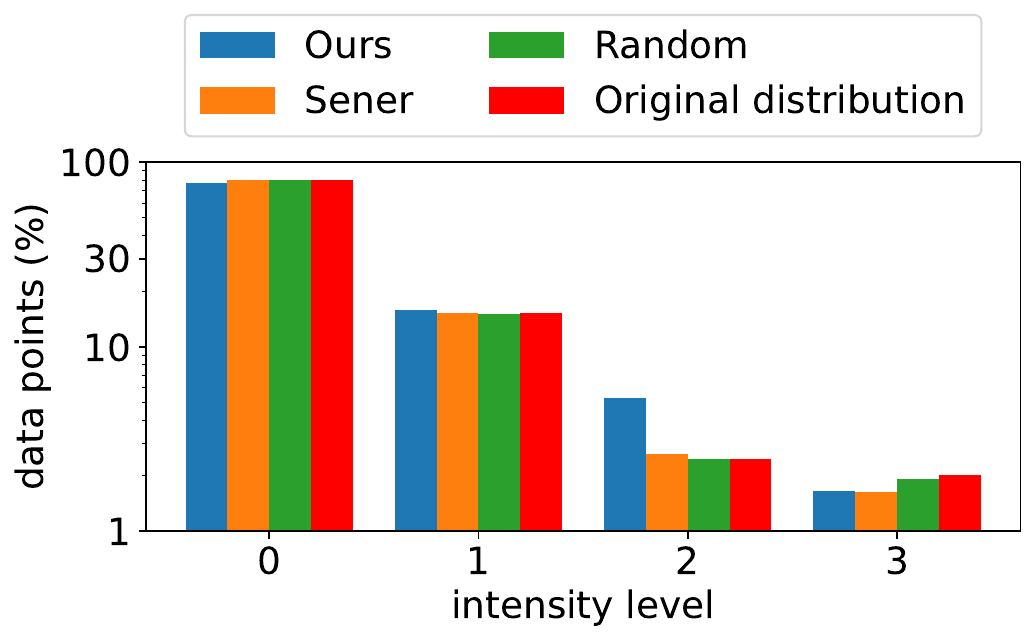}}}
\vspace{-.5em}
\centerline{(b) $8\%$ sampling}
\end{minipage}
\vspace{-1em}
\caption{\small { Distribution of selected samples on our data ($n=5$). } }\label{fig:sample distribution: case 4: n=5}
\vspace{-.25em}
\end{figure}

\begin{figure}[t!]%{.75\linewidth}
\vspace{-.0em}
\begin{minipage}{.495\linewidth}
\centerline{\mbox{\includegraphics[width=1.0\linewidth,height=\figheight]
{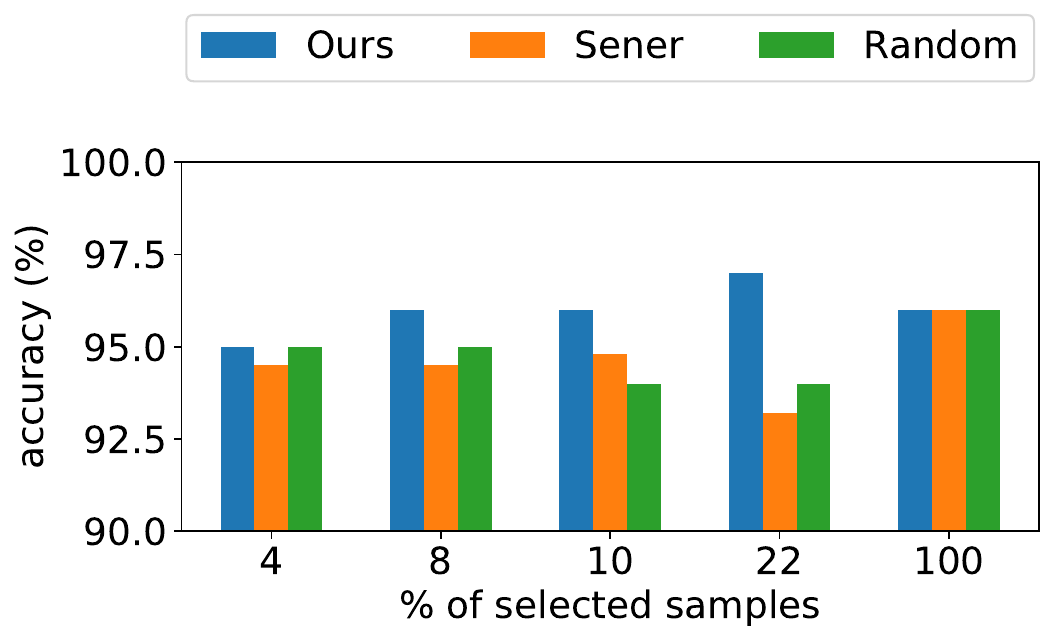}}}
\vspace{-.5em}
\centerline{(a) accuracy}
\end{minipage}\hfill
\begin{minipage}{.495\linewidth}
\centerline{\mbox{\includegraphics[width=1.0\linewidth,height=\figheight]
{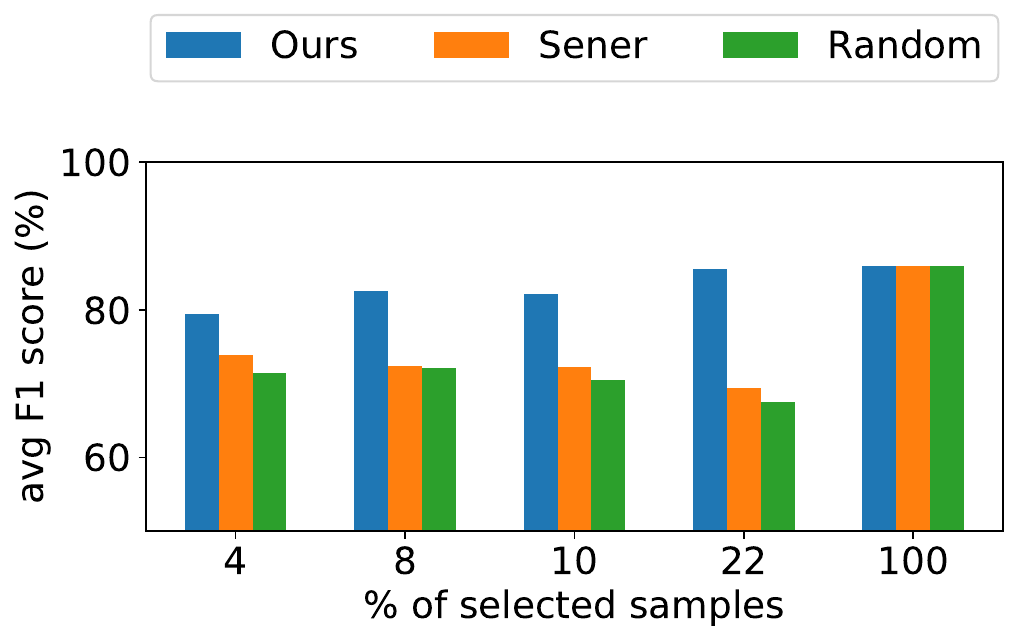}}}
\vspace{-.5em}
\centerline{(b) average F1 score}
\end{minipage}
\vspace{-1em}
\caption{\small {Quality of trained target model on our data ($n=5$). }}\label{fig:target model: case 4: n=5}
\end{figure}

\textbf{Results of forecasting:}
Fig.~\ref{fig:forecasting: case 4: n=5} shows the outputs of the trained forecasting model based on the true measurements. Note that the heart rate measurements are 5 seconds apart, and thus there are 12 times more heart rate measurements than step counts for the same measurement period. The results are consistent with those obtained from the public data, i.e., we can meaningfully predict health monitoring data in a sufficiently near future. The prediction error on our data is higher than that on the public data: the NRMSE is {$0.1052$} for heart rate and {$2.7212$} for step count. Nevertheless, we will show below that the noisy predictions still allow our algorithm to achieve notable performance improvement despite the increased error. % through its impact on sampling and training. 

\textbf{Results of sampling:} Fig.~\ref{fig:sample distribution: case 4: n=5} shows the distribution of the selected samples across different intensity levels in our dataset. Before sampling, our dataset is also skewed with a similar distribution as the public data. This skewness is inherited by random sampling and Sener's method \cite{Sener18ICLR}, but our algorithm can reduce the skewness when the sampling ratio is sufficiently high.

\textbf{Quality of trained model:} Fig.~\ref{fig:target model: case 4: n=5} shows the performance of the trained target model in terms of the classification accuracy and the average F1 score. Compared with Fig.~\ref{fig:target model: case 1: n=5}, Fig.~\ref{fig:target model: case 4: n=5} confirms that our algorithm can outperform the benchmarks and substantially reduce the data collection and labeling cost (by $92\%$) without notably degrading the quality of the trained model. 
Meanwhile, the sampling ratio required to maintain the same model quality as training on the full data has increased from around $2\%$ in Fig.~\ref{fig:target model: case 1: n=5} to around $8\%$ in Fig.~\ref{fig:target model: case 4: n=5}. 
%Meanwhile, the tests on our data show a new phenomenon: the training data selected by our algorithm can yield an even better target model  than that trained on the full data, particularly in terms of average F1 score. 
%
This is because our dataset is smaller, only containing 4,653 minutes of data for training the target model as opposed to nearly 20,000 minutes as in the public dataset, and thus the sampling ratio for our dataset has to increase accordingly to yield the same amount of training data. 
Comparing Fig.~\ref{fig:sample distribution: case 4: n=5}--\ref{fig:target model: case 4: n=5} under $4\%$ sampling also shows that our improvement is not just from correcting data skewness, as our algorithm can outperform random sampling even when its sampled data are no less skewed. 

\begin{figure}[t!]%{.75\linewidth}
\vspace{0em}
\begin{minipage}{.495\linewidth}
\centerline{\mbox{\includegraphics[width=1.0\linewidth,height=\figheight]
{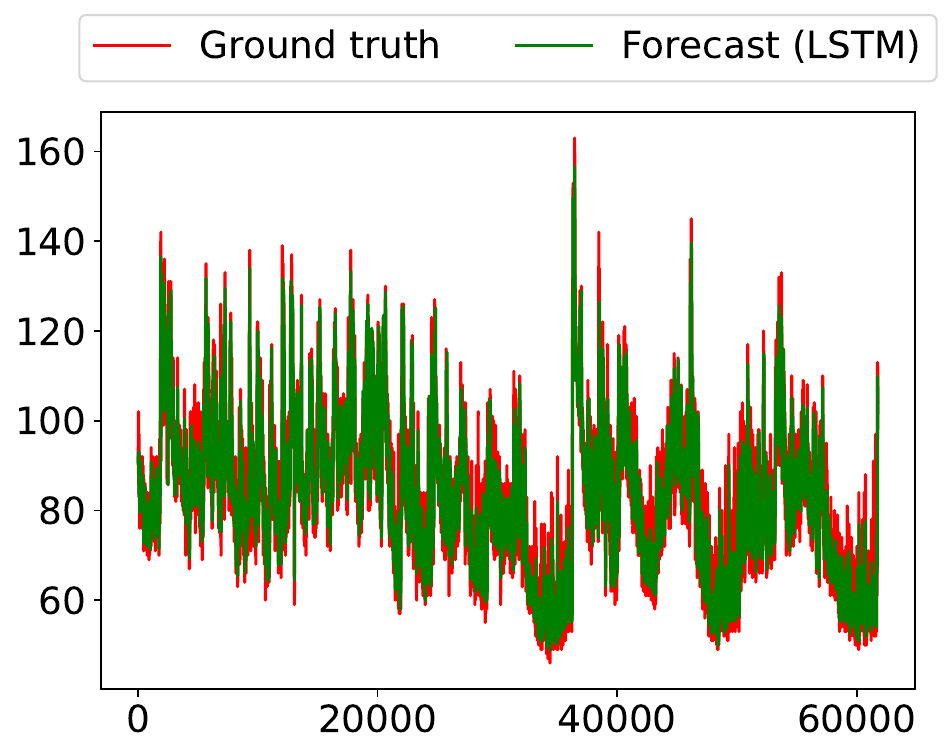}}}
\vspace{-.5em}
\centerline{(a) heart rate}
\end{minipage}\hfill
\begin{minipage}{.495\linewidth}
    \centerline{\mbox{\includegraphics[width=1.0\linewidth,height=\figheight]
{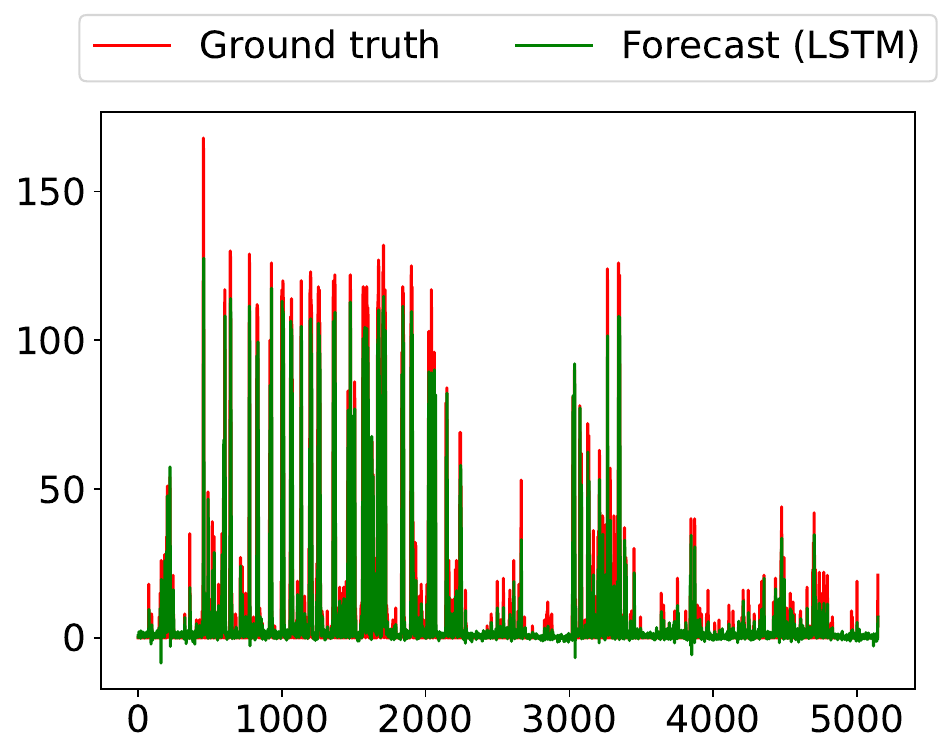}}}
\vspace{-.5em}
\centerline{(b) step count}
\end{minipage}
\vspace{-1em}
\caption{\small Results of time series forecasting on our data ($n=1$).}\label{fig:forecasting: case 4: n=1}
\vspace{-.5em}
\end{figure}

\begin{figure}[t!]%{.75\linewidth}
\vspace{0em}
\begin{minipage}{.495\linewidth}
\centerline{\mbox{\includegraphics[width=1.0\linewidth,height=\figheight]
{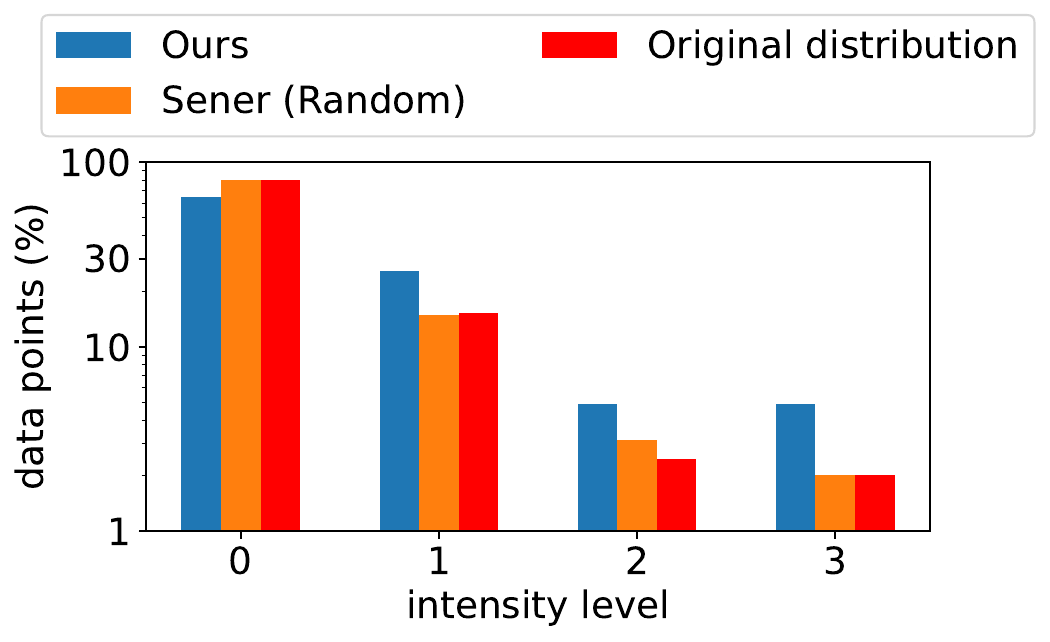}}}
\vspace{-.5em}
\centerline{(a) $3\%$ sampling}
\end{minipage}\hfill
\begin{minipage}{.495\linewidth}
\centerline{\mbox{\includegraphics[width=1.0\linewidth,height=\figheight]
{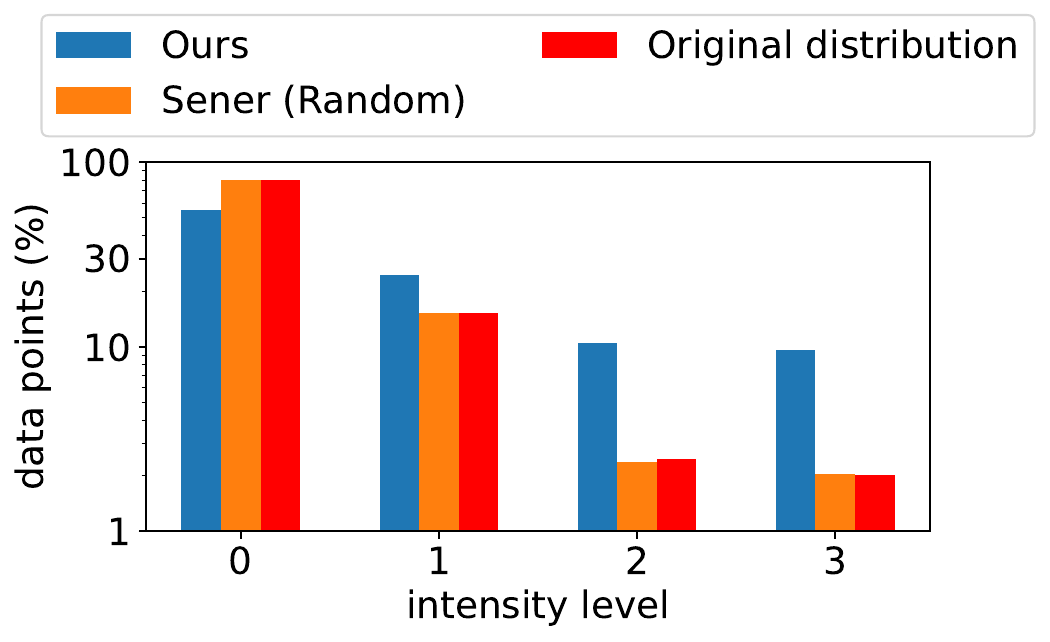}}}
\vspace{-.5em}
\centerline{(b) $12\%$ sampling}
\end{minipage}
\vspace{-1em}
\caption{\small {Distribution of selected samples on our data ($n=1$). } }\label{fig:sample distribution: case 4: n=1}
\end{figure}

\begin{figure}[t!]%{.75\linewidth}
\vspace{-.0em}
\begin{minipage}{.495\linewidth}
\centerline{\mbox{\includegraphics[width=1.0\linewidth,height=\figheight]
{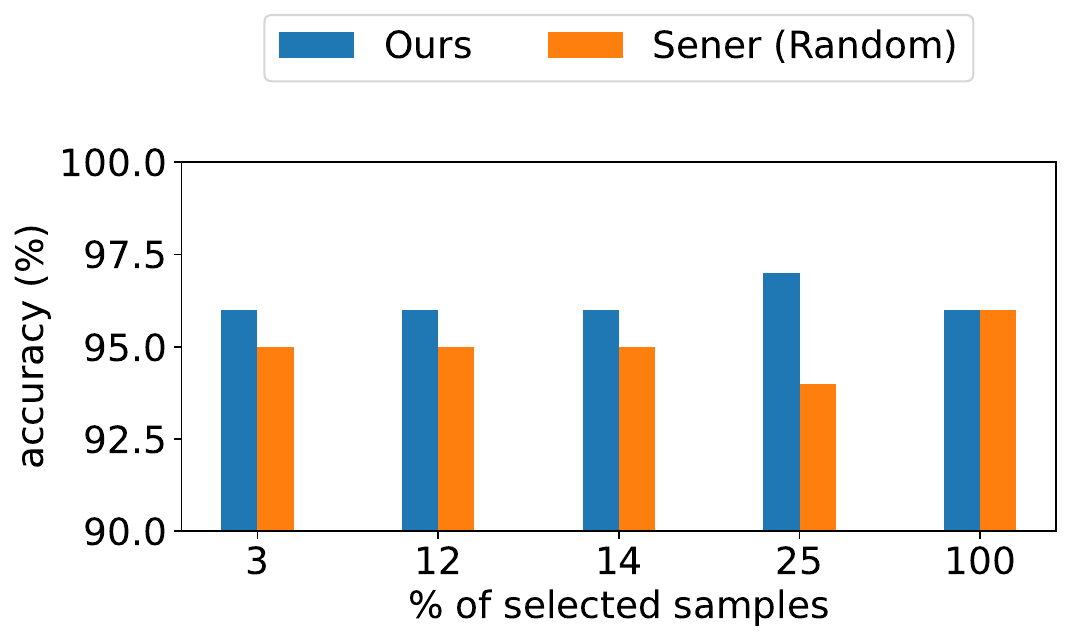}}}
\vspace{-.5em}
\centerline{(a) accuracy}
\end{minipage}\hfill
\begin{minipage}{.495\linewidth}
\centerline{\mbox{\includegraphics[width=1.0\linewidth,height=\figheight]
{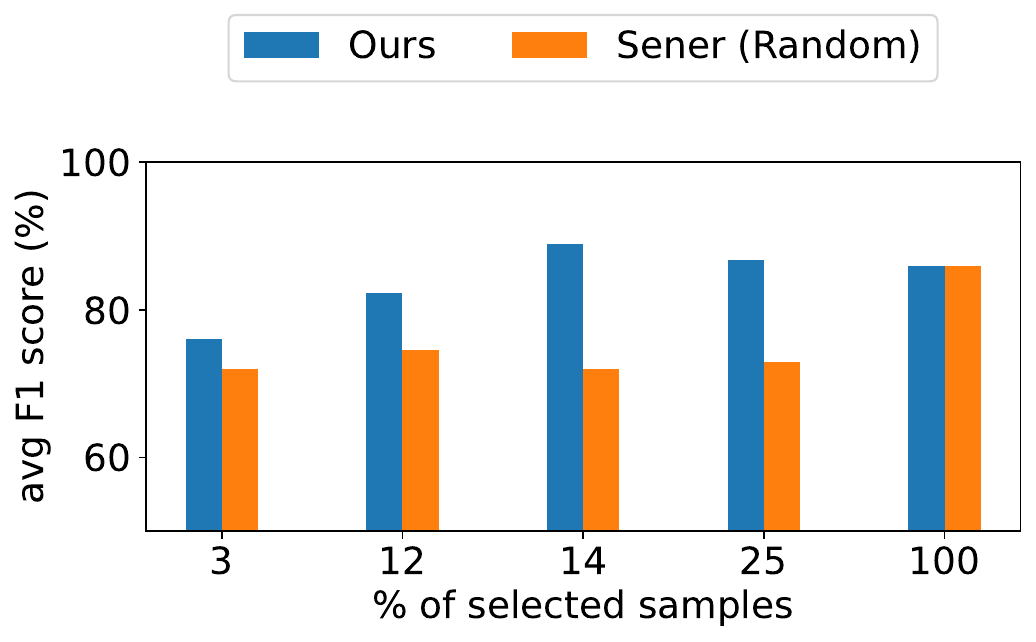}}}
\vspace{-.5em}
\centerline{(b) average F1 score}
\end{minipage}
\vspace{-1em}
\caption{\small {Quality of trained target model on our data ($n=1$).} }\label{fig:target model: case 4: n=1}
\end{figure}

\if\thisismainpaper0
\begin{figure}[t!]%{.75\linewidth}
\vspace{0em}
\begin{minipage}{.495\linewidth}
\centerline{\mbox{\includegraphics[width=1.0\linewidth,height=\figheight]
{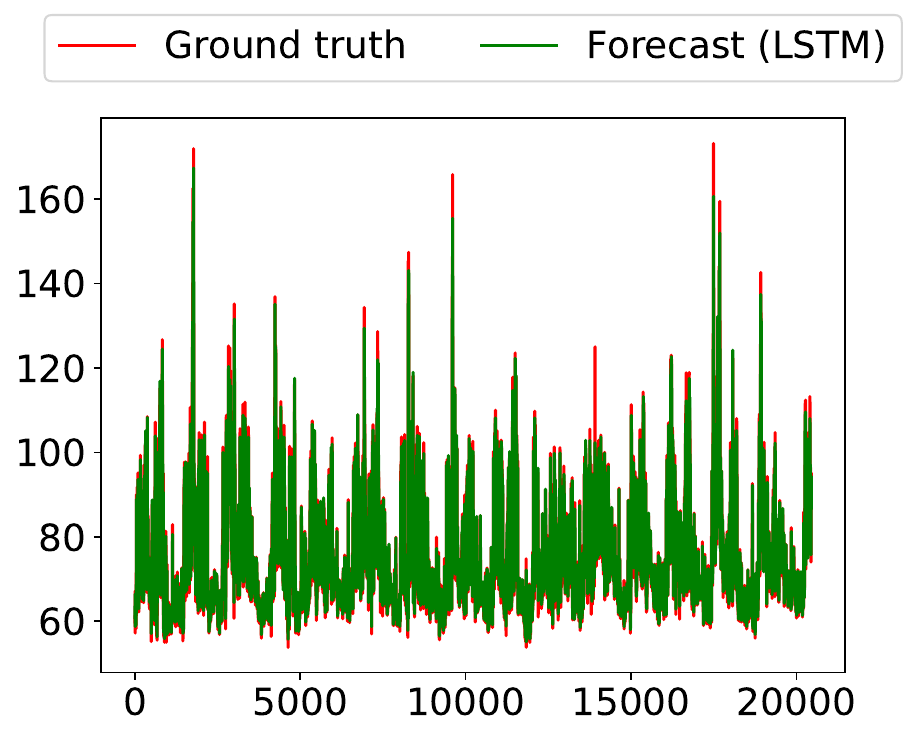}}}
\vspace{-.5em}
\centerline{(a) heart rate}
\end{minipage}\hfill
\begin{minipage}{.495\linewidth}
    \centerline{\mbox{\includegraphics[width=1.0\linewidth,height=\figheight]
{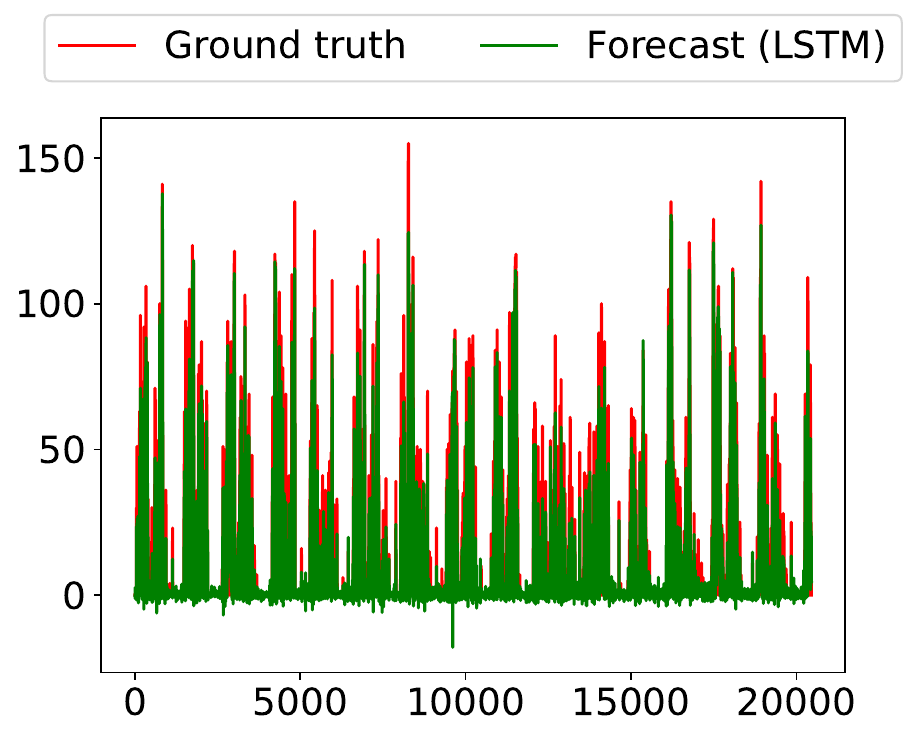}}}
\vspace{-.5em}
\centerline{(b) step count}
\end{minipage}
\vspace{-1em}
\caption{\small Results of time series forecasting on public data ($n=1$).}\label{fig:forecasting: case 1: n=1}
%\vspace{-.5em}
\end{figure}

\begin{figure}[t!]%{.75\linewidth}
\vspace{0em}
\begin{minipage}{.495\linewidth}
\centerline{\mbox{\includegraphics[width=1.0\linewidth,height=\figheight]
{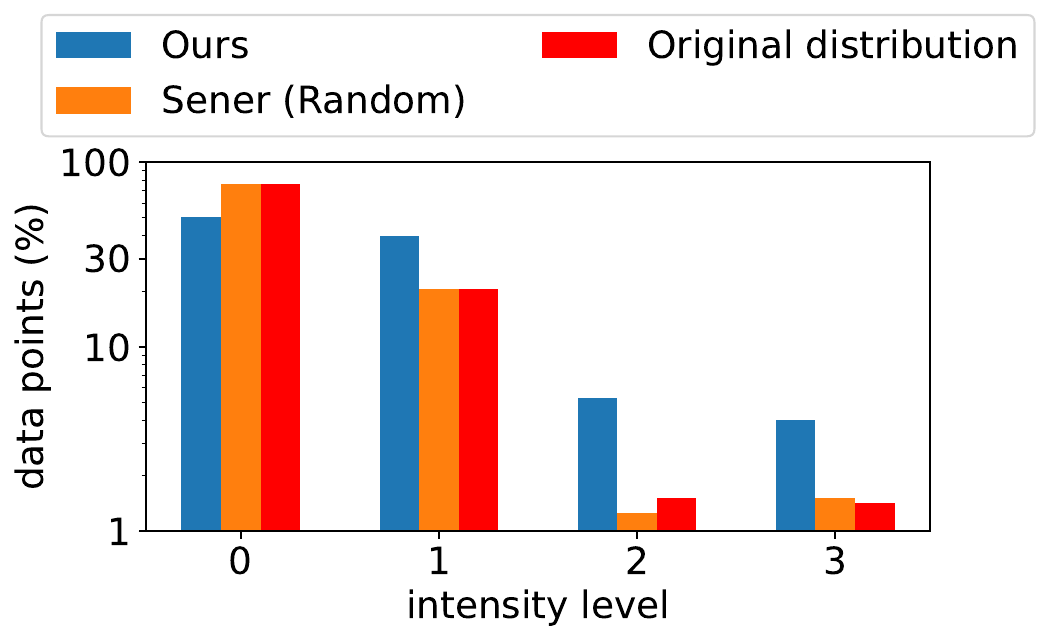}}}
\vspace{-.5em}
\centerline{(a) $6\%$ sampling}
\end{minipage}\hfill
\begin{minipage}{.495\linewidth}
\centerline{\mbox{\includegraphics[width=1.0\linewidth,height=\figheight]
{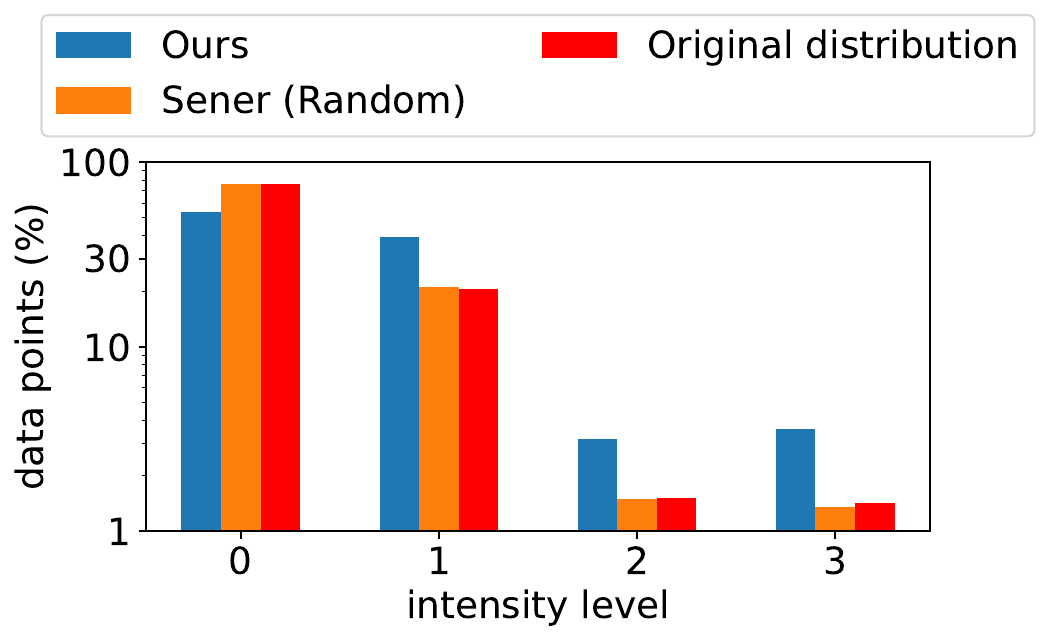}}}
\vspace{-.5em}
\centerline{(b) $12\%$ sampling}
\end{minipage}
\vspace{-1em}
\caption{\small {Distribution of selected samples on public data ($n=1$). %Case~1. 
} }\label{fig:sample distribution: case 1: n=1}
\end{figure}

\begin{figure}[t!]%{.75\linewidth}
\vspace{-.0em}
\begin{minipage}{.495\linewidth}
\centerline{\mbox{\includegraphics[width=1.0\linewidth,height=\figheight]
{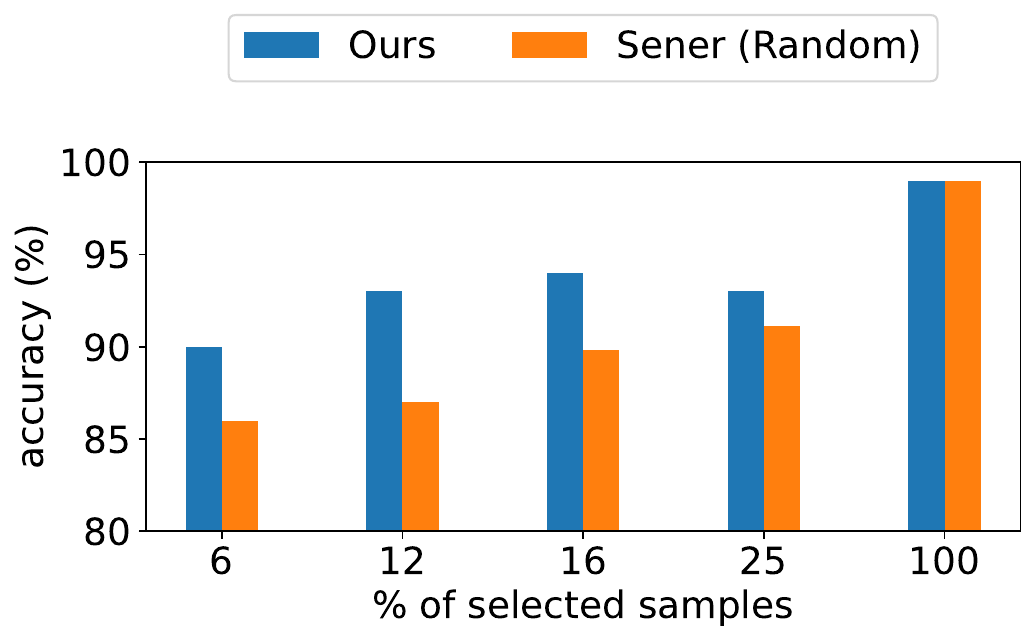}}}
\vspace{-.5em}
\centerline{(a) accuracy}
\end{minipage}\hfill
\begin{minipage}{.495\linewidth}
\centerline{\mbox{\includegraphics[width=1.0\linewidth,height=\figheight]
{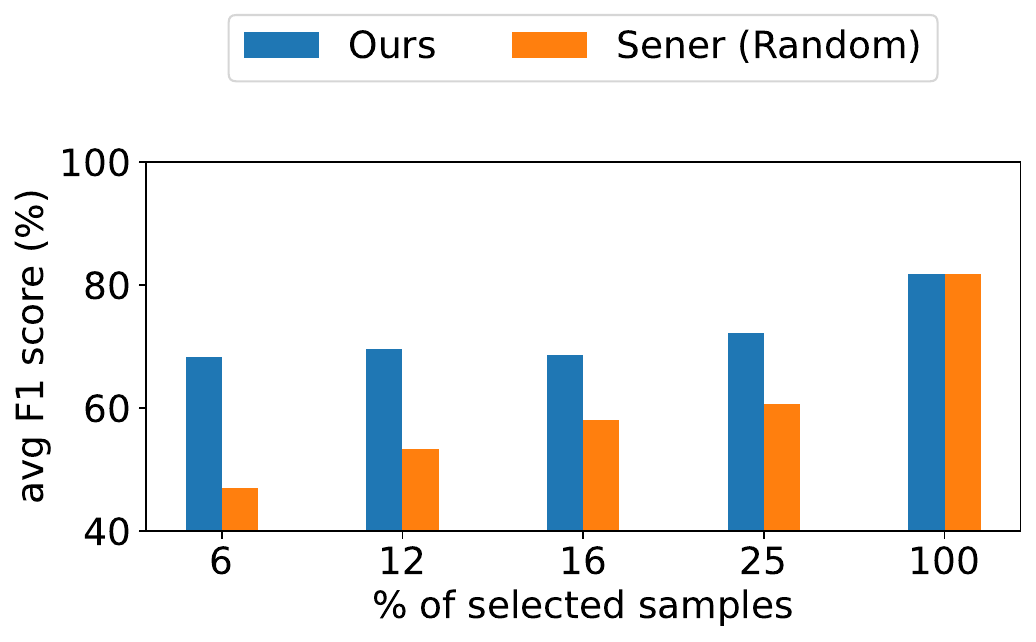}}}
\vspace{-.5em}
\centerline{(b) average F1 score}
\end{minipage}
\vspace{-1em}
\caption{\small {Quality of trained target model on public data ($n=1$). 
%Case~1. 
}}\label{fig:target model: case 1: n=1}
\end{figure}

\fi

\textbf{Impact of prediction window size:}
While the above results are obtained under a window size of 5 minutes ($n=5$), we have verified that the observations remain largely the same under other window sizes, as long as reasonable predictions can be made. A case of special interest is when the window size is 1 minute ($n=1$), in which case Sener's method~\cite{Sener18ICLR} degenerates into random sampling as each window contains only one sampling epoch. In this case, we can predict heart rate and step count with higher accuracy as shown in Fig.~\ref{fig:forecasting: case 4: n=1}: the NRMSE is $0.0706$ for heart rate and $1.8309$ for step count. 
The improved prediction allows our sampling algorithm to more accurately select the representative samples, thus more effectively correcting the data skewness as shown in Fig.~\ref{fig:sample distribution: case 4: n=1}. The more balanced training set then leads to a better target model as shown in Fig.~\ref{fig:target model: case 4: n=1}. Compared to the case of $n=5$ in Fig.~\ref{fig:forecasting: case 4: n=5}--\ref{fig:target model: case 4: n=5}, setting $n=1$ leads to better prediction and skewness correction, but less flexibility in making sampling decisions as the decisions are made one sample at a time, and the final target model qualities turn out to be similar for our dataset. 
% our algorithm can achieve a similar improvement in the target model quality and a similar saving in the data curation cost when $n=1$, except that the algorithm becomes simpler as in \eqref{eq:threshold policy} and the forecasting task becomes easier. 
\if\thisismainpaper1
However, the outcome of this comparison is data-dependent as shown in \cite{Chiu24:report}. Thus, the window size $n$ should be tuned as a hyperparameter. 
\else
For comparison, we also evaluate the case of $n=1$ on the public data  from Section~\ref{subsec:Data-driven Simulation} (Case~1) and show the corresponding results in Fig.~\ref{fig:forecasting: case 1: n=1}--\ref{fig:target model: case 1: n=1}. Compared to the results under $n=5$ in Fig.~\ref{fig:target model: case 1: n=5}, we see that reducing the window size to $n=1$ notably reduces the quality of the target model, meaning that the flexibility of sampling decisions outweighs the accuracy of prediction for this dataset\footnote{Compared to the case of $n=5$, setting $n=1$ for the public data reduces the NRMSE from $0.0855$ to $0.0524$ for heart rate and from $1.9339$ to $1.4295$ for step count.}. Thus, the optimal window size will be data-dependent and should be tuned as a hyperparameter. 
\fi

\emph{Remark:} We observe in both the public dataset and our own data that the raw data exhibit significant skewness in terms of the data distribution across labels. This is a common phenomenon in health monitoring, as some health states are expected to occur more often than others. In this regard, our coreset-based solution has demonstrated the ability to reduce the data skewness during data collection \emph{before} going through the slow and expensive labeling process. %\looseness=-1 %, thus improving the overall quality of the trained model.   
\rev{Meanwhile, our solution requires the sensor measurements to be predictable with reasonable accuracy in a sufficiently small time window, which is typical for continuous health monitoring, while our solution is applicable for active learning on any time series with sufficient predictability.}

\section{Conclusion}\label{sec:Conclusion}

Motivated by the high data collection cost in WBAN, we studied a novel active learning problem of selectively collecting data from remote data sources for  training a classification model without real-time access to labels. Based on a novel approximation error bound, we converted the active learning problem into a predictive coreset construction problem, for which we developed an algorithm with guaranteed performance. Our evaluation based on both public health monitoring data and our own experiment demonstrated the superior capability of our algorithm in reducing the data curation cost without sacrificing the quality of the trained model. 
With the rapid development of artificial intelligence (AI) applications, we envision the proposed solution to play an active role in providing personalized AI models in resource-constrained environments. 
%the personalized healthcare of the future.  
\looseness=-1

%\begin{acks}

%\end{acks}

%%
%% The next two lines define the bibliography style to be used, and
%% the bibliography file.
\bibliographystyle{ACM-Reference-Format}
\bibliography{references.bib}

\if\thisismainpaper1

\else

%%
%% If your work has an appendix, this is the place to put it.
\appendix

\section{Appendix}\label{appendix:A}

\subsection{Implementation of Algorithm~\ref{Alg:predictive coreset}}\label{appendix:Implementation of Algorithm}

Here we will describe a method to solve line~\ref{sampling:4} optimally in $O(n^3 2^n)$ time. As shown in \eqref{eq:optimization for sampling4}, line~\ref{sampling:4} is an ILP with two types of decision variables: the sampling variables $(\alpha_j)_{j\in [n]}$ and the coverage variables $(\beta_{ij})_{i: \hat{\xbf}_i\in Q, j\in [n]}$ ($[n]:=\{1,\ldots,n\}$). We brute-force search all the $2^n$ possible solutions to the sampling variables in increasing order of $\sum_{j=1}^n \alpha_j$ and test the feasibility of each solution with respect to \eqref{eq:optimization for sampling4} until finding a feasible solution, which is guaranteed to be optimal for \eqref{eq:optimization for sampling4}. It thus suffices to show that for a given solution to $(\alpha_j)_{j\in [n]}$, the existence of a solution to $(\beta_{ij})_{i: \hat{\xbf}_i\in Q, j\in [n]}$ that satisfies the constraints of \eqref{eq:optimization for sampling4} can be tested in $O(n^3)$ time. 

\begin{figure}[thb]%{.75\linewidth}
\vspace{-.0em}
\centerline{\mbox{\includegraphics[width=.5\linewidth]{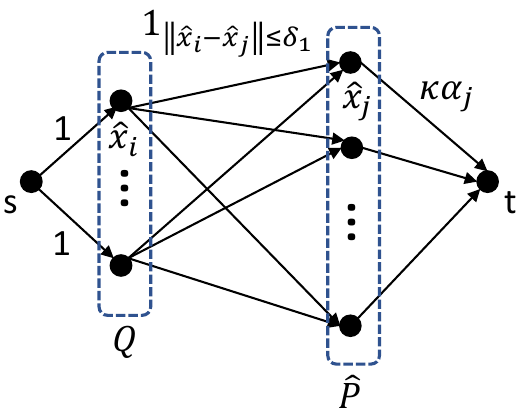}}}
%{figure/Alg III.png}}}
\vspace{-1em}
\caption{\small Auxiliary graph $\mathcal{G}$ for reduction to the max-flow problem. }
\label{fig:max_flow}
\end{figure}

Our idea is to convert the feasibility test to a max-flow problem in an auxiliary graph. As illustrated in Fig.~\ref{fig:max_flow}, we construct an auxiliary graph $\mathcal{G}$ based on $Q, \hat{P}$, and $(\alpha_j)_{j\in [n]}$, which consists of a complete bipartite graph between nodes representing elements of $Q$ and nodes representing elements of $\hat{P}$, a source node $s$ connected to the nodes representing $Q$, and a destination node $t$ connected to the nodes representing $\hat{P}$. All the links are directed with integral capacities as marked on the links. We can test the feasibility of $(\alpha_j)_{j\in [n]}$ by computing the maximum flow from $s$ to $t$ and comparing it with $|Q|$, as stated below. 

\begin{lemma}\label{lem:max flow test}
A given solution to $(\alpha_j)_{j\in [n]}$ is feasible for \eqref{eq:optimization for sampling4} if and only if the maximum $s\to t$ flow in the corresponding auxiliary graph $\mathcal{G}$ as constructed in Fig.~\ref{fig:max_flow} equals $|Q|$. 
\end{lemma}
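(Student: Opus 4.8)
The plan is to reduce the feasibility question to the integrality theorem for maximum flow and exhibit an explicit correspondence between feasible coverage assignments $(\beta_{ij})$ in \eqref{eq:optimization for sampling4} and integral $s\to t$ flows of value $|Q|$ in $\mathcal{G}$. First I would pin down the capacities of $\mathcal{G}$: each edge $s\to \hat{\xbf}_i$ with $\hat{\xbf}_i\in Q$ has capacity $1$; each bipartite edge $\hat{\xbf}_i\to\hat{\xbf}_j$ has capacity $1$ if $\|\hat{\xbf}_i-\hat{\xbf}_j\|\leq\delta_1$ and capacity $0$ (equivalently, is absent) otherwise; and each edge $\hat{\xbf}_j\to t$ has capacity $\kappa\alpha_j$. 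Observe that the cut separating $s$ from the rest of $\mathcal{G}$ has capacity $|Q|$, so the value of any $s\to t$ flow is at most $|Q|$; hence ``the maximum flow equals $|Q|$'' is equivalent to ``there exists an $s\to t$ flow of value $|Q|$''. Since all capacities are integral, the integrality theorem for max flow lets me assume such a flow, when it exists, is integral, and then it assigns a $\{0,1\}$-value on every bipartite edge.

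For the ``only if'' direction, given a feasible $(\beta_{ij})$ I would push $\beta_{ij}$ units along $\hat{\xbf}_i\to\hat{\xbf}_j$, push $\sum_j\beta_{ij}=1$ unit along $s\to\hat{\xbf}_i$, and push $\sum_{i:\hat{\xbf}_i\in Q}\beta_{ij}$ units along $\hat{\xbf}_j\to t$. Constraint \eqref{cons:coverage} gives flow conservation at the $Q$-nodes and saturates all source edges; conservation at the $\hat{P}$-nodes is automatic by construction; \eqref{cons:radius} guarantees flow only traverses edges that are present; and \eqref{cons:weight} guarantees the capacity $\kappa\alpha_j$ of $\hat{\xbf}_j\to t$ is respected. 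The resulting flow has value $|Q|$, so the maximum flow is $|Q|$. For the ``if'' direction, I would take an integral $s\to t$ flow of value $|Q|$; since each of the $|Q|$ source edges has capacity $1$ and the total value is $|Q|$, every source edge is saturated, so by conservation each $Q$-node emits exactly one unit into the bipartite part. Setting $\beta_{ij}$ equal to the flow on $\hat{\xbf}_i\to\hat{\xbf}_j$ then yields $\beta_{ij}\in\{0,1\}$ by integrality, $\sum_j\beta_{ij}=1$ by conservation at $\hat{\xbf}_i$, $\sum_{i:\hat{\xbf}_i\in Q}\beta_{ij}=(\text{flow on }\hat{\xbf}_j\to t)\leq\kappa\alpha_j$ by the capacity bound, and $\beta_{ij}=0$ whenever $\|\hat{\xbf}_i-\hat{\xbf}_j\|>\delta_1$ since no such edge carries flow. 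These are exactly the constraints of \eqref{eq:optimization for sampling4}, so $(\alpha_j)_{j\in[n]}$ is feasible.

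The argument is essentially bookkeeping once the graph is specified correctly; I expect the only subtlety to be making the radius constraint \eqref{cons:radius} mesh cleanly with the ``complete bipartite'' depiction in Fig.~\ref{fig:max_flow}, which I would resolve by assigning capacity $0$ to out-of-range edges so they automatically carry no flow in either direction of the equivalence, avoiding case analysis. The one non-trivial external ingredient is the integrality theorem for maximum flow, which is what licenses reading off $\{0,1\}$-valued $\beta_{ij}$ from an optimal flow; I would state this appeal explicitly.
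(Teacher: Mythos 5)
Your proposal is correct and follows essentially the same argument as the paper: the same auxiliary-graph construction, the same flow assignment from a feasible $(\beta_{ij})$, and the same appeal to the integral-flow (integrality) theorem to read off $\{0,1\}$-valued $\beta_{ij}$ from a maximum flow of value $|Q|$. Your explicit handling of the radius constraint via zero-capacity (absent) bipartite edges matches the paper's intended construction, so no substantive difference remains.
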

\begin{proof}
We will show that the existence of a solution to $(\beta_{ij})_{i: \hat{\xbf}_i\in Q, j\in [n]}$ that together with the given $(\alpha_j)_{j\in [n]}$ satisfies all the constraints of \eqref{eq:optimization for sampling4} is equivalent to the existence of a maximum $s\to t$ flow that equals $|Q|$.

If there exists  $(\beta_{ij})_{i: \hat{\xbf}_i\in Q, j\in [n]}$ that together with $(\alpha_j)_{j\in [n]}$ satisfies all the constraints of \eqref{eq:optimization for sampling4}, then we can construct a feasible $s\to t$ flow in $\mathcal{G}$ by sending a unit flow on the link from $s$ to each $\hat{\xbf}_i\in Q$, a flow of $\beta_{ij}$ on the link from each $\hat{\xbf}_i\in Q$ to each $\hat{\xbf}_j\in \hat{P}$, and a flow of $\sum_{i: \hat{\xbf}_i\in Q}\beta_{ij}$ on the link from each $\hat{\xbf}_j\in \hat{P}$ to $t$. Since the minimum cut between $s$ and $t$ is no larger than $|Q|$, this flow is the maximum flow from $s$ to $t$, whose rate equals $|Q|$.

If there exists a maximum $s\to t$ flow that equals $|Q|$, then by the  \emph{Integral Flow Theorem} \cite{Korte:NetworkFlows}, there must exist a maximum flow from $s$ to $t$ that has an integral flow rate on each link, as all the link capacities in $\mathcal{G}$ are integers. Under this maximum integral flow, each link from $\hat{\xbf}_i\in Q$ to $\hat{\xbf}_j\in \hat{P}$ must carry either $0$ or $1$ unit of flow. If we set each $\beta_{ij}$ to the flow rate on the link from $\hat{\xbf}_i\in Q$ to $\hat{\xbf}_j\in \hat{P}$, then the flow conservation constraint and the link capacity constraint imply that $(\beta_{ij})_{i: \hat{\xbf}_i\in Q, j\in [n]}$  together with the given $(\alpha_j)_{j\in [n]}$ satisfies all the constraints of \eqref{eq:optimization for sampling4}.
\end{proof}

Therefore, the feasibility test is reduced to a computation of the maximum flow in the auxiliary graph, which can be implemented by the Ford-Fulkerson algorithm with time complexity $O(|E_{\mathcal{G}}| f_{\mathcal{G}})$ \cite{Korte:NetworkFlows}, where $|E_{\mathcal{G}}|$ is the number of links in $\mathcal{G}$ and $f_{\mathcal{G}}$ the maximum flow. Since $|E_{\mathcal{G}}|=O(|Q|\cdot |\hat{P}|)=O(n^2)$ and $f_{\mathcal{G}}\leq |Q| = O(n)$, the maximum $s\to t$ flow in the auxiliary graph can be computed in $O(n^3)$ time.

\subsection{Additional Evaluation Results}\label{appendix:Additional Evaluation}

We now investigate the influence of the parameter $\kappa$. Recall that  $\kappa$ represents the maximum number of data points represented by each selected sample in the coreset. Consequently, a smaller value of $\kappa$ necessitates that Algorithm~\ref{Alg:predictive coreset} selects a larger number of samples, which subsequently leads to an elevated F1 score, %as the model benefits from training on an expanded dataset, 
as shown in Fig.~\ref{fig:kappa} based on the public dataset in the scenario of $n=1$ and Case~1. We have repeated this experiment under all the scenarios, and all the results exhibit the same trend as Fig.~\ref{fig:kappa} (omitted).

We have also tested different methods to set the input parameters for Algorithm~\ref{Alg:predictive coreset}. Consider the case of $n=1$ for simplicity. Fig.~\ref{fig:kappa and delta} shows the comparison between two  methods of controlling the sampling ratio: one method varies $\delta_0$ with $\kappa$ set to infinity (`$\kappa=\infty$'), and the other method varies $\kappa$ with $\delta_0$ set to infinity (`$\delta_0=\infty$'). When $\kappa<\infty$ and $\delta_0 = \infty$, Algorithm~\ref{Alg:predictive coreset} is reduced to periodic sampling, where the period is decided by the value of $\kappa$. As shown in the Fig.~\ref{fig:kappa and delta}, `$\kappa=\infty$' outperforms `$\delta_0=\infty$'. This is intuitively because a finite coverage radius will enable Algorithm~\ref{Alg:predictive coreset} to select the samples that are sufficiently distinct from each other, which are potentially more informative for target model training. In general, we may need to set both $\kappa$ and $\delta_i$ ($i=0,1$) to finite values to achieve the optimal cost-quality tradeoff. In this sense, the performance of Algorithm~\ref{Alg:predictive coreset} shown in Section~\ref{sec:Performance Evaluation} is a lower bound of what our solution can achieve, and we leave further optimizations to future work. \looseness=-1

\begin{figure}[t!]%{.75\linewidth}
\vspace{-.5em}
\begin{minipage}{.495\linewidth}
\centerline{\mbox{\includegraphics[width=1.0\linewidth,height=\figheight]
{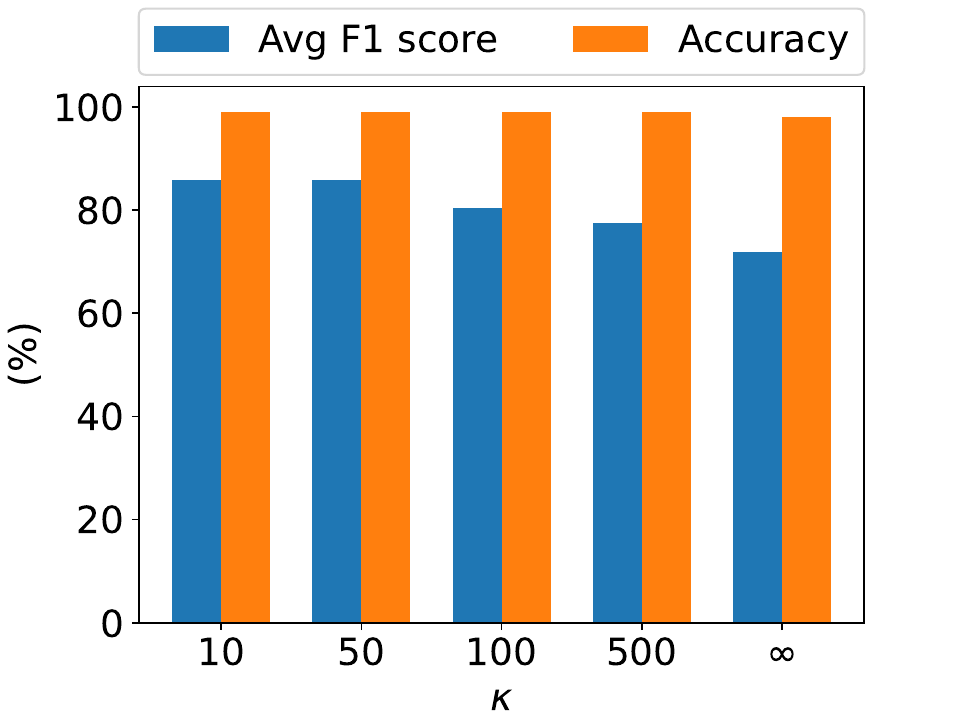}}}
\vspace{-.5em}
\centerline{(a) performance}
\end{minipage}\hfill
\begin{minipage}{.495\linewidth}
\centerline{\mbox{\includegraphics[width=1.0\linewidth,height=\figheight]
{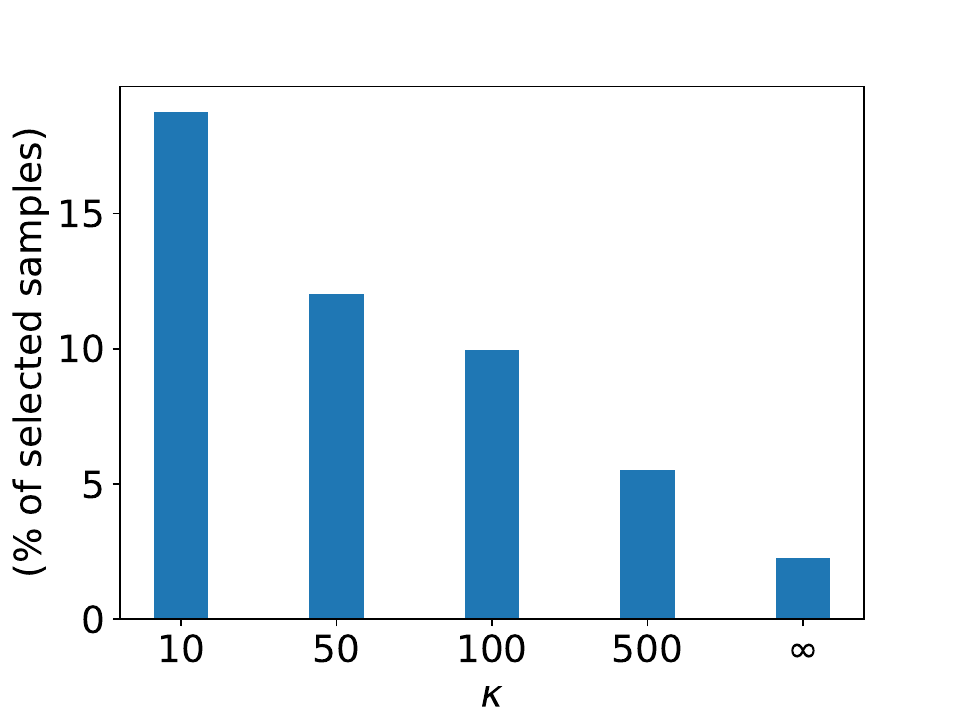}}}
\vspace{-.5em}
\centerline{(b) sample size}
\end{minipage}
\vspace{-1em}
\caption{\small Effect of varying $\kappa$ (public dataset, Case~1, $n=1$, $\delta_0 = 0.4$). }\label{fig:kappa}
\end{figure}

\begin{figure}[t!]%{.75\linewidth}
\vspace{-.5em}
\begin{minipage}{.495\linewidth}
\centerline{\mbox{\includegraphics[width=1.0\linewidth,height=\figheight]
{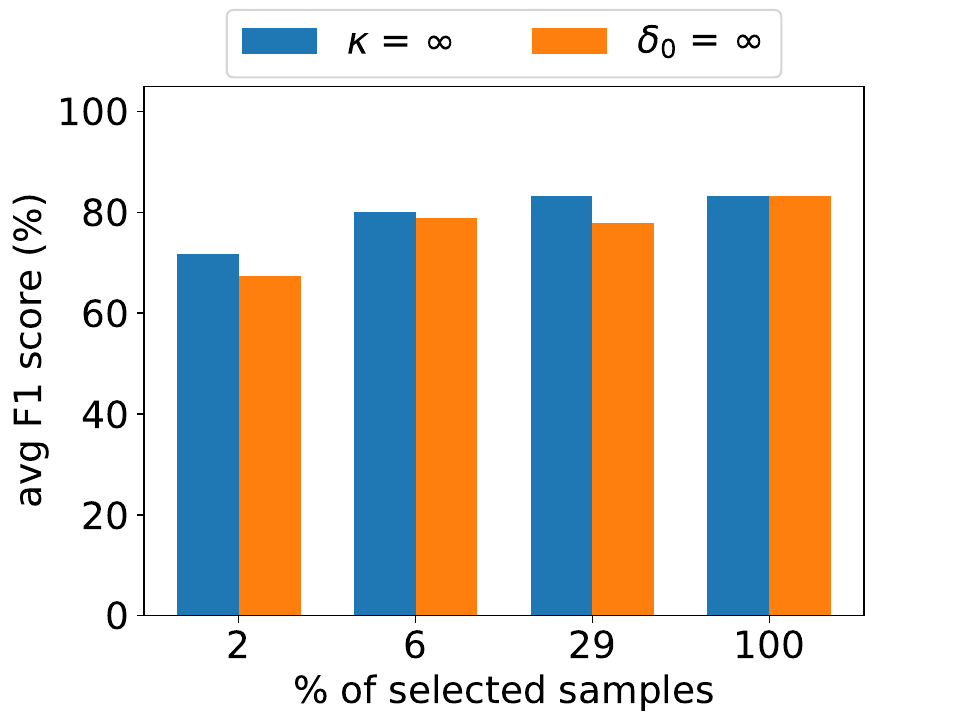}}}
\vspace{-.5em}
\centerline{(a) Case 1}
\end{minipage}\hfill
\begin{minipage}{.495\linewidth}
\centerline{\mbox{\includegraphics[width=1.0\linewidth,height=\figheight]
{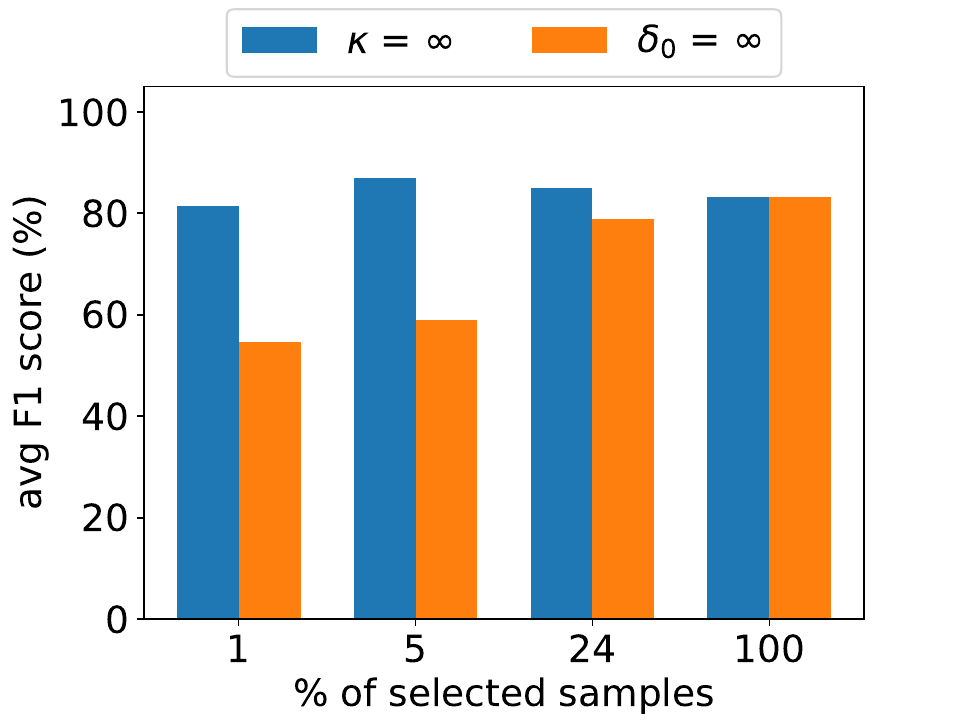}}}
\vspace{-.5em}
\centerline{(b) Case 2}
\end{minipage}
\vspace{-1em}
\caption{\small Tuning the parameters of Algorithm~\ref{Alg:predictive coreset} on public dataset. }\label{fig:kappa and delta}
\end{figure}

\subsection{Supporting Proofs}\label{appendix:Supporting Proofs}

\begin{proof}[Proof of Theorem~\ref{thm:approx error bound}]
%The first half of the proof is similar to the proof of Theorem~\ref{thm:approx error bound - Sener18ICLR} while fixing two critical mistakes. 
Let \[\mbbE_{y_i}[l(\xbf,y_i;\wbf)] := \sum_{c=1}^C \eta_c(\xbf_i)l(\xbf,c;\wbf).\] For any $\xbf_i\in P$, 
\begin{align}
    \mbbE_{y_i}[l(\xbf_i,y_i;\wbf)] &=\sum_{c=1}^C \left(\eta_c(\xbf_i)-\eta_c(\xbf_{j_i})+\eta_c(\xbf_{j_i})\right)l(\xbf_i,c;\wbf) \nonumber\\
    &\hspace{-7em}\leq \mbbE_{y_{j_i}}[l(\xbf_i,y_{j_i};\wbf)] + \sum_{c=1}^C |\eta_c(\xbf_i)-\eta_c(\xbf_{j_i})|l(\xbf_i,c;\wbf) \label{eq:proof - triangle}\\
    &\leq \mbbE_{y_{j_i}}[l(\xbf_i,y_{j_i};\wbf)] + \delta\lambda_\eta LC,\label{eq:proof - Lipschitz eta}
\end{align}
where \eqref{eq:proof - triangle} is due to the triangle inequality, and \eqref{eq:proof - Lipschitz eta} is due to the assumptions that $\eta_c(\cdot)$ is $\lambda_\eta$-Lipschitz, $\|\xbf_i-\xbf_{j_i}\|\leq \delta$, and $l(\xbf,y;\wbf)$ is upper-bounded by $L$. 
Moreover,
\begin{align}
    \mbbE_{y_{j_i}}[l(\xbf_i,y_{j_i};\wbf)] &= \sum_{c=1}^C \eta_c(\xbf_{j_i}) \left(l(\xbf_i,c;\wbf) - l(\xbf_{j_i},c;\wbf)\right) \nonumber\\
    & \hspace{6em}+ \mbbE_{y_{j_i}}[l(\xbf_{j_i},y_{j_i};\wbf)] \nonumber\\
    &\leq \delta \lambda_l + \mbbE_{y_{j_i}}[l(\xbf_{j_i},y_{j_i};\wbf)],\label{eq:proof - Lipschita l}
\end{align}
where \eqref{eq:proof - Lipschita l} is due to the assumptions that $l(\cdot,c;\wbf)$ is $\lambda_l$-Lipschitz and $\|\xbf_i-\xbf_{j_i}\|\leq \delta$. Plugging \eqref{eq:proof - Lipschita l} into \eqref{eq:proof - Lipschitz eta} yields
\begin{align}
    \mbbE_{y_i}[l(\xbf_i,y_i;\wbf)] &\leq \delta(\lambda_l + \lambda_\eta LC) +  \mbbE_{y_{j_i}}[l(\xbf_{j_i},y_{j_i};\wbf)]. \label{eq:proof - E[li]}
\end{align}
We will utilize the following form of Hoeffding's inequality~\cite{Hoeffding63JASA}: If $X_1,\ldots,X_n$ are independent random variables with $X_i\in [a_i,b_i]$ almost surely, then for any $t>0$,
\begin{align}\label{eq:Hoeffding}
    \Pr\{|\sum_{i=1}^n X_i - \sum_{i=1}^n \mbbE[X_i]|\geq t\}\leq 2\exp\left(-{2t^2\over \sum_{i=1}^n (b_i-a_i)^2} \right).
\end{align}
Since given $P$, $\{y_i\}_{i=1}^n$ are (conditionally) independent, the losses $\{l(\xbf_i,y_i;\wbf)\}_{i=1}^n$ are independent random variables within $[0,L]$. Applying Hoeffding's inequality, we have
\begin{align}
    \Pr\{\left|{1\over n}\sum_{i=1}^n l(\xbf_i,y_i;\wbf)-{1\over n}\sum_{i=1}^n \mbbE_{y_i}[l(\xbf_i,y_i;\wbf)] \right|\geq {t\over n}\} \nonumber\\
    \leq 2\exp\left(-{2 t^2\over n L^2} \right).
\end{align}
Setting $2\exp\left(-{2 t^2/ (n L^2)} \right)=\gamma$, we have $t/n=\sqrt{L^2\log(2/\gamma)/(2n)}$, i.e., with a probability of at least $1-\gamma$,
\begin{align}
    &{1\over n}\sum_{i=1}^n l(\xbf_i,y_i;\wbf) \leq {1\over n}\sum_{i=1}^n \mbbE_{y_i}[l(\xbf_i,y_i;\wbf)] + \sqrt{{L^2\log(2/\gamma)\over 2n}} \nonumber\\
    & \leq \delta(\lambda_l \hspace{-.25em}+\hspace{-.25em} \lambda_\eta LC) \hspace{-.25em}+\hspace{-.25em} \sqrt{{L^2\log(2/\gamma)\over 2n}} +\hspace{-.5em} \sum_{j: \xbf_j\in S}\hspace{-.25em}{u_j\over n}\mbbE_{y_j}[l(\xbf_j,y_j; \wbf)], \label{eq:proof - avg training loss}
\end{align}
where \eqref{eq:proof - avg training loss} is by plugging in \eqref{eq:proof - E[li]} and noting that 
\begin{align}
 {1\over n}\sum_{i=1}^n \mbbE_{y_{j_i}}[l(\xbf_{j_i},y_{j_i};\wbf)] = \sum_{j: \xbf_j\in S}{u_j\over n}\mbbE_{y_j}[l(\xbf_j,y_j; \wbf)].    \nonumber
\end{align}
Since given $S$, 
\(\{y_j:\: \xbf_j\in S\}\) are (conditionally) independent, the weighted losses 
\(\{{(u_j/ n)}l(\xbf_j,y_j;\wbf):\: \xbf_j\in S\}\) 
are independent random variables with \({(u_j/ n)}l(\xbf_j,y_j;\wbf)\in [0,u_j L/n].\)
Applying Hoeffding's inequality again,  we have
\begin{align}
&    \Pr\left\{\hspace{-.05em} \left|\hspace{-.0em}\sum_{j: \xbf_j\in S} \hspace{-.5em}{u_j\over n}l(\xbf_j,y_j;\wbf) \hspace{-.05em}-\hspace{-.5em} \sum_{j: \xbf_j\in S}\hspace{-.5em}{u_j\over n}\mbbE_{y_j}[l(\xbf_j,y_j;\wbf)] \right|\hspace{-.25em}\geq\hspace{-.25em} t\right\} \nonumber\\
 &\hspace{9em}   \leq 2\exp\left(-{2 t^2 n^2\over L^2 \sum_{j: \xbf_j\in S}u_j^2} \right). \label{eq:proof - apply Hoeffding}
\end{align}
Since setting $t= \sqrt{L^2 \log(2/\gamma)(\sum_{j:\xbf_j\in S}u_j^2)/(2 n^2)}$ makes the right-hand side of \eqref{eq:proof - apply Hoeffding} equal to $\gamma$, we have that with a probability of at least $1-\gamma$, 
\begin{align}
    \sum_{j: \xbf_j\in S} {u_j\over n}\mbbE_{y_j}[l(\xbf_j,y_j;\wbf)] &\leq \sum_{j: \xbf_j\in S} {u_j\over n}l(\xbf_j,y_j;\wbf) \nonumber\\
    &+ \sqrt{{L^2 \log(2/\gamma)\over 2 n^2} \sum_{j:\xbf_j\in S}u_j^2}. \label{eq:proof - mean coreset loss}
\end{align}
Combining \eqref{eq:proof - mean coreset loss} with \eqref{eq:proof - avg training loss} yields the desired result. 
Note that we have assumed the labels for $P$ and the labels for $S$ to be conditionally independent (for given $P$ and $S$), i.e., ${1\over n}\sum_{i=1}^n l(\xbf_i,y_i; \wbf)$ is the (random) loss over $P$ based on a set of possible labels for $P$, and $\sum_{j: \xbf_j\in S} {u_j\over n}l(\xbf_j,y_j;\wbf)$ is the (random) loss over $S$ based on a set of independently generated labels for $S$. 
\end{proof}

\begin{proof}[Proof of Theorem~\ref{thm:delta-cover guarantee}]
The bound $\|\tilde{\ubf}\|\leq {\kappa\over \sqrt{|S|}}$ is enforced by design. Algorithm~\ref{Alg:predictive coreset} ensures that the number of candidate samples %(including both the initial samples and the candidate samples in all the windows up to $T$) 
covered by each $\xbf_j\in S$ satisfies 
$u_j \leq \kappa$. Thus, 
\begin{align}
    \|\tilde{\ubf}\| = \sqrt{\sum_{j: \xbf_j\in S}\left({u_j\over nT+s_0}\right)^2} \leq {\kappa \sqrt{|S|}\over nT+s_0} \leq {\kappa \over \sqrt{|S|}},
\end{align}
where the last inequality is because $|S|\leq nT+s_0$. 

We now prove the second statement. 
As Algorithm~\ref{Alg:predictive coreset} selects the samples in $S^{(1)}$ based on the existing samples in $S^{(0)}$ and the predicted samples in $\hat{P}$, all the subsequent probabilities denote conditional probabilities under the given $S^{(0)}$ and $\hat{P}$, where the conditions are omitted for simplicity. Let $[n]:=\{1,\ldots,n\}$. What we need to show is that
\begin{align}
    &\Pr\{\forall \xbf_i\in P,  \|\xbf_i-\xbf_{j_i}\| \leq \delta\} \nonumber\\
    &= \prod_{i\in [n]: \xbf_{j_i}\in S^{(0)}} \Pr\{\|\xbf_i-\xbf_{j_i}\| \leq \delta\} \nonumber\\
& \hspace{1.5em}    \cdot \prod_{j: \xbf_j\in S^{(1)}}\Pr\bigg\{\bigcap_{i\in [n]: j_i=j}\big(\|\xbf_i-\xbf_j\| \leq \delta \big)\bigg\} \label{eq:proof of delta-cover - expansion}\\
    &    \geq 1-\epsilon, \label{eq:proof of delta-cover 1}
\end{align}
where $\xbf_{j_i}$ is the point in $S$ that Algorithm~\ref{Alg:predictive coreset} uses to cover $\hat{\xbf}_i$. The decomposition in \eqref{eq:proof of delta-cover - expansion} is due to the assumption~\eqref{assumption:prediction error}, which implies that $\xbf_1,\ldots,\xbf_n$ are (conditionally) independent given $\hat{P}$. 
To prove \eqref{eq:proof of delta-cover 1}, it suffices to show that
\begin{align}
&    \Pr\{\|\xbf_i-\xbf_{j_i}\| \leq \delta \}\geq (1-\epsilon)^{1/n},~\forall i\in I_0, \label{eq:proof of delta-cover main - S0} \\
&    \Pr\bigg\{\bigcap_{i\in I_j}\big(\|\xbf_i-\xbf_j\| \leq \delta \big)\bigg\} \geq (1-\epsilon)^{|I_j|/n},~\forall j: \xbf_j\in S^{(1)}, \label{eq:proof of delta-cover main - S1}
\end{align}
where $I_j:=\{i\in [n]: j_i=j\}$ ($\forall j: \xbf_j\in S^{(1)}$) and $I_0 :=\{i\in [n]: \xbf_{j_i}\in S^{(0)}\}$. This is because \eqref{eq:proof of delta-cover main - S0} implies that the first product in \eqref{eq:proof of delta-cover - expansion} $\geq (1-\epsilon)^{|I_0|/n}$, \eqref{eq:proof of delta-cover main - S1} implies that the second product in \eqref{eq:proof of delta-cover - expansion} $\geq (1-\epsilon)^{(\sum_{j:\xbf_j\in S^{(1)}}|I_j|)/n}$, and $|I_0| + \sum_{j:\xbf_j\in S^{(1)}}|I_j| = n$. 
We separately consider the cases of $\xbf_{j_i}\in S^{(0)}$ and $\xbf_{j_i}\in S^{(1)}$. 
Let $\xibf_i:= \xbf_i-\hat{\xbf}_i$ denote the prediction error for the $i$-th sample in the prediction window. 
If $\xbf_{j_i}\in S^{(0)}$, then by the triangle inequality and the design of Algorithm~\ref{Alg:predictive coreset}, we have $\|\xbf_i-\xbf_{j_i}\|\leq \|\xibf_i\| + \|\hat{\xbf}_i-\xbf_{j_i}\| \leq \|\xibf_i\| + \delta_0$, and hence
\begin{align}
    \Pr\{ \|\xbf_i-\xbf_{j_i}\| \leq \delta \} \geq 
    \Pr\{\|\xibf_i\| \leq \delta - \delta_0\}. \label{eq:proof of delta-cover 2}
\end{align}
Similarly, if $\xbf_{j_i}\in S^{(1)}$, then by the triangle inequality and the design of Algorithm~\ref{Alg:predictive coreset}, we have $\|\xbf_i-\xbf_{j_i} \|\leq \|\xibf_i-\xibf_{j_i}\|+\|\hat{\xbf}_i-\hat{\xbf}_{j_i}\|\leq \|\xibf_i-\xibf_{j_i}\| + \delta_1$, and hence $\forall j$ with $\xbf_j\in S^{(1)}$,
\begin{align}
   &\Pr \hspace{-.25em} \bigg\{ \hspace{-.25em} \bigcap_{i\in I_j} \hspace{-.25em} \big(\|\xbf_i-\xbf_j\| \leq \delta\big) \hspace{-.25em} \bigg\} \geq \Pr \hspace{-.25em} \bigg\{ \hspace{-.25em} \bigcap_{i\in I_j} \hspace{-.25em} \big( \|\xibf_i-\xibf_j\| \leq \delta-\delta_1 \big) \hspace{-.25em} \bigg\}. \label{eq:proof of delta-cover 3}
\end{align}

We will leverage the fact that for any $\xibf\sim \mathcal{N}(\bm{0}, \sigma^2\bm{I})\in \mathbb{R}^d$, 
\begin{align}
    \Pr\{\|\xibf\|\leq r\} = F\left({r^2\over \sigma^2}; d\right), \label{eq:chi-squared}
\end{align}
where $F(\cdot;d)$ is the CDF of $\chi^2(d)$, the chi-squared distribution with $d$ degrees of freedom. This is because 
\begin{align}
  \Pr\{\|\xibf\|\leq r\} &= \Pr\{\|\xibf\|^2 = \sum_{i=1}^d \xi_i^2\leq r^2\} \nonumber\\
&   = \Pr\{\sum_{i=1}^d \left( {\xi_i\over \sigma}\right)^2\leq {r^2\over \sigma^2}\} = F\left({r^2\over \sigma^2}; d\right),
\end{align}
where we have used the fact that $\sum_{i=1}^d \left( {\xi_i\over \sigma}\right)^2$ is distributed by $\chi^2(d)$ since $\{{\xi_i/ \sigma}\}_{i=1}^d$ are i.i.d. standard Gaussian variables. 

Applying \eqref{eq:chi-squared} to \eqref{eq:proof of delta-cover 2} yields that $\forall i\in I_0$ (i.e., $\xbf_{j_i}\in S^{(0)}$),
\begin{align}
    \Pr\{ \|\xbf_i-\xbf_{j_i}\| \leq \delta \}  &\geq F\left({(\delta- \delta_0)^2\over \sigma_n^2}; d\right) \nonumber\\
    &= (1-\epsilon)^{1/n} \label{eq:proof of delta-cover sub1}
\end{align}
by plugging in the definition of $\delta_0$ in \eqref{eq:delta_0}. 
Moreover, 
\begin{align}
    &\Pr \hspace{-.25em} \bigg\{ \hspace{-.25em} \bigcap_{i\in I_j} \hspace{-.25em} \big( \|\xibf_i-\xibf_j\| \leq \delta-\delta_1 \big) \hspace{-.25em} \bigg\} \nonumber\\
    &=\mbbE_{\xibf_j} \left[\Pr \hspace{-.25em} \bigg\{ \hspace{-.25em} \bigcap_{i\in I_j} \hspace{-.25em} \big( \|\xibf_i-\xibf_j\| \leq \delta-\delta_1 \big) \Big|\xibf_j \hspace{-.25em} \bigg\} \right] \nonumber\\
    &\geq \mbbE_{\xibf_j} \left[\bigg(\Pr\Big\{\|\xibf_i-\xibf_j\|\leq \delta-\delta_1 \Big|\xibf_j \Big\} \bigg)^{|I_j|} \right] \label{eq:proof of delta-cover - iid} \\
    &\geq \left(\mbbE_{\xibf_j} \Big[\Pr\big\{\|\xibf_i-\xibf_j\|\leq \delta-\delta_1\Big| \xibf_j \big\} \Big] \right)^{|I_j|} \label{eq:proof of delta-cover - Jensen} \\
    &= \left(\Pr\big\{\|\xibf_i-\xibf_j\|\leq \delta-\delta_1 \big\} \right)^{|I_j|} \nonumber\\
    &=\left(F\Big({(\delta-\delta_1)^2 \over 2\sigma_n^2}; d\Big) \right)^{|I_j|} \label{eq:proof of delta-cover - chi-squared} \\
    &= (1-\epsilon)^{|I_j|/n}, \label{eq:proof of delta-cover - delta1}
\end{align}
where \eqref{eq:proof of delta-cover - iid}, defined for an arbitrary $i\in I_j\setminus \{j\}$, is because $\{\|\xibf_i-\xibf_j\|\}_{i\in I_j\setminus \{j\}}$ are conditionally i.i.d. given $\xibf_j$ (the ``$\geq$'' is due to replacing $|I_j\setminus \{j\}|$ by $|I_j|$), \eqref{eq:proof of delta-cover - Jensen} is by applying Jensen's inequality to the convex function $(\cdot)^{|I_j|}$, \eqref{eq:proof of delta-cover - chi-squared} is by applying \eqref{eq:chi-squared} to $\Pr\big\{\|\xibf_i-\xibf_j\|\leq \delta-\delta_1 \big\}$ (note that $\xibf_i-\xibf_j\sim\mathcal{N}(\bm{0}, 2\sigma_n^2\bm{I})$ for $i\neq j$), and \eqref{eq:proof of delta-cover - delta1} is by plugging in the definition of $\delta_1$ in \eqref{eq:delta_1}. 
As \eqref{eq:proof of delta-cover sub1} proves \eqref{eq:proof of delta-cover main - S0} and
\eqref{eq:proof of delta-cover 3} together with \eqref{eq:proof of delta-cover - delta1} proves \eqref{eq:proof of delta-cover main - S1}, we have proved that
\[\Pr\{\forall \xbf_i\in P,  \|\xbf_i-\xbf_{j_i}\| \leq \delta\} \geq 1-\epsilon,\]
which completes the proof.
\end{proof}

\fi

\end{document}